\theoremstyle{plain}
\newtheorem{theorem}{Theorem}[section]
\newtheorem{lemma}[theorem]{Lemma}
\theoremstyle{definition}
\newtheorem{definition}[theorem]{Definition}
\newtheorem{assumption}[theorem]{Assumption}
\theoremstyle{remark}
\newtheorem{claim}[theorem]{Claim}
\newcommand{\hDelta}{{\hat{\Delta}}}
\newcommand{\kevin}[1]{{\color{magenta} KGJ: #1}}
\newcommand{\algoname}{\texttt{Active Task Relevance Sampling}}
\newenvironment{proofsketch}{\textit{Proof sketch:}}{\qed \par}
\def\Ebb{\mathbb{E}}
\def\Rbb{\mathbb{R}}
\def\R{\Rbb}
\def\*{\star}
\DeclareMathOperator*{\argmin}{arg\,min}
\newcommand{\E}{\Ebb}
\newcommand{\field}[1]{\mathbb{#1}}
\newcommand{\fR}{\field{R}}
\newcommand{\calE}{{\mathcal{E}}}
\newcommand{\calX}{{\mathcal{X}}}
\newcommand{\calZ}{{\mathcal{Z}}}
\newcommand{\calY}{{\mathcal{Y}}}
\newcommand{\calN}{{\mathcal{N}}}
\newcommand{\prob}{\text{Prob}}
\newcommand{\order}{\ensuremath{\mathcal{O}}}
\newcommand{\otil}{\ensuremath{\widetilde{\mathcal{O}}}}
\newcommand{\one}{\boldsymbol{1}}
\newcommand{\hB}{\hat{B}}
\newcommand{\hW}{\hat{W}}
\newcommand{\hw}{\hat{w}}
\newcommand{\hSigma}{\hat{\Sigma}}
\icmltitlerunning{
Active Multi-Task Representation Learning
}
\begin{document}

\twocolumn[
\icmltitle{
% Active Representation Learning: Theory and Experiment \rev{too big?}
Active Multi-Task Representation Learning
}

% It is OKAY to include author information, even for blind
% submissions: the style file will automatically remove it for you
% unless you've provided the [accepted] option to the icml2022
% package.

% List of affiliations: The first argument should be a (short)
% identifier you will use later to specify author affiliations
% Academic affiliations should list Department, University, City, Region, Country
% Industry affiliations should list Company, City, Region, Country

% You can specify symbols, otherwise they are numbered in order.
% Ideally, you should not use this facility. Affiliations will be numbered
% in order of appearance and this is the preferred way.
\icmlsetsymbol{equal}{*}

\begin{icmlauthorlist}
\icmlauthor{Yifang Chen}{xx}
\icmlauthor{Simon S. Du}{xx}
\icmlauthor{Kevin Jamieson}{xx}
\end{icmlauthorlist}

\icmlaffiliation{xx}{Paul G. Allen School of Computer Science \& Engineering,
University of Washington}

\icmlcorrespondingauthor{Yifang Chen}{yifangc@cs.washington.edu}

% You may provide any keywords that you
% find helpful for describing your paper; these are used to populate
% the "keywords" metadata in the PDF but will not be shown in the document
\icmlkeywords{Machine Learning, ICML}

\vskip 0.3in
]

% this must go after the closing bracket ] following \twocolumn[ ...

% This command actually creates the footnote in the first column
% listing the affiliations and the copyright notice.
% The command takes one argument, which is text to display at the start of the footnote.
% The \icmlEqualContribution command is standard text for equal contribution.
% Remove it (just {}) if you do not need this facility.

\printAffiliationsAndNotice{}  % leave blank if no need to mention equal contribution
%\printAffiliationsAndNotice{\icmlEqualContribution} % otherwise use the standard text.
\begin{abstract}
\begin{comment}
Pretraining on multi-tasks to learn some shared representation has been universally applied in CV and NLP tasks in order to leverage the power of big data and overcome the scarcity of target samples. But large-scale pretraining is usually computationally expensive and not affordable for majority.
%
We propose the first active representation learning algorithm that conduct target-task-aware sampling on source tasks, which achieves the same accuracy as non-adaptive uniform sampling, with less amount of source sample complexity.
%
To be specific, we first propose a provable adaptive learning algorithm on the realizable linear model, which simultaneously learns the underlying model for target tasks along with the connection between target and source tasks. Compared to the uniform sampling, this algorithm saves up to $\frac{1}{\text{number of source tasks}}$ fraction of sample complexity in the setting that only a subset of source tasks are informative for learning the target task.
%
Moreover, beyond this realizable linear model, we verify our algorithm on corruption MNIST dataset (which is not realizable) with both linear model and convnet model assumptions. 
Our algorithm gives better performance than the non-adaptive one on both models.
% Our algorithm has $1\%$ better average performance then the nonadaptive one on linear model and $0.68\%$ better on cnn model. 
We also verify that the sample number will concentrate on more target-related source tasks which aligns with our target-aware sampling idea.
\end{comment}

To leverage the power of big data from source tasks and overcome the scarcity of the target task samples, representation learning based on multi-task pretraining has become a standard approach in many applications.
However, up until now, choosing which source tasks to include in the multi-task learning has been more art than science.
In this paper, we give the first formal study on resource task sampling by leveraging the techniques from active learning.
% \simon{should / how we claim ``first" here.}
% However, large-scale pretraining is often computationally expensive and not affordable for small organizations.
% When there is only one target task, most source tasks can be irrelevant, and we can actively sample a subset of source data from the most relevant source tasks to save computes.
% We instantiate this idea by proposing the first active learning algorithm for multi-task representation learning.
We propose an algorithm that iteratively estimates the relevance of each source task to the target task and samples from each source task based on the estimated relevance.
Theoretically, we show that for the linear representation class, to achieve the same error rate, our algorithm can save up to a \emph{number of source tasks} factor in the source task sample complexity, compared with the naive uniform sampling from all source tasks.
We also provide experiments on real-world computer vision datasets to illustrate the effectiveness of our proposed method on both linear and convolutional neural network representation classes. 
We believe our paper serves as an important initial step to bring techniques from active learning to representation learning.

% \yf{It seems to me the precise way to describe our setting is few-shots learning via multi-task representation (or something like transfer learning) instead of multi-task representation learning. Because the latter one usually focus on learn the representation for various target tasks. }
\end{abstract}
\section{Introduction}

Much of the success of deep learning is due to its ability to efficiently learn a map from high-dimensional, highly-structured input like natural images into a dense, relatively low-dimensional representation that captures the semantic information of the input.
Multi-task learning leverages the observation that similar tasks may share a common representation to train a single representation to overcome a scarcity of data for any one task. 
In particular, given only a small amount of data for a target task, but copious amounts of data from source tasks, the source tasks can be used to learn a high-quality low-dimensional representation, and the target task just needs to learn the map from this low-dimensional representation to its target-specific output. 
This paradigm has been used with great success in natural language processing domains 
% Multitask learning tries to learn prediction functions for a collection of source tasks simultaneously in the hope of discovering some underlying common property of these task. Implementing pretraining for certain target task has been leverage the power of big data from source do-mains and overcome the scarcity of target domain samples and has been applied in many application such as 
GPT-2 \cite{radford2019language}, GPT-3 \cite{brown2020language}, Bert \cite{devlin2018bert}, as well as vision domains CLIP \cite{radford2021learning}. 

This paper makes the observation that not all tasks are equally helpful for learning a representation, and a priori, it can be unclear which tasks will be best suited to maximize performance on the target task.
For example, modern datasets like CIFAR-10, ImageNet, and the CLIP dataset were created using a list of search terms and a variety of different sources like search engines, news websites, and Wikipedia. \cite{Krizhevsky09learningmultiple,5206848,radford2021learning}
Even if more data always leads to better performance, practicalities demand some finite limit on the size of the dataset that will be used for training.
Up until now, choosing which source tasks to include in multi-task learning has been an ad hoc process and more art than science. 
\emph{In this paper, we aim to formalize the process of prioritizing source tasks for representation learning by formulating it as an active learning problem.}

% Despite those success, in practice, pre-training on all those tasks is very expensive and unaffordable for the majority of researchers. There has been a lot of works dealing with this computational efficiency issues. For example, in \citet{radford2019language}, they replace the predictive objects with the equivalent constructive objectives. In \rev{+ some method introduced in \citet{yao2021nlp}...} In summary, all these approaches increase computational efficiency by modifying loss function or neural net architectures.  \rev{clip search different task manually. maybe in appendix, the output score also useful}

Specifically, we aim to achieve a target accuracy on a target task by requesting as little total data from source tasks as possible. 
For example, if a target task was to generate captions for images in a particular domain where few examples existed, each source task could be represented as a search term into Wikipedia from which (image, caption) pairs are returned.
By sampling moderate numbers of (image, caption) pairs resulting from each search term (task), we can determine which tasks result in the best performance on the source task and increase the rate at which examples from those terms are sampled.   
By quickly identifying which source tasks are useful for the target task and sampling only from those, we can reduce the overall number of examples to train over, potentially saving time and money. 
Moreover, prioritizing relevant tasks in training, in contrast to uniformly weighting them, even has the potential to improve performance, as demonstrated in \cite{chen2021weighted}.

% Here we want to approach this problem from an active learning viewpoint. That is, given a certain target task, we want to actively select more informative training samples from sources task and therefore reduce this large-scale training problem to some moderate scale problem and save more computational power. As far as we know, there are many limited existing works on these active multi-task representation learning topics, both in empirical and theoretical areas. \rev{Simon's suggestion: since we don't really do experiment to evaluate the computational power, so it is better to find more motivation. Another I am thinking is saying for non-realizable model, forcing the model to fit all tasks itself will decrease the accuracy. But I don't think we have much to support this?}

% From the empirical perspective, this high-level “target-task aware training” idea has been tried very recently by \citet{yao2021nlp} in their natural language processing problem and achieves substantial experimental improvement. But their proposed algorithm is NLP-task specified instead of a general framework for a variety of topics. Moreover, their source tasks selection is based on some heuristic retriever methods, which themselves are not “target-task aware”.

From the theoretical perspective, \citet{tripuraneni2020theory,tripuraneni2021provable,du2020few} study few-shots learning via multi-task representation learning and gives the generalization guarantees, that, such representation learning can largely reduce the target sample complexity. But all those works only consider \textit{uniform sampling} from each source task and thus establish the proof based on benign diversity assumptions on the sources tasks as well as some common assumptions between target and source tasks.

In this paper, we initiate the systematic study on using active learning to sample from source tasks.
We aim to achieve the following two goals:
\begin{enumerate}
    \item If there is a fixed budget on the source task data to use during training, we would like to select sources that maximize the accuracy of target task relative to naive uniform sampling from all source tasks. Equivalently, to achieve a given error rate, we want to reduce the amount of required source data. In this way, we can reduce the computation because the training complexity generally scales with the amount of data used, especially when the user has limited computing resources (e.g., a finite number of GPUs).
    \item  Given a target task, we want to output a relevance score for each source task, which can be useful in at least two aspects. 
    First, the scores suggest which certain source tasks are helpful for the target task and inform future task or feature selection (sometimes the task itself can be regard as some latent feature).
    % \simon{this might be vague.} \yf{It would help with transparency to understand what features are important in the latent space,which helps feature selection. (ref?)}\simon{features $\rightarrow$ task?}
    Second, the scores help the user to decide which tasks to sample more, in order to further improve the target task accuracy.
\end{enumerate}

% In this paper, we propose a principled approach for reducing the overall sample complexity of multi-task learning by formulating the problem as an instance of active learning.
\subsection{Our contributions} 
In our paper, given a single target task and $M$ source tasks we propose a novel quantity $\nu^* \in \R^M$ that characterizes the relevance of each source task to the target task (cf. Defn~\ref{defn:nu_star}).
We design an active learning algorithm which can take any representation function class as input.
The algorithm iteratively estimates $\nu^*$ and samples data from each source task based on the estimated $\nu^*$.
The specific contributions are summarized below:
\begin{itemize}
    \item In Section~\ref{sec: warm up}, we give the definition of $\nu^*$. As a warm up, we prove that when the representation function class is linear and $\nu^*$ is known, if we sample data from source tasks according to the given $\nu^*$, the sample complexity of the source tasks scales with the sparsity of $\nu^* \in \R^M$ (the $m$-th task is relevant if $\nu_m^* \neq 0$). This can save up to a factor of $M$, the \emph{number of source tasks}, compared with the naive uniform sampling from all source tasks.
    \item In Section~\ref{sec: main algo}, we drop the assumption of knowing $\nu^*$ and describe our active learning algorithm that iteratively samples examples from tasks to estimate $\nu^*$ from data.
    We prove that when the representation function class is linear, our algorithm never performs worse than uniform sampling, and achieves a sample complexity nearly as good as when $\nu^*$ is known. The key technical innovation here is to have a trade-off on less related source tasks between saving sample complexity and collecting sufficient informative data for estimating $\nu^*$. 
    \item  In Section~\ref{sec: experiment}, we empirically demonstrate the effectiveness of our active learning algorithm by testing it on the corrupted MNIST dataset with both linear and convolutional neural network (CNN) representation function classes. 
    The experiments show our algorithm gains substantial improvements compared to the non-adaptive algorithm on both models.
    Furthermore, we also observe that our algorithm generally outputs higher relevance scores for source tasks that are semantically similar to the target task.
\end{itemize}

\subsection{Related work} 

There are many existing works on provable \emph{non-adaptive} representation learning with various assumptions. \citet{tripuraneni2020theory,tripuraneni2021provable,du2020few,thekumparampil2021sample,collins2021exploiting,xu2021representation} assume there exists an underlying representation shared across all tasks. (Notice that some works focus on learning a representation function for \textit{any possible target task}, instead of learning a model for a specific target task as is the case in our work.) In particular, \citet{tripuraneni2020theory,thekumparampil2021sample} assume a low dimension linear representation. Furthermore, it assumes the  covariance matrix of all input features is the identity and the linear representation model is orthonormal. 
% \simon{We also follow this assumption....}\kevin{This also seems like too technical of a comment to warrant its space here. I'm not sure we want to call too much attention to this.} \yf{ I would prefer to only mention different assumption here and mention the relation with our paper in the main section when we actually use that assumption. Because it is only needed in our theoretical proof part of the unknown $\nu^*$ case and may not be such important?}
\citet{du2020few,collins2021exploiting} also study a similar setting but lift the identity  covariance and orthonormal assumptions.
Both works obtain similar conclusions. We will discuss our results in the context of these two settings in Section~\ref{sec: prelim}.

Going beyond the linear representation, \citet{du2020few}  generalize their bound to a 2-layer ReLu network and \citet{tripuraneni2021provable} further considers any general representation and linear predictor classes. 
More recent work has studied fine-tuning in both theoretical and empirical contexts \citet{shachaf2021theoretical,chua2021finetuning,chen2021weighted}. 
We leave extending our theoretical analysis to more general representation function classes as future work. Other than the generalization perspective, \citet{tripuraneni2021provable,thekumparampil2021sample,collins2021exploiting} propose computational efficient algorithms in solving this non-convex empirical minimization problems during representation learning, including Method-of-moments (MOM) algorithm and Alternating Minimization. Incorporating these efficient algorithms into our framework would also be a possible direction in the future.

\citet{chen2021weighted} also consider learning a weighting over tasks.
However, their motivations are much different since they are working under the hypothesis that some tasks are not only irrelevant, but even \emph{harmful} to include in the training of a representation. 
Thus, during training they aim to down-weight potentially harmful source tasks and up-weight those source tasks most relevant to the target task.
But the critical difference between their work and ours is that they assume a pass over the complete datasets from all tasks is feasible whereas we assume it is not (e.g., where each task is represented by a search term to Wikipedia or Google). 
In our paper, their setting would amount to being able to solve for $\nu^*$ for free, the equivalent of the ``known $\nu^*$'' setting of our warm-up section.
However, our main contribution is an active learning algorithm that ideally only looks at a vanishing fraction of the data from all the sources to train a representation. 

There exists some empirical multi-task representation learning/transfer learning works that have similar motivations as us. For example, \citet{yao2021nlp} use a heuristic retriever method to select a subset of target-related NLP source tasks and show training on a small subset of source tasks can achieve similar performance as large-scale training. \citet{zamir2018taskonomy,devlin2018bert} propose a transfer learning algorithm based on learning the underlying structure among visual tasks, which they called Taskonomy, and gain substantial experimental improvements. 

Many classification, regression, and even optimization tasks may fall under the umbrella term \emph{active learning} \cite{settles2009active}.
We use it in this paper to emphasize that a priori, it is unknown which source tasks are relevant to the target task. We overcome this challenge by iterating the closed-loop learning paradigm of 1) collect a small amount of data, 2) make inferences about task relevancy, and 3) leverage these inferences to return to 1) with a more informed strategy for data collection. 

% \simon{also add some most related active learning papers.} 
% The most related work as we aware is \citet{chen2021weighted}, which also proposes a target-aware multi-task representation training by assigning different weight on each source tasks to minimize a representation-based task distance between the source and target tasks. But there are two inherent difference between ours and their work. Firstly, they did not solve the problem on high computational power demanding on large scale source tasks. Secondly, there generalization bound depends on the empirical weight $\hat{w}$ -- an algorithm-dependent term that is not directly comparable to generalization bound in theoretical works listed above.
% \rev{conceptional idea, concise}

% \rev{Some less related work: 1. curriculum learning/meta learning 2.continuing learning with multi-task}

% Active learning aims to automatically and adaptively select the most informative data for training, therefore achieving similar performance guarantees as passive learning while using as few samples as possible. While lots of theoretical and practical results have been developed on classical machine learning models, \rev{+ reference for classical AL algorithm} our understanding of applying active learning on large-scale data is still limited. In this paper, we focus on one of those topics, active representation learning.

% \paragraph{Paper structure} 

\section{Preliminaries}
\label{sec: prelim}
In this section, we formally describe our problem setup which will be helpful for our theoretical development.
% \rev{In du's paper, $w^*$ is not fixed but draw from some distribution, need to be modified here}
\paragraph{Problem setup.}
Suppose we have $M$ source tasks and one target task, which we will denote as task $M+1$. Each task $m \in [M+1]$ is associated with a joint distribution $\mu_m$ over $\calX \times \calY$, where $\calX \in \fR^d$ is the input space and $\calY \in \fR$ is the output space. We assume there exists an underlying representation function $\phi^*: \calX \to \calZ$ that maps the input to some feature space $\calZ \in \fR^K$ where $K \ll d$. 
We restrict the representation function to be in some function class $\Phi$, e.g., linear functions, convolutional nets, etc. We also assume the linear predictor to be a linear mapping from feature space to output space, which is represented by $w_m^* \in \fR^K$. 
Specifically, we assume that for each task $m \in [M+1]$, an i.i.d sample $(x,y) \sim \mu_m$ can be represented as $y = \phi(x)^\top w_m^* + z$, where $z \sim \calN(0,\sigma^2)$.
Lastly, we also impose a regularity condition such that for all $m$, the distribution of $x$ when $(x,y) \sim \mu_m$ is $1$-sub-Gaussian.

During the learning process, we assume that we have only a small, fixed amount of data $\{x_{M+1}^i,y_{M+1}^i\}_{i \in [n_{M+1}]}$ drawn i.i.d. from the target task distribution $\mu_{M+1}$.
On the other hand, at any point during learning we assume we can obtain an i.i.d. sample from any source task $m \in [M]$ without limit. 
% we assume we can sample $n_m$ random i.i.d example pairs $\{x_m^i,y_m^i\}_{i \in [n_m]} \sim \mu_m$ for the $m$-th source task $m$, where $\{n_m\}_{n\in [M]}$ can be any finite number set. On the other hand, we only have access to a few fixed amount of i.i.d samples from target source, denoted as $\{x_{M+1}^i,y_{M+1}^i\}_{i \in [n_{M+1}]} \sim \mu_{M+1}$, i.e., $n_{M+1} \ll n_m$.
This setting aligns with our main motivation for active representation learning where we usually have a limited sample budget for the target task but nearly unlimited access to large-scale source tasks (such as (image,caption) example pairs returned by a search engine from a task keyword).  

Our goal is to use as few total samples from the source tasks as possible to learn a representation and linear predictor $\phi,w_{M+1}$ that minimizes the excess risk on the target task defined as
\begin{align*}
    &\text{ER}_{M+1}(\phi,w) 
    & = L_{M+1}(\phi,w) - L_{M+1}(\phi^*,w_{M+1}^*) 
    % & = \frac{1}{2} \E_{(x,y) \sim \mu_{M+1}} \left[\left((\phi(x)w - y\right)^2 - \left((\phi^*(x)w_{M+1}^* - y\right)^2\right]
\end{align*}
where $L_{M+1}(\phi,w) = \E_{(x,y) \sim \mu_{M+1}} \left[\left(\langle \phi(x), w\rangle - y\right)^2\right]$.

Our theoretical study focuses on the linear representation function class, which is studied in ~\citep{du2020few,tripuraneni2020theory,tripuraneni2021provable,thekumparampil2021sample}.
% \simon{I changed to ``definition" instead of ``assumption". Also check other places that refer to this definition.}

\begin{definition}[low-dimension linear representation]
\label{assump: linear model}
    $\Phi = \{x \to B^\top x \mid B \in \fR^{d \times K} \}$. We denote the true underlying representation function as $B^*$. Without loss of generality, we assume for all $m \in [M+1]$, $\E_{\mu_m)}[xx^\top]$ are equal.
\end{definition}
We also make the following assumption which has been used in \cite{tripuraneni2020theory}.
We note that Theorem~\ref{thm: warm-up} does not require this assumption,  but Theorem~\ref{thm: main} does.
\begin{assumption}[Benign low-dimension linear representation]
\label{assump: identity covar}
We assume $\E_{\mu_m}[x x^\top] = I$ and $\Omega(1) \leq \|w_m^*\|_2 \leq R$ for all $m \in [M+1]$. We also assume $B^*$ is not only linear, but also orthonormal.
\end{assumption}

% How to develop a provable adaptive algorithm under \citet{du2020few}'s setting remains to be open. 

\paragraph{Notations}
We denote the $n_m$ i.i.d samples collected from source task $m$ as the input matrix $X_m \in \fR^{n_m \times d}$, output vector $Y_m \in \fR^{n_m}$ and noise vector $Z_m \in \fR^{n_m}$. 
We then denote the expected and empirical input variances as $\Sigma_m = \E_{(x,y) \sim \mu_m} x x^\top$ 
and $\hat{\Sigma}_m = \frac{1}{n_m}(X_m)^\top X_m$. 
In addition, we denote the collection of $\{w_m\}_{m \in [M]}$ as $W \in \fR^{K \times M}$. Note that, the learning process will be divided into several epochs in our algorithm stated later, so we sometimes add subscript or superscript $i$ on those empirical notations to refer to the data used in certain epoch $i$. Finally, we use $\otil$ to hide $\log(K,M,d,1/\varepsilon, \sum_{m=1}^M n_m)$.

\paragraph{Other data assumptions}
Based on our large-scale source tasks motivation, we assume $M \geq K$ and $\sigma_{\min}(W^*) > 0$, which means the source tasks are diversified enough to learn all relevant representation features with respect to the low-dimension space.
This is the standard diversity assumption used in many recent works~\cite{du2020few,tripuraneni2020theory,tripuraneni2021provable,thekumparampil2021sample}. In addition, we assume $\sigma \geq \Omega(1)$ to make our main result easier to read. This assumption can be lift by adding some corner case analysis.

% In addition, for analysis convenience, we assume $\E_{\mu_m}[X_m X_m^\top] = I$ and $\Omega(1) \leq \|w_m^*\|_2 \leq R$ for all $m$. These should be easily satisfied by pre-processing and scaling.
\section{Task Relevance $\nu^*$ and More Efficient Sampling with Known $\nu*$}
\label{sec: warm up}

In this section, we give our key definition of task relevance, based on which, we design a more efficient source task sampling strategy.

Note because $\sigma_{\min}(W^*) > 0$, we can regard $w_{M+1}^*$ as a linear combination of $\{w_{m}^*\}_{m \in [M]}$. 
\begin{definition}
\label{defn:nu_star}
$\nu^*\in \fR^M$ is defined as
\begin{align}
    &\nu^* = \argmin_{\nu} \|\nu\|_2
    & \text{s.t.} \quad 
    W^* \nu = w_{M+1}^* \label{eq:nu_star}
\end{align}
\end{definition}
% \vspace{-10pt}
where larger $|\nu^*(m)|$ means higher relevance between source task $m$ and the target task. If $\nu^*$ is known to the learner, intuitively, it makes sense to draw more samples from source tasks that are most relevant.

For each source task $m \in [M]$, Line~3 in Alg.~\ref{alg:known} draws $n_m \propto (\nu^*(m))^2$ samples.
% , given it exceeds some necessary sample lower bound $\underline{N}$ to guarantee the diversity of input features. 
The algorithm then estimates the shared representation $\phi: \R^d \rightarrow \R^K$, and task-specific linear predictors $W = \{w_m^*\}_{m=1}^M$ by empirical risk minimization across all source tasks following the standard multi-task representation learning approach. 

\begin{algorithm}[!t] 
\caption{Multi-task sampling strategy with Known $\nu^*$\label{alg:known}
}
\begin{algorithmic}[1] 
\label{alg:main_1}
\STATE \textbf{Input:} confidence $\delta$, representation function class $\Phi$, combinatorial coefficient $\nu^*$, source-task sampling budget $N_\text{total} \gg M(Kd + \log(M/\delta))$
\STATE Initialize the lower bound $\underline{N} = Kd+\log(M/\delta)$ and  number of samples $n_m = \max\left\{ (N_\text{total}-M\underline{N})\frac{(\nu^*(m))^2}{\|\nu^*\|_2^2}, \underline{N}\right\}$ for all $m \in [M]$.
% $n_m = \max\{\tilde{n}_m,Kd+\log(1/\delta)\}$ for all $m$, where $\tilde{n}_m \propto (\nu^*(m))^2$ and $\sum_{m=1}^M n_m \approx N_\text{total} $  
\STATE For each task $m$, draw $n_m$ i.i.d samples from the corresponding offline dataset denoted as $\{X_m,Y_m\}_{m=1}^M$ 
% \label{line: draw sample}
\STATE Estimate the models as
\vspace{-10pt}
\begin{align}
\label{eqn: ERM 1}
&\hat{\phi}, \hat{W} = \argmin_{\phi \in \Phi,{W}=[{w}_1,\ldots,w_M]} \sum_{m=1}^M \| \phi(X_m) {w}_m - Y_m\|^2.\\
\label{eqn: ERM 2}
&\hat{w}_{M+1} = \argmin_{w} \|\hat{\phi}(X_{M+1}) w - Y_{M+1}\|^2
\end{align}
\vspace{-15pt}
\STATE \text{Return}  $\hat{\phi}, \hat{w}_{M+1}$
\end{algorithmic}
\end{algorithm} 

Below, we give our theoretical guarantee on the sample complexity from the source tasks when $\nu^*$ is known.
% For this section, to focus on illustrating our main sampling idea, we only focus on the sample complexity of source tasks. 

% \begin{algorithm}[!t] 
% \caption{Multi-task sampling strategy with Known $\nu^*$\label{alg:known}
% }
% \begin{algorithmic}[1] 
% \label{alg:main_1}
% \STATE \textbf{Input:} confidence $\delta$, representation function class $\Phi$, combinatorial coefficient $\nu^*$, source-task sampling budget $N_\text{total} \gg M(Kd + \log(M/\delta))$
% \STATE Initialize the lower bound $\underline{N} = Kd+\log(M/\delta)$ and  number of samples $n_m = \max\left\{ (N_\text{total}-M\underline{N})\frac{(\nu^*(m))^2}{\|\nu^*\|_2^2}, \underline{N}\right\}$ for all $m \in [M]$.
% % $n_m = \max\{\tilde{n}_m,Kd+\log(1/\delta)\}$ for all $m$, where $\tilde{n}_m \propto (\nu^*(m))^2$ and $\sum_{m=1}^M n_m \approx N_\text{total} $  
% \STATE For each task $m$, draw $n_m$ i.i.d samples from the corresponding offline dataset denoted as $\{X_m,Y_m\}_{m=1}^M$ 
% % \label{line: draw sample}
% \STATE Estimate the models as
% \vspace{-10pt}
% \begin{align}
% \label{eqn: ERM 1}
% &\hat{\phi}, \hat{W} = \argmin_{\phi \in \Phi,{W}=[{w}_1,\ldots,w_M]} \sum_{m=1}^M \| \phi(X_m) {w}_m - Y_m\|^2.\\
% \label{eqn: ERM 2}
% &\hat{w}_{M+1} = \argmin_{w} \|\hat{\phi}(X_{M+1}) w - Y_{M+1}\|^2
% \end{align}
% \vspace{-15pt}
% \STATE \text{Return}  $\hat{\phi}, \hat{w}_{M+1}$
% \end{algorithmic}
% \end{algorithm} 

\begin{theorem}
\label{thm: warm-up}
Under the low-dimension linear representation setting as defined in Definition~\ref{assump: linear model}, with probability at least $1-\delta$, our algorithm's output satisfies $\textsc{ER}(\hat{B},\hat{w}_{M+1}) \leq \varepsilon^2$ whenever the total sampling budget from all sources $N_{total}$ is at least
\begin{align*}
    \otil\left((Kd+KM + \log(1/\delta)) \sigma^2 s^* \|\nu^*\|_2^2 \varepsilon^{-2} \right) 
\end{align*}
and the number of target samples $n_{M+1}$ is at least
\begin{align*}
    \otil(\sigma^2(K + \log(1/\delta))\varepsilon^{-2})
\end{align*}
where $s^* = \min_{\gamma \in [0,1]} (1- \gamma) \| \nu^* \|_{0,\gamma} + \gamma M$ and 
$\| \nu \|_{0,\gamma} := \left| \left\{ m : |\nu_m| > \sqrt{ \gamma \frac{\|\nu^*\|_2^2}{N_{total}} } \right\} \right|$.
\end{theorem}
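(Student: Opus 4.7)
The plan is to combine a standard multi-task ERM generalization bound with the identity $w_{M+1}^* = W^*\nu^*$ to express the target-task excess risk as a $\nu^*$-weighted combination of per-source-task errors, and then exploit the proportional sampling $n_m\propto (\nu^*(m))^2$ to control the resulting weighted sum by $s^*\|\nu^*\|_2^2/N_{\text{total}}$. First, adapting the multi-task linear representation analysis of \citet{du2020few,tripuraneni2020theory} to heterogeneous $n_m$ yields, with probability at least $1-\delta/2$,
\[
\sum_{m=1}^M n_m\,\|B^*w_m^* - \hat B\hat w_m\|_\Sigma^2 \;\lesssim\; \sigma^2\bigl(Kd + KM + \log(1/\delta)\bigr),
\]
where $\Sigma = \E_{(x,y)\sim\mu_m}[xx^\top]$ is the common input covariance from Definition~\ref{assump: linear model}, with no identity or orthonormality assumption invoked. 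Second, using the optimality of $\hat w_{M+1}$ over $\fR^K$ together with the standard OLS estimation bound for fitting $K$ parameters on $n_{M+1}$ target samples, and using that $\hat W\nu^*$ is a feasible linear predictor on top of $\hat B$,
\[
\text{ER}_{M+1}(\hat B,\hat w_{M+1}) \;\le\; \|B^*w_{M+1}^* - \hat B\hat W\nu^*\|_\Sigma^2 + \tilde O\!\left(\frac{\sigma^2(K+\log(1/\delta))}{n_{M+1}}\right).
\]

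The key decomposition uses $w_{M+1}^* = W^*\nu^*$ to rewrite
\[
B^*w_{M+1}^* - \hat B\hat W\nu^* \;=\; \sum_{m=1}^M \nu^*(m)\bigl(B^*w_m^* - \hat B\hat w_m\bigr) \;=:\; \sum_{m=1}^M \nu^*(m)\,e_m,
\]
after which Cauchy--Schwarz with weights $n_m$ gives
\[
\Bigl\|\sum_m \nu^*(m)\,e_m\Bigr\|_\Sigma^2 \;\le\; \Bigl(\sum_m \tfrac{(\nu^*(m))^2}{n_m}\Bigr)\Bigl(\sum_m n_m\,\|e_m\|_\Sigma^2\Bigr).
\]
Combined with the first step, the target excess risk is controlled by $\sigma^2(Kd+KM+\log(1/\delta))\cdot\sum_m (\nu^*(m))^2/n_m$ plus the $n_{M+1}$ term.

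It remains to show $\sum_m (\nu^*(m))^2/n_m \lesssim s^*\|\nu^*\|_2^2/N_{\text{total}}$. From the algorithm's rule one has for every $m$ the two-sided bound
\[
\frac{(\nu^*(m))^2}{n_m}\;\le\;\min\!\left\{\frac{\|\nu^*\|_2^2}{N_{\text{total}}-M\underline N},\ \frac{(\nu^*(m))^2}{\underline N}\right\}.
\]
For any $\gamma\in[0,1]$ split $[M]$ at threshold $\gamma\|\nu^*\|_2^2/N_{\text{total}}$: the $\|\nu^*\|_{0,\gamma}$ indices above threshold each contribute at most $\|\nu^*\|_2^2/N_{\text{total}}$ via the first bound (absorbing the benign $M\underline N$ correction into the $\tilde O$), while each of the $M-\|\nu^*\|_{0,\gamma}$ remaining indices contributes at most $\gamma\|\nu^*\|_2^2/N_{\text{total}}$ via the second bound (using $\underline N\ge 1$). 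Summing yields $[(1-\gamma)\|\nu^*\|_{0,\gamma}+\gamma M]\|\nu^*\|_2^2/N_{\text{total}}$, and minimizing over $\gamma$ produces $s^*\|\nu^*\|_2^2/N_{\text{total}}$. Assembling,
\[
\text{ER}_{M+1} \;\lesssim\; \frac{s^*\|\nu^*\|_2^2\,\sigma^2(Kd+KM+\log(1/\delta))}{N_{\text{total}}}+\frac{\sigma^2(K+\log(1/\delta))}{n_{M+1}},
\]
and the stated budgets force each term below $\varepsilon^2/2$.

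The main obstacle is Step 1: extending the multi-task ERM concentration to heterogeneous per-task counts $n_m$ without assuming $\Sigma=I$ or orthonormal $B^*$ (Assumption~\ref{assump: identity covar} is explicitly off here). This requires propagating $n_m$-weighted $\Sigma$-norms through a covering/chaining argument over the shared $B$ and the per-task $\{w_m\}$, and then cleanly exposing the $Kd+KM$ parameter count; existing statements in \citet{du2020few,tripuraneni2020theory} need only a mild reparametrization. By contrast, Step 2 is textbook OLS, and the combinatorial bound of Step 4 — the paper's core observation that the $\underline N$ floor exactly trades $\gamma M$ against $(1-\gamma)\|\nu^*\|_{0,\gamma}$ — is short once the Cauchy--Schwarz decomposition is in place.
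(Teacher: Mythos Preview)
The proposal is correct and follows essentially the same approach as the paper: both combine the heterogeneous-$n_m$ multi-task ERM bound (the paper's modified Claims~A.3/A.4 from \citet{du2020few}), the identity $w_{M+1}^* = W^*\nu^*$ together with a weighted Cauchy--Schwarz/H\"older step that isolates the factor $\sum_m (\nu^*(m))^2/n_m$, and the identical $\gamma$-threshold decomposition of this sum. The only cosmetic difference is that the paper routes through the empirical projection $P_{X_{M+1}\hat B}^\perp$ and the rescaled pair $\widetilde W^*,\tilde\nu^*$ (with $\tilde\nu^*(m)=\nu^*(m)/\sqrt{n_m}$), whereas you work directly in population $\Sigma$-norm and compare $\hat w_{M+1}$ to the candidate $\hat W\nu^*$; the resulting inequalities are line-for-line equivalent.
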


Note that the number of target samples $n_{M+1}$ scales only with the dimension of the feature space $K$, and \emph{not} the input dimension $d \gg K$ which would be necessary without multi-task learning. 
This dependence is known to be optimal \cite{du2020few}.
The quantity $s^*$ characterizes our algorithm's ability to adapt to the approximate sparsity of $\nu^*$.  
Noting that $\sqrt{\frac{\|\nu^*\|_2^2}{N_{total}}}$ is roughly on the order of $\varepsilon$, taking $\gamma \approx 1/M$ suggests that to satisfy $\textsc{ER}(\hat{B},\hat{w}_{M+1}) \leq \varepsilon^2$, only those source tasks with relevance $|\nu^*(m)| \gtrapprox \varepsilon$ are important for learning.

For comparison, we rewrite  the bound in \cite{du2020few} in the form of $\nu^*$. 
% \rev{Is this bound tighter or not ?}
\begin{theorem}
\label{thm: compare to uniform}
Under Assumption~\ref{assump: linear model}, to obtain the same accuracy result, the non-adaptive (uniform) sampling of \cite{du2020few} requires that the total sampling budget from all sources $N_\text{total}$ is at least
\begin{align*}
    \otil\left((Kd+KM + \log(1/\delta)) \sigma^2 M \|\nu^*\|_2^2 \varepsilon^{-2} \right) 
\end{align*}
and requires the same amount of target samples as above.
\end{theorem}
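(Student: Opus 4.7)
The plan is to recover the stated bound by reusing the machinery behind Theorem~\ref{thm: warm-up} with the \emph{uniform} allocation $n_m = N_{\text{total}}/M$ in place of $n_m \propto (\nu^*_m)^2$. Since the uniform case essentially corresponds to the original result of \cite{du2020few}, one could equivalently cite their bound and translate it into our $\nu^*$ notation, but adapting Theorem~\ref{thm: warm-up}'s analysis keeps the write-up self-contained and highlights that the only substantive change is the evaluation of one weighted sum.

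The key structural fact is $w^*_{M+1} = W^* \nu^*$ from Definition~\ref{defn:nu_star}. Using this identity, the target excess risk bound for multi-task representation learning decomposes, up to logarithmic factors and universal constants, as
\begin{align*}
\textrm{ER}_{M+1}(\hat{B},\hat{w}_{M+1}) \;\lesssim\; \sigma^2 \bigl(Kd + KM + \log(1/\delta)\bigr) \sum_{m=1}^M \frac{(\nu^*_m)^2}{n_m} \;+\; \frac{\sigma^2 K}{n_{M+1}}.
\end{align*}
Plugging in $n_m = N_{\text{total}}/M$ gives $\sum_m (\nu^*_m)^2/n_m = M \|\nu^*\|_2^2 / N_{\text{total}}$, in contrast to $\|\nu^*\|_0 \|\nu^*\|_2^2 / N_{\text{total}}$ (or its refinement $s^* \|\nu^*\|_2^2 / N_{\text{total}}$) under the $\nu^*$-proportional allocation. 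Solving $\sigma^2(Kd+KM+\log(1/\delta)) \cdot M\|\nu^*\|_2^2 / N_{\text{total}} \lesssim \varepsilon^2$ for $N_{\text{total}}$ yields exactly the claimed $\otil\bigl((Kd+KM+\log(1/\delta))\sigma^2 M \|\nu^*\|_2^2 \varepsilon^{-2}\bigr)$; the target-side term $\sigma^2 K / n_{M+1}$ is unaffected, reproducing the same requirement on $n_{M+1}$.

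The main obstacle is verifying that the representation-recovery step from Theorem~\ref{thm: warm-up}'s proof still goes through without the per-task lower floor $\underline{N}$ that appears in Algorithm~\ref{alg:known}. However, the standing scaling assumption $N_{\text{total}} \gg M(Kd + \log(M/\delta))$ ensures $n_m = N_{\text{total}}/M \gg Kd + \log(M/\delta)$ on every task automatically, so the concentration bounds on $\hat{\Sigma}_m$ and the covering arguments over $\Phi$ and $W$ carry over verbatim. A secondary subtlety is that the $s^*$ refinement in Theorem~\ref{thm: warm-up} came from a sparsity-exploiting thresholding argument specific to the adaptive allocation; uniform sampling admits no analogous improvement, and the weighted sum cleanly equals $M\|\nu^*\|_2^2/N_{\text{total}}$ irrespective of any sparsity of $\nu^*$, which is precisely what makes the adaptive scheme strictly stronger in the sparse regime.
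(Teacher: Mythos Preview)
Your proposal is correct and follows essentially the same approach as the paper: reuse the excess-risk decomposition from the proof of Theorem~\ref{thm: warm-up}, substitute the uniform allocation $n_m = N_{\text{total}}/M$ into $\|\tilde{\nu}^*\|_2^2 = \sum_m (\nu^*_m)^2/n_m$ to obtain $M\|\nu^*\|_2^2/N_{\text{total}}$, and solve for $N_{\text{total}}$. Your additional remark that $N_{\text{total}} \gg M(Kd+\log(M/\delta))$ automatically guarantees the per-task sample floor is a helpful observation the paper leaves implicit.
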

Note the key difference is that the $s^*$ in Theorem~\ref{thm: warm-up} is replaced by $M$ in Theorem~\ref{thm: compare to uniform}.
Below we give a concrete example to show this difference is significant.
\vspace{-10pt}
\paragraph{Example: Sparse $\nu^*$.}
Consider an extreme case where $w_m = e_{m\text{ mod }(K-1)+1}$ for all $m \in [M-1]$, and $w_M = w_{M+1} = e_K$. This suggests that the target task is exactly the same as the source task $M$ and all the other source tasks are uninformative. It follows that $\nu^*$ is a $1$-sparse vector $e_M$ and $s^*=1$  when $\gamma = 0$. We conclude that uniform sampling requires a sample complexity that is $M$ times larger than that of our non-uniform procedure. 

% \rev{other than the thought experiment, any real applications? NLP? NLP still has $w_{M+1}^*$ unknown right?}
% Here we introduce our main proof idea, which will be used in our analysis for main algorithm in the next section.

\subsection{Proof sketch of Theorem~\ref{thm: warm-up}}

We first claim two inequalities that are derived via straightforward modifications of the proofs in \citet{du2020few}:
\begin{align}
\label{eq: er1}
    &\textsc{ER}(\hat{B},\hat{w}_{M+1})
    \lessapprox \tfrac{\|P_{X_{M+1}\hat{B}}^{\bot}X_{M+1}B^* w_{M+1}^*\|^2}{n_{M+1}} \\
\label{eq: er2}
    & \tfrac{\|P_{X_{M+1}\hat{B}}^{\bot}X_{M+1}B^* \widetilde{W}^*\|_F^2}{n_{M+1}} 
    \lessapprox \sigma^2 \big(K(M+d) + \log\tfrac{1}{\delta} \big)
\end{align}
where $P_{A}^\perp=I - A\left(A^{\top} A\right)^{\dagger} A^{\top}$,  $\tilde{\nu}^*(m) = \frac{\nu^*(m)}{\sqrt{n_m}}$, and $\tilde{W}$ is $\left[\sqrt{n_1} w_1^*, \sqrt{n_2} w_2^*, \ldots, \sqrt{n_M} w_M^*\right]$. 
By using these two results and noting that $w_{M+1}^* = \widetilde{W}^*\tilde{\nu}^*$, we have
\begin{align*}
    \textsc{ER}(\hat{B},\hat{w}_{M+1})
    &\overset{\eqref{eq: er1}}{\lessapprox} \frac{1}{n_{M+1}}\|P_{X_{M+1}\hat{B}}^{\bot}X_{M+1}B^* \widetilde{W}^*\tilde{\nu}^*\|_2^2\\
    &\leq \frac{1}{n_{M+1}} \|P_{X_{M+1}\hat{B}}^{\bot}X_{M+1}B^* \widetilde{W}^*\|_F^2\|\tilde{\nu}^*\|_2^2\\
    & = \eqref{eq: er2} \times \|\tilde{\nu}^*\|_2^2 .
\end{align*}
The key step to our analysis is the decomposition of $\|\tilde{\nu}^*\|_2^2$. 
If we denote $\epsilon^{-2} = \frac{N_\text{total}}{\|\nu^*\|_2^2}$, we have, for any $\gamma \in [0,1]$,
\begin{align*}
    & \sum_m \frac{\nu^*(m)^2}{n_m}\left(\one\{ |\nu^*(m)| > \sqrt{\gamma}\epsilon\} + \one\{ |\nu^*(m)| \leq \sqrt{\gamma}\epsilon\}\right)\\
    & \lessapprox \sum_m \left( \epsilon^{2}\one\{ |\nu^*(m)| > \sqrt{\gamma}\epsilon\} + \gamma\epsilon^2\one\{ |\nu^*(m)| \leq \sqrt{\gamma}\epsilon\}\right)
    % & \leq \left(\sum_m \one\{ |\nu^*(m)| > \sqrt{\gamma}\epsilon\}\right)\epsilon^2 
    %     + \left(\sum_m \one\{ |\nu^*(m)| \leq \sqrt{\gamma}\epsilon\}\right) \gamma \epsilon^2
\end{align*}
where the inequality comes from the definition of $n_m$ and the fact $N_\text{total} \gg M\underline{N} $.
Now by replacing the value of $\epsilon$ and $ \|\nu\|_{0,\gamma}$, we get the desired result.
\section{Main Algorithm and Theory}
\label{sec: main algo}

In the previous section, we showed the advantage of target-aware source task sampling when the optimal mixing vector $\nu^*$ between source tasks and the target task is \emph{known}. 
In practice, however, $\nu^*$ is unknown and needs to be estimated based on the estimation of $W^*$ and $w_{M+1}^*$, which are themselves consequences of the unknown representation $\phi^*$. 
In this section, we design an algorithm that adaptively samples from source tasks to efficiently learn $\nu^*$ and the prediction function for the target task $B^*w_{M+1}^*$. 
% At a high level, our algorithm is inspired by the seminal technique, uncertainty sampling, from the active learning literature\simon{@yifang: add a reference.}
The pseudocode for the procedure is found in Alg.~\ref{alg:main_1}.

\begin{algorithm}[!t] 
\caption{\algoname} \label{alg:main_1}
\begin{algorithmic}[1] 
\STATE \textbf{Input:} confidence $\delta$, a lower bound of $\sigma_{\min}(W^*)$ as $\underline{\sigma}$, representation function class $\Phi$
\STATE Initialize $\hat{\nu}_1 = [1/M,1/M,\ldots]$, $\epsilon_i = 2^{-i}$ and
% $\underline{N} \gg d + \log(1/\varepsilon\delta)$ 
$\{\beta_i\}_{i=1,2,\ldots}$, which will be specified later
\FOR{$i=1,2, \ldots $}
    % \STATE Set $n_m^i = \max\left\{\beta \hat{\nu}_{i}^2(m)\epsilon_i^{-2},\beta \epsilon_i^{-1} , \underline{N}\right\}$. 
    \STATE Set $n_m^i = \max\left\{\beta_i \hat{\nu}_{i}^2(m)\epsilon_i^{-2},\beta_i \epsilon_i^{-1}\right\}$. 
    \STATE For each task $m$, draw $n_m$ i.i.d samples from the corresponding offline dataset denoted as $\{X_m^i,Y_m^i\}_{m=1}^M$
    \STATE Estimate $\hat{\phi}^i, \hat{W}_i,\hat{w}_{M+1}^i$ with Eqn.~\eqref{eqn: ERM 1} and \eqref{eqn: ERM 2}
    % \begin{align*}
    % &\hat{\phi}, \hat{W} = \argmin_{\phi \in \Phi,\hat{W}=[\hat{w}_1,\hat{w}_2,\ldots]} \sum_m \| \phi(X_m) \hat{w}_m - Y_m\|^2\\
    % &\hat{w}_{M+1} = \argmin_{w} \sum_m \|\hat{\phi}(X_m^i) w - Y_{M+1}\|^2
    % \end{align*}
    \STATE Estimate the coefficient as
    \vspace{-10px}
    \begin{align}
    \label{eq: v hat}
        \hat{\nu}_{i+1} = \argmin_\nu \|\nu\|_2^2  
        \quad \text{s.t.} 
        \quad \hat{W}_i\nu = \hat{w}_{M+1}^i
    \end{align}
    \vspace{-20px}
\ENDFOR
\end{algorithmic}
\end{algorithm}

% Suppose the linear predictor $W^*$ and $w_{M+1}^*$ is known to the learner. Then intuitively, given a limited sample budget, we want to sample source tasks whose $w_m^*$ are more closely related to $w_{M+1}^*$. In practice, however, $W^*$ and $w_{M+1}^*$ is unknown and is being simultaneously estimated with the estimation of representation $B^*$. Here we design an algorithm that adaptively learning the weight distribution among tasks $\nu^*$ and the function for target tasks $B^*w_{M+1}^*$. 

We divide the algorithm into several epochs. 
At the end of each epoch $i$, we obtain estimates $\hat{\phi}_i, \hat{W}_i$ and $\hat{w}_{M+1}^i$ which are then used to calculate the task relevance denoted as $\hat{\nu}_{i+1}$. Then in the next epoch $i+1$, we sample data based on $\hat{\nu}_{i+1}$.
The key challenge in this iterative estimation approach is that the error of the estimation propagates from round to round due to unknown $\nu^*$ if we directly apply the sampling strategy proposed in Section~\ref{sec: warm up}. To avoid inconsistent estimation, we enforce the condition that each source task is sampled at least $\beta \epsilon_i^{-1}$ times to guarantee that $|\hat{\nu}_{i}(m)|$ is always $\sqrt{\epsilon_i}$-close to $|c\nu^*(m)|$, where $c \in [1/16,4]$. We will show why such estimation is enough in our analysis.

\subsection{Theoretical results under linear representation}

Here we give a theoretical guarantee for the realizable linear representation function class. Under this setting, we choose
\begin{align*}
    % \beta: &= \beta_i =  \max\bigg\{ 36K^2R^2(KM+Kd\log(\frac{N_\text{total}}{\varepsilon M} )\\ & \hspace{.5in}+\log(\frac{M \log(1/N_\text{total})}{\delta/10}), 
    % \frac{2000 R}{\underline{\sigma}^4} \bigg\}, \forall i \\
    \beta: &= \beta_i =  \bigg( 3000K^2R^2(KM+Kd\log(\frac{N_\text{total}}{\varepsilon M} )\\ & \hspace{.5in}+\log(\frac{M \log(1/N_\text{total})}{\delta/10}) \bigg)\frac{1}{\underline{\sigma}^6}, \forall i
    % \\
    % \underline{N}\!&=\! \max\big\{\tfrac{4}{\underline{\sigma}^2} \big(KM + Kd\log(\tfrac{1}{M}\sum_m n_m) + \log\tfrac{1}{\delta} \big),\beta_i\big\}.
\end{align*}

\begin{theorem}
\label{thm: main}
Suppose we know in advance a lower bound of $\sigma_{\min}(W^*)$ denoted as $\underline{\sigma}$.
Under the benign low-dimension linear representation setting as defined in Assumption~\ref{assump: identity covar}, we have $\textsc{ER}(\hat{B},\hat{w}_{M+1}) \leq \varepsilon^2$ with probability at least $1-\delta$ whenever the number of source samples $N_{total}$ is at least
\begin{align*}
     &\otil\bigg(\left(K(M+d) + \log\frac{1}{\delta} \right) \sigma^2  s^* \|\nu^*\|_2^2 \varepsilon^{-2} 
     + \square \sigma \varepsilon^{-1} \bigg)
\end{align*}
% where $\square = \left(MK^2dR + M\sqrt{KdR}/\underline{\sigma}^2\right)\sqrt{s^*}$
where $\square = \left(MK^2dR/\underline{\sigma}^3\right)\sqrt{s^*}$
and the target task sample complexity $n_{M+1}$ is at least
\begin{align*}
    &\otil\left(\sigma^{2}K\varepsilon^{-2} +\Diamond\sqrt{s^*} \sigma \varepsilon^{-1}\right)
\end{align*}
where $\Diamond = \min\left\{\frac{\sqrt{R}}{\underline{\sigma}^2K} ,\sqrt{K(M+d)+\log\frac{1}{\delta} }\right\}$ and
% where $ s^* = \min_{\gamma \in \left[0,1\right]} (1- \gamma) \| \nu^* \|_{0,\gamma} + \gamma M$ \\
% and $\| \nu \|_{0,\gamma} := |\{ m : \nu_m > \sqrt{ \gamma \frac{\|\nu^*\|_2^2K^3d}{N_{total}} } \}|$.
 $s^*$ has been defined in Theorem~\ref{thm: warm-up}. 
 
% \yf{There is one issue left. I realize we only need $o(K)$ number of $n_{M+1}$ to guarantee $B^\top \hSigma_{M+1} B \approx B^\top B$ for any orthonormal $B$, but it is unclear to me whether $(B_1)^\top \hSigma_{M+1} B_2 \approx (B_1)^\top B_2$ for any othonormal but different $B_1, B_2$. If this is not true, then we will have an extram $+d$ term in target complexity. Although $d$ usually smaller than $K\varepsilon^{-2}$, but still undesired. }
\end{theorem}

\paragraph{Discussion.}
Comparing to the known $\nu^*$ case studied in the previous section, in this unknown $\nu^*$ setting our algorithm only requires an additional low order term 
% Under the Definition~\ref{assump: identity covar}, compared to the known prior $\nu^*$ case, our algorithm only requires this extra low order term
% \begin{align*}
%     \left(MK^2d + M\sqrt{Kd}/\underline{\sigma}^2\right) \sigma\sqrt{\min_{\gamma \in \left[0,\sqrt{\frac{N_{total}}{\|\nu^*\|K^3d}}\right]} g_\Gamma(\gamma)}  \varepsilon^{-1}
% \end{align*}
% term, 
$\square\sigma\varepsilon^{-1}$ to achieve the same objective (under the additional assumption of Assumption~\ref{assump: identity covar}). 
% 
% In the other words, our algorithm achieves a similar result as known prior case when $\nu^*$ is sparse, and performs at least as well as the uniform sampling. 
%
Also, as long as $\Diamond \leq \otil(\sigma K \varepsilon^{-1})$, our target task sample complexity $\otil(\sigma^2K\varepsilon^{-2})$ remains the optimal rate \citep{du2020few}. 

% \rev{compare with \cite{du2020few}, $\kappa$ problem }

Finally, we remark that a limitation of our algorithm is that it requires some prior knowledge of $\underline{\sigma}$. However, because it only hits the low-order $\epsilon^{-1}$ terms, this is unlikely to dominate either of the sample complexities for reasonable values of $d,K,$ and $M$. 
\subsection{Proof sketch}

\textbf{Step 1:} We first show that the estimated distribution over tasks $\hat{\nu}_i$ is close to the underlying $\nu^*$.

\begin{lemma}[Closeness between $\hat{\nu}_i$ and $\nu^*$]
\label{lem: nu estimation error(main)}
With probability at least $1-\delta$, for any $i$, as long as $n_{M+1} \geq \frac{ 2000\epsilon_i^{-1}}{ \underline{\sigma}^4} $, we have 
\begin{equation*}
    |\hat{\nu}_{i+1}(m)| \in
    \begin{cases}
       \left[|\nu^*(m)|/16, 4|\nu^*(m)|\right] & \text{if }  \nu^*(m) \geq \sigma \sqrt{\epsilon_i}\\
       [0, 4\sqrt{\epsilon_i}] & \text{if } |\nu^*(m)| \leq \sigma \sqrt{\epsilon_i}
    \end{cases}       
\end{equation*}
\end{lemma}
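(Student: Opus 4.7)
The plan is to propagate errors in three stages: from the per-task sample-size lower bounds to estimation errors on $\hat W_i$ and $\hat w^i_{M+1}$, thence to an $\ell_2$ perturbation bound on $\hat\nu_{i+1}-\nu^*$, and finally to the two per-coordinate regimes stated in the lemma.

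First, because Alg.~\ref{alg:main_1} enforces $n_m^i \geq \beta_i \epsilon_i^{-1}$ on every source task and, by hypothesis, $n_{M+1} \ge 2000\epsilon_i^{-1}/\underline\sigma^4$, I would invoke the low-rank multi-task ERM analysis of \citet{du2020few} in the identity-covariance, orthonormal-$B^*$ setting of Assumption~\ref{assump: identity covar}. Representations are only identifiable up to an orthonormal change of basis on $\R^K$, so this produces some orthogonal $O_i \in \R^{K\times K}$ such that, with probability at least $1-\delta/(10 i^2)$ (which becomes $1-\delta$ after a union bound over $i$, absorbed into the logarithmic part of $\beta_i$),
\[ \|\hat W_i - O_i W^*\|_F + \|\hat w^i_{M+1} - O_i w^*_{M+1}\|_2 \;\le\; c_0\, \underline\sigma^3 \sqrt{\epsilon_i}/R \]
for a small absolute constant $c_0$; the $\underline\sigma^3/R$ scaling and the $K,M,d,\log(1/\delta)$ factors are exactly what the polynomial preamble inside $\beta_i$ pays for.

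Second, I would apply a pseudoinverse perturbation argument. Both $\nu^*$ and $\hat\nu_{i+1}$ are minimum-norm solutions of their respective consistent linear systems, and $\sigma_{\min}(O_i W^*) = \sigma_{\min}(W^*) \ge \underline\sigma$, so a standard calculation yields
\[ \|\hat\nu_{i+1} - \nu^*\|_2 \;\le\; \frac{C}{\underline\sigma}\Bigl(\|\hat W_i - O_i W^*\|_F\,\|\nu^*\|_2 + \|\hat w^i_{M+1} - O_i w^*_{M+1}\|_2\Bigr). \]
Combining this with $\|\nu^*\|_2 \le R/\underline\sigma$ (from $\|w^*_{M+1}\|_2\le R$ and the definition of $\nu^*$) and Step~1, the right-hand side is at most $\sqrt{\epsilon_i}/32$ after tuning the absolute constant inside $\beta_i$ sufficiently small.

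Third, the $\ell_2$ bound gives a coordinate-wise bound of the same size for every $m$. In Case~1, $|\nu^*(m)| \ge \sigma\sqrt{\epsilon_i} \ge \sqrt{\epsilon_i}$ (using $\sigma = \Omega(1)$), so the additive error $\sqrt{\epsilon_i}/32$ is at most $|\nu^*(m)|/32$, placing $|\hat\nu_{i+1}(m)| \in [31|\nu^*(m)|/32,\, 33|\nu^*(m)|/32] \subset [|\nu^*(m)|/16,\,4|\nu^*(m)|]$. In Case~2, the triangle inequality yields $|\hat\nu_{i+1}(m)| \le \sigma\sqrt{\epsilon_i} + \sqrt{\epsilon_i}/32 \le 4\sqrt{\epsilon_i}$ after absorbing the $\sigma$ factor into a slightly larger constant in $\beta_i$. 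The main obstacle is Step~1: standard multi-task ERM guarantees typically produce an \emph{averaged} bound on $\sum_m n_m^i\|\hat w^i_m - O_i w^*_m\|_2^2$, while we need a uniform per-task bound. The per-task sampling floor $\beta_i \epsilon_i^{-1}$ is designed precisely to enable this upgrade, and carefully tracking the polynomial factors in $K,M,d,R,\underline\sigma$ through $\beta_i$ (as given in the preamble to Theorem~\ref{thm: main}) is the technical crux.
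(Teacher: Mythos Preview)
Your route differs from the paper's. The paper writes both $\nu^*(m)$ and $\hat\nu_{i+1}(m)$ in closed form---the former as $(B^*w_m^*)^\top\bigl(B^*W^*(B^*W^*)^\top\bigr)^+B^*w_{M+1}^*$, the latter as the analogous expression with $\hat B_i\hat W_i$ in place of $B^*W^*$ plus explicit noise terms arising from the least-squares fits---and then controls the difference of the two low-rank $d\times d$ pseudoinverses via the generalized inverse perturbation theorem of \cite{Kovanic1979}. This produces a mixed multiplicative--additive per-coordinate bound $|\hat\nu_{i+1}(m)|\in\bigl[\tfrac12|\nu^*(m)|-c\sigma\sqrt{\epsilon_i},\,2|\nu^*(m)|+c\sigma\sqrt{\epsilon_i}\bigr]$, from which both cases follow. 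Your idea of aligning via $O_i$ and applying a standard full-row-rank pseudoinverse perturbation to the $K\times M$ system $\hat W_i\nu=\hat w^i_{M+1}$ is more elementary (it sidesteps the Kovanic theorem entirely) and, when it works, gives a cleaner purely additive bound.

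Two steps in your outline, however, are not yet justified. First, the natural alignment $O_i=\hat B_i^\top B^*$ is only \emph{approximately} orthogonal; you need a subspace-closeness argument (from $\|\hat B_i\hat W_i-B^*W^*\|_F$ small and $\sigma_{\min}(W^*)\ge\underline\sigma$) to control its condition number, or a polar-decomposition step at the cost of an extra $\|W^*\|_{op}/\underline\sigma$ factor to be absorbed into $\beta_i$. Second, and more substantively, the bound on $\|\hat w^i_{M+1}-O_iw^*_{M+1}\|$ is \emph{not} delivered by ``the low-rank multi-task ERM analysis of \citet{du2020few}'': $\hat w^i_{M+1}$ is a separate least-squares fit on the target data using the already-learned $\hat B_i$, so its error decomposes into a noise piece (controlled by $n_{M+1}$, not by $\beta_i$) and a bias piece $(\hat B_i^\top\hat\Sigma_{M+1}\hat B_i)^{-1}\hat B_i^\top\hat\Sigma_{M+1}B^*w^*_{M+1}-O_iw^*_{M+1}$ governed by the subspace distance $\|(I-\hat B_i\hat B_i^\top)B^*\|$; the paper handles these pieces explicitly as its ``noise term'' sub-step. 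Finally, your Case-2 remark about ``absorbing the $\sigma$ factor into a slightly larger constant in $\beta_i$'' is incorrect: the $\sigma\sqrt{\epsilon_i}$ there is the case hypothesis on $|\nu^*(m)|$, not an estimation error, so tuning $\beta_i$ has no effect on it.
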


Notice that the sample lower bound in the algorithm immediately implies sufficiently good estimation in the next epoch even if $\hat{\nu}_{i+1}$ goes to $0$.

\begin{proofsketch}

Under Assumption~\ref{assump: identity covar}, by solving Eqn~\eqref{eqn: ERM 2}, we can rewrite the optimization problem on $\nu^*$ and $\hat{\nu}_i$ defined in Eqn.\eqref{eq:nu_star} and \eqref{eq: v hat} roughly as the follows (see the formal definition in the proof of Lemma~\ref{lem: nu estimation error(app)} in Appendix~\ref{sec: main analysis(app)})
\begin{align*}
    &\hat{\nu}_{i+1}
    =\argmin_\nu \|\nu\|_2^2  \\
    \text{s.t.} \quad
    & \sum_m  \hat{B}_i^\top \left(B^*w_m^* 
        +\frac{1}{n_m^i} \left(X_m^i\right)^\top Z_m\right)\nu(m) \\
       & =  \hat{B}_i^\top\left(B^*w_{M+1}^*  +\frac{1}{n_{M+1}^i} \left(X_{M+1}^i\right)^\top Z_{M+1}\right),\\
 \text{and}~   &\nu^* = \argmin_\nu \|\nu\|_2^2 \\
    \text{s.t.} \quad
    &  \sum_m w_m^*\nu(m) = w_{M+1}^*.
\end{align*}
Solving these two optimization problem gives,
\begin{align*}
     \nu^*(m)&= (B^* w_m^*)^T\left(B^*W^*(B^*W^*)^T\right)^{+} (B^*w_{M+1}^*) \\
    \lvert \hat{\nu}_{i+1}(m) \rvert & \leq 2\left\lvert(B^*w_m^*)^T (\hat{B}_i\hat{W}_i(\hat{B}_i\hat{W}_i)^T)^+  B^*w_{M+1}^* \right\lvert \\
    & \quad + \text{low order noise}.
\end{align*}
It is easy to see that the main different between these two expressions is $\left(B^*W^*(B^*W^*)^T\right)^{+}$ and its corresponding empirical estimation. Therefore, by denoting the difference between these two terms as 
\begin{align*}
    \Delta = (\hat{B}_i\hat{W}_i(\hat{B}_i\hat{W}_i)^T)^+ - \left(B^*W^*(B^*W^*)^T\right)^{+},
\end{align*} 
we can establish the connection between the true and empirical task relevance as
\begin{align}
\label{eq: 1}
    |\hat{\nu}_{i+1}(m)| - 2| \nu^*(m)|
    \lessapprox 2\left\lvert (B^* w_m^*)^T \Delta (B^*w_{M+1}^*) \right\lvert 
\end{align}

Now the minimization on source tasks shown in Eqn.~\eqref{eqn: ERM 1} ensures that 
\begin{align*}
    \|B^*W^* - \hat{B}_i\hat{W}_i\|_F \leq \sigma \text{poly}(d,M) \sqrt{\epsilon_i}.
\end{align*}
This helps us to further bound the $(\hat{B}_i\hat{W}_i(\hat{B}_i\hat{W}_i)^T) - \left(B^*W^*(B^*W^*)^T\right)$ term, which can be regarded as a perturbation on the underlying matrix $B^*W^*(B^*W^*)^T$.
Then by using the generalized inverse matrix theorem \cite{Kovanic1979}, we can show that the inverse of the perturbed matrix is close to its original matrix on some low dimension space. 

Therefore, we can upper bound Eqn.~\eqref{eq: 1} by $\sigma\sqrt{\epsilon_i}$.
We repeat the same procedure to lower bound the $\frac{1}{2}|\nu^*(m)| -|\hat{\nu}_{i+1}(m)|$. Combining these two, we have
\begin{align*}
    |\hat{\nu}_{i+1}(m)| \in \left[ \frac{1}{2}|\nu^*(m)| - \frac{7}{16}\sigma \sqrt{\epsilon_i}, 2|\nu^*(m)|  + 2\sigma\sqrt{\epsilon_i}\right]
\end{align*}

This directly lead to the result based on whether $\nu^*(m) \geq \sigma\sqrt{\epsilon}_i$ or not.
\end{proofsketch}

\textbf{Step 2:} Now we prove the following two main lemmas on the final accuracy and the total sample complexity.

Define event $\calE$ as the case that, for all epochs, the closeness between $\hat{\nu}_i$ and $\nu^*$ defined Lemma~\ref{lem: nu estimation error(main)} has been satisfied.

\begin{lemma}[Accuracy on each epoch (informal)]
\label{lem: main accuracy}
Under $\calE$, after the epoch $i$, we have $\textsc{ER}(\hat{B},\hat{w}_{M+1})$ roughly upper bounded by
\begin{align*}
    \frac{\sigma^2}{\beta} \left(KM + Kd + \log\frac{1}{\delta} \right)  s_i^* \epsilon_i^2
    + \frac{\sigma^2\left(K + \log(1/\delta) \right)}{n_{M+1}}
\end{align*}
where $ s_i^* = \min_{\gamma \in [0, 1]} (1- \gamma) \| \nu^* \|_{0,\gamma}^i + \gamma M$ \\
and $\| \nu \|_{0,\gamma}^i := |\{ m : \nu_m > \sqrt{ \gamma } \epsilon_i \}|$.

\end{lemma}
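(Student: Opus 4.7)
The plan is to follow the structure of the warm-up argument (Theorem~\ref{thm: warm-up}), feeding in the adaptive sample sizes $n_m^i$ chosen by Alg.~\ref{alg:main_1} rather than the ideal quantities driven by a known $\nu^*$. Conditioned on event $\calE$, the first step is to re-apply the two excess-risk inequalities \eqref{eq: er1}--\eqref{eq: er2} with $\{n_m^i\}_m$ as the source sample sizes drawn in epoch $i$. Writing $w_{M+1}^* = \widetilde{W}^*\tilde\nu^*$ with $\tilde\nu^*(m) = \nu^*(m)/\sqrt{n_m^i}$ and invoking Cauchy--Schwarz as in the warm-up sketch yields
\begin{equation*}
\textsc{ER}(\hat B, \hat w_{M+1}) \lessapprox \sigma^2(K(M+d)+\log(1/\delta))\,\|\tilde\nu^*\|_2^2 + \frac{\sigma^2(K+\log(1/\delta))}{n_{M+1}},
\end{equation*}
where the second summand is the standard variance of the $K$-dimensional target-task least-squares regression given $\hat B_i$ and is decoupled from the source-task sampling. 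It therefore suffices to show $\|\tilde\nu^*\|_2^2 \lessapprox s_i^*\,\epsilon_i^2/\beta$ using only the definition of $n_m^i$ and the closeness guarantee from Lemma~\ref{lem: nu estimation error(main)}.

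To control $\|\tilde\nu^*\|_2^2 = \sum_m (\nu^*(m))^2 / n_m^i$, the plan is to fix an arbitrary $\gamma \in [0,1]$ and split the index set at the threshold $\sqrt{\gamma}\epsilon_i$. For tasks $m$ with $|\nu^*(m)| > \sqrt{\gamma}\epsilon_i$ that additionally satisfy the hypothesis of Lemma~\ref{lem: nu estimation error(main)} (i.e.\ $|\nu^*(m)| \gtrsim \sigma\sqrt{\epsilon_{i-1}}$), the closeness bound $\hat\nu_i^2(m) \gtrsim (\nu^*(m))^2$ propagates through the $\max$ in the definition of $n_m^i$ to give $n_m^i \gtrsim \beta\,(\nu^*(m))^2/\epsilon_i^2$, so each such term contributes at most $O(\epsilon_i^2/\beta)$. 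For the indices in the ``large'' bucket where the closeness hypothesis fails, the floor $n_m^i \geq \beta/\epsilon_i$ together with $|\nu^*(m)|^2 = O(\sigma^2\epsilon_i)$ still yields $O(\sigma^2\epsilon_i^2/\beta)$ per term. Finally, for $m$ with $|\nu^*(m)| \leq \sqrt{\gamma}\epsilon_i$ the floor alone gives $(\nu^*(m))^2/n_m^i \leq \gamma\epsilon_i^3/\beta \leq \gamma\epsilon_i^2/\beta$. Summing yields $\|\tilde\nu^*\|_2^2 \lessapprox (\epsilon_i^2/\beta)\bigl((1-\gamma)\|\nu^*\|_{0,\gamma}^i + \gamma M\bigr)$, and minimizing over $\gamma$ produces $s_i^*$; plugging back into the displayed excess-risk bound proves the lemma.

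The main obstacle I anticipate is the coupling between the random event $\calE$ and the sample sizes $n_m^i$: because $n_m^i$ is determined by $\hat\nu_i$, which is itself a random estimate produced in the previous epoch from randomly drawn samples, one must verify that the concentration inequalities underlying \eqref{eq: er2} continue to hold conditionally on $\calE$, and that the hypothesis of Lemma~\ref{lem: nu estimation error(main)} invoked at epoch $i-1$ is consistent with the sampling carried out at epoch $i$. This is precisely the role of the floor $\beta\epsilon_i^{-1}$ in the definition of $n_m^i$: it guarantees that every task receives enough samples to make the closeness hypothesis self-fulfilling inductively across epochs without re-introducing any $M$-scaling, so that the $s_i^*$-style sparsity savings from the warm-up survive into the final excess-risk bound.
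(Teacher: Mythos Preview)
Your proposal is correct and follows essentially the same route as the paper's proof: both start from the warm-up excess-risk decomposition to reduce to bounding $\sum_m (\nu^*(m))^2/n_m^i$, then split the index set into the same three regimes (large $|\nu^*(m)|$ where Lemma~\ref{lem: nu estimation error(main)} applies, intermediate $|\nu^*(m)|$ where the floor $n_m^i\ge \beta\epsilon_i^{-1}$ combined with $|\nu^*(m)|^2\lesssim \sigma^2\epsilon_{i-1}$ suffices, and small $|\nu^*(m)|\le \sqrt{\gamma}\epsilon_i$ where the floor alone handles it), and finally minimize over $\gamma$. Your discussion of the coupling obstacle and the role of the floor is also on point and matches the paper's intent.
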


\begin{proofsketch}
As we showed in Section~\ref{sec: warm up}, the key for calculating the accuracy is to upper bound $\sum_m \frac{\nu^*(m)^2}{n_m^i}$. Similarly to Section~\ref{sec: warm up}, we employ the decomposition
\begin{align*}
 & \sum_m\!\tfrac{\nu^*(m)^2}{n_m^i}\!\left(\one\{ |\nu^*(m)|\!>\!\sqrt{\gamma}\epsilon_{i}\}
\!+\!\one\{ |\nu^*(m)|\!\leq\!\sqrt{\gamma}\epsilon_{i}\} \right).
\end{align*}
The last sparsity-related term can again be easily upper bounded by $\order((1-\| \nu^* \|_{0,\gamma}^i)\sigma^2\gamma\epsilon_i^2)$.

Then in order to make a connection between $n_m^i$ and $\nu^*(m)$ by using Lemma~\ref{lem: nu estimation error(main)}, we further decompose the first term as follows and get the upper bound
\begin{align*}
    &\sum_m \frac{\nu^*(m)^2}{n_m^i}\one\{ |\nu^*(m)| > \sigma\sqrt{\epsilon_{i-1}}\} \\
            & \quad + \sum_m\frac{\nu^*(m)^2}{n_m^i}\one\{ \sigma\sqrt{\gamma}\epsilon_{i}\leq |\nu^*(m)| \leq \sqrt{\epsilon_{i-1}}\}\\
    &\lessapprox \sum_m \frac{\hat{\nu}_{i}^2}{ n_m^i}\one\{ |\nu^*(m)| > \sigma\sqrt{\epsilon_{i-1}}\} \\
            & \quad + \sum_m  \frac{\sigma^2\epsilon_{i}}{n_m^i}\one\{ \sqrt{\gamma}\epsilon_{i-1}\leq |\nu^*(m)| \leq \sigma\sqrt{\epsilon_{i-1}}\}\\
% \end{align*}
% \begin{align*}
    &\leq \sum_m \epsilon_i^2/\beta\one\{ |\nu^*(m)| > \sigma\sqrt{\epsilon_{i-1}}\} \\
            & \quad +  \sum_m \sigma^2\epsilon_i^2/\beta\one\{ \sqrt{\gamma}\epsilon_{i}\leq |\nu^*(m)| \leq \sigma\sqrt{\epsilon_{i-1}}\}\\
    &\leq  \order( \| \nu^* \|_{0,\gamma}^i\epsilon_i^2/\beta)
\end{align*}
where the second inequality is from the definition of $n_m^i$.
\end{proofsketch}

\begin{lemma}[Sample complexity on each epoch(informal)]
\label{lem: main complexity}
Under $\calE$, after the epoch $i$, We have the total number of training samples from source tasks upper bounded by 
\vspace{-5pt}
\begin{align*}
    \order\left(\beta (M\varepsilon^{-1} +  \|\nu^*\|_2^2\varepsilon^{-2})\right) + \text{low-order term} \times \Gamma.
\end{align*}
\end{lemma}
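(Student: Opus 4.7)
The plan is to bound the source-task sample count epoch-by-epoch and then exploit the geometric decay of $\epsilon_i = 2^{-i}$ to reduce the sum to its last term. The strategy relies on event $\calE$ (so that Lemma~\ref{lem: nu estimation error(main)} applies) and purely algebraic manipulations; no further probabilistic work beyond what is already encoded in $\calE$ is required.

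First, I would unwind the algorithmic definition $n_m^i = \max\{\beta\,\hat{\nu}_i^2(m)\epsilon_i^{-2},\,\beta\epsilon_i^{-1}\}$ and use $\max(a,b) \leq a+b$ to obtain the per-epoch bound
$$\sum_m n_m^i \;\leq\; \beta\bigl(\|\hat{\nu}_i\|_2^2\,\epsilon_i^{-2}\;+\;M\epsilon_i^{-1}\bigr).$$
Next, I would invoke Lemma~\ref{lem: nu estimation error(main)} under $\calE$: for each $m$, $|\hat{\nu}_i(m)| \leq \max\{4|\nu^*(m)|,\,4\sqrt{\epsilon_{i-1}}\}$, which yields the pointwise inequality $\hat{\nu}_i^2(m) \leq 16\nu^*(m)^2 + 16\epsilon_{i-1}$ and hence $\|\hat{\nu}_i\|_2^2 \leq 16\|\nu^*\|_2^2 + 16M\epsilon_{i-1}$. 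Substituting back and using $\epsilon_{i-1} = 2\epsilon_i$ gives
$$\sum_m n_m^i \;\leq\; \beta\bigl(16\|\nu^*\|_2^2\,\epsilon_i^{-2}\;+\;33M\epsilon_i^{-1}\bigr).$$

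I would then sum this per-epoch estimate over $j=1,\ldots,i$. Since $\sum_{j=1}^{i}\epsilon_j^{-2}\leq 2\epsilon_i^{-2}$ and $\sum_{j=1}^{i}\epsilon_j^{-1}\leq 2\epsilon_i^{-1}$ by geometric-series convergence, the cumulative source sample count up through epoch $i$ is $\order\bigl(\beta(\|\nu^*\|_2^2\epsilon_i^{-2} + M\epsilon_i^{-1})\bigr)$. Terminating the algorithm at the epoch where $\epsilon_i \asymp \varepsilon$ (which is the epoch shown by Lemma~\ref{lem: main accuracy} to deliver $\textsc{ER}(\hat{B},\hat{w}_{M+1})\leq \varepsilon^2$, provided $n_{M+1}$ is large enough) produces exactly the main term $\order(\beta(M\varepsilon^{-1}+\|\nu^*\|_2^2\varepsilon^{-2}))$.

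The main subtlety is the first epoch, where $\hat{\nu}_1=(1/M,\ldots,1/M)$ is an uninformed initialization and Lemma~\ref{lem: nu estimation error(main)} offers no control. Fortunately $\|\hat{\nu}_1\|_2^2 = 1/M$, so $\sum_m n_m^1 = O(\beta M)$, which is dominated by $\beta M\varepsilon^{-1}$ and absorbed into the stated low-order remainder. A secondary subtlety is that Lemma~\ref{lem: nu estimation error(main)} fires only when $n_{M+1}\geq 2000\epsilon_i^{-1}/\underline{\sigma}^4$; this is a condition on the \emph{target} sample budget and does not affect the source count computed above, but it pins down the factor captured by $\Gamma$ in the statement (essentially, the cost of bootstrapping accurate $\hat{\nu}_i$'s through the target samples). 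I do not anticipate any obstacle beyond carefully tracking constants and confirming the geometric sums; the bulk of the work has already been absorbed into Lemma~\ref{lem: nu estimation error(main)}.
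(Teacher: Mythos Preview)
Your proposal is correct and follows essentially the same route as the paper: bound $\sum_m n_m^i$ via $\max(a,b)\le a+b$, control $\hat{\nu}_i(m)$ through Lemma~\ref{lem: nu estimation error(main)} (the paper does this with an explicit case split on $|\nu^*(m)|\gtrless\sigma\sqrt{\epsilon_{i-1}}$ while you use the equivalent pointwise bound $\hat{\nu}_i^2(m)\le 16\nu^*(m)^2+16\epsilon_{i-1}$), and then sum geometrically over epochs. The only organizational difference is that the paper defers the geometric summation to the proof of Theorem~\ref{thm: main}, whereas you fold it into the lemma itself.
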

\vspace{-10pt}
\begin{proofsketch}
For any fixed epoch $i$, by definition of $n_m^i$, we again decompose the summed source tasks based on $\nu^*(m)$ and get the total sample complexity as follows
\begin{align*}
    &\sum_{m+1}^M \beta\hat{\nu}_i^2(m)\epsilon_i^{-2} \one\{ |\nu^*(m)| > \sigma \sqrt{\epsilon_{i-1}} \} \\
        & \quad + \sum_{m+1}^M \beta\hat{\nu}_i^2(m)\epsilon_i^{-2} \one\{ |\nu^*(m)| \leq \sigma \sqrt{\epsilon_{i-1}}\}
        + M \beta\epsilon_i^{-1}
        % \\
    % & \leq \sum_m^M \underline{N}
    %     + \sum_m^M \beta(4\nu^*(m))^2\epsilon_i^{-2} \one\{ |\nu^*(m)| > \sigma \sqrt{\epsilon_{i-1}} \}\\
    %     & \quad + \sum_m^M \beta(4\sigma\sqrt{\epsilon_{i-1}})^2/\epsilon_i^{-2}  \one\{ |\nu^*(m)| \leq \sigma \sqrt{\epsilon_{i-1}}\}
    %     + \sum_m^M \beta\epsilon_i^{-1}
\end{align*}
% By using Lemma~\ref{lem: nu estimation error(main)} from the previous step, we can upper bounded second term in terms of $\nu^*$ when $\hat{\nu}$ is a good estimation of $\hat{\nu}$. Also, the third term represents the situation 
% The second term represents the case where $\hat{\nu}$ is a good estimation of $\nu^*$ and the third term represents the case where $\nu^*$. 
Again by replacing the value of $\hat{\nu}$ from Lemma~\ref{lem: nu estimation error(main)}, we can upper bounded second term in terms of $\nu^*$ and we can also show that the third term is low order $\epsilon^{-1}$. 
\end{proofsketch}

Theorem~\ref{thm: main} follows by combining the two lemmas.

% \subsection{Experiment}
% \rev{add simultation result is we have space}

\begin{figure*}[h]
    \centering
    \includegraphics[scale=0.38]{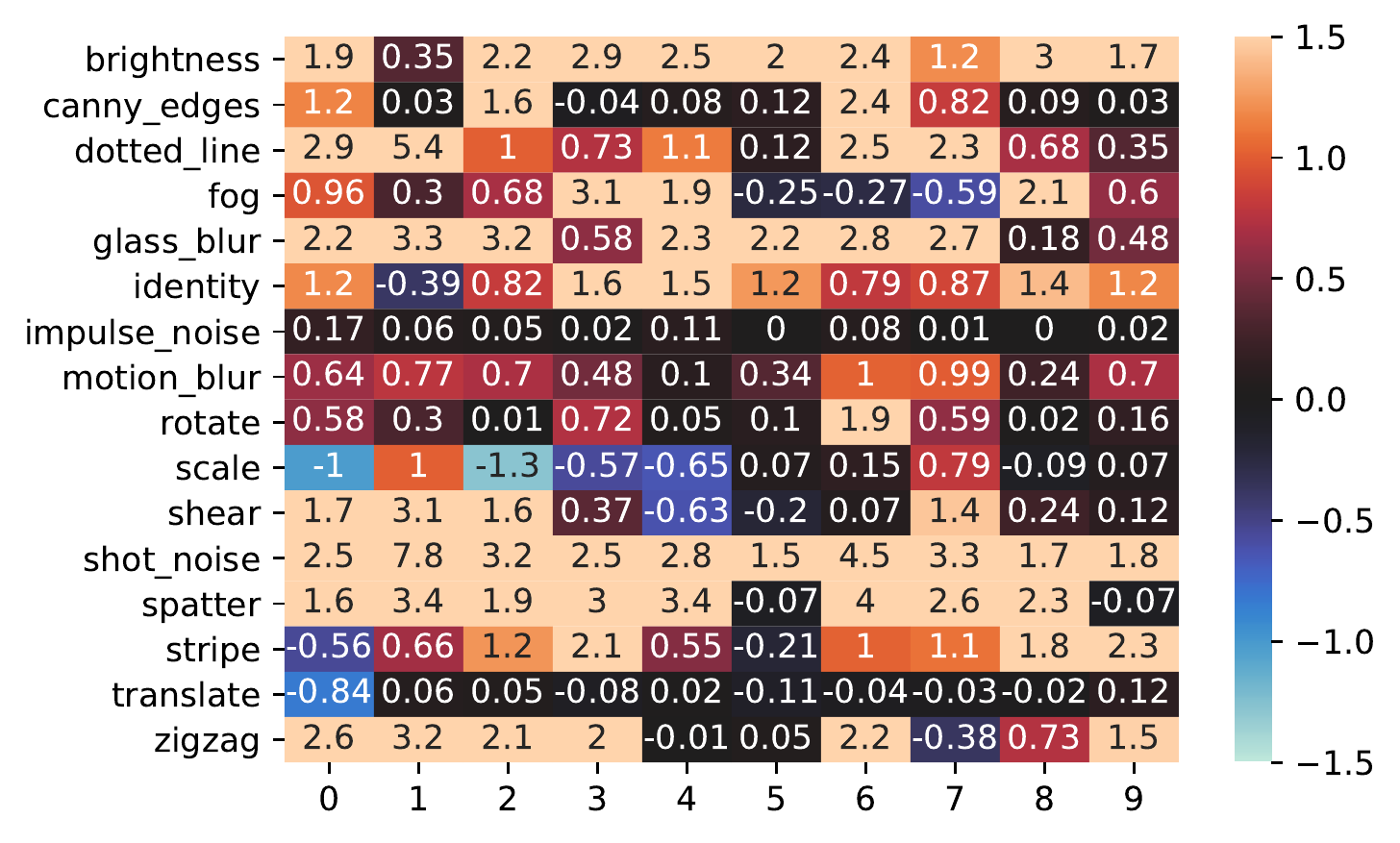}
    \includegraphics[scale=0.2, width=0.25\linewidth]{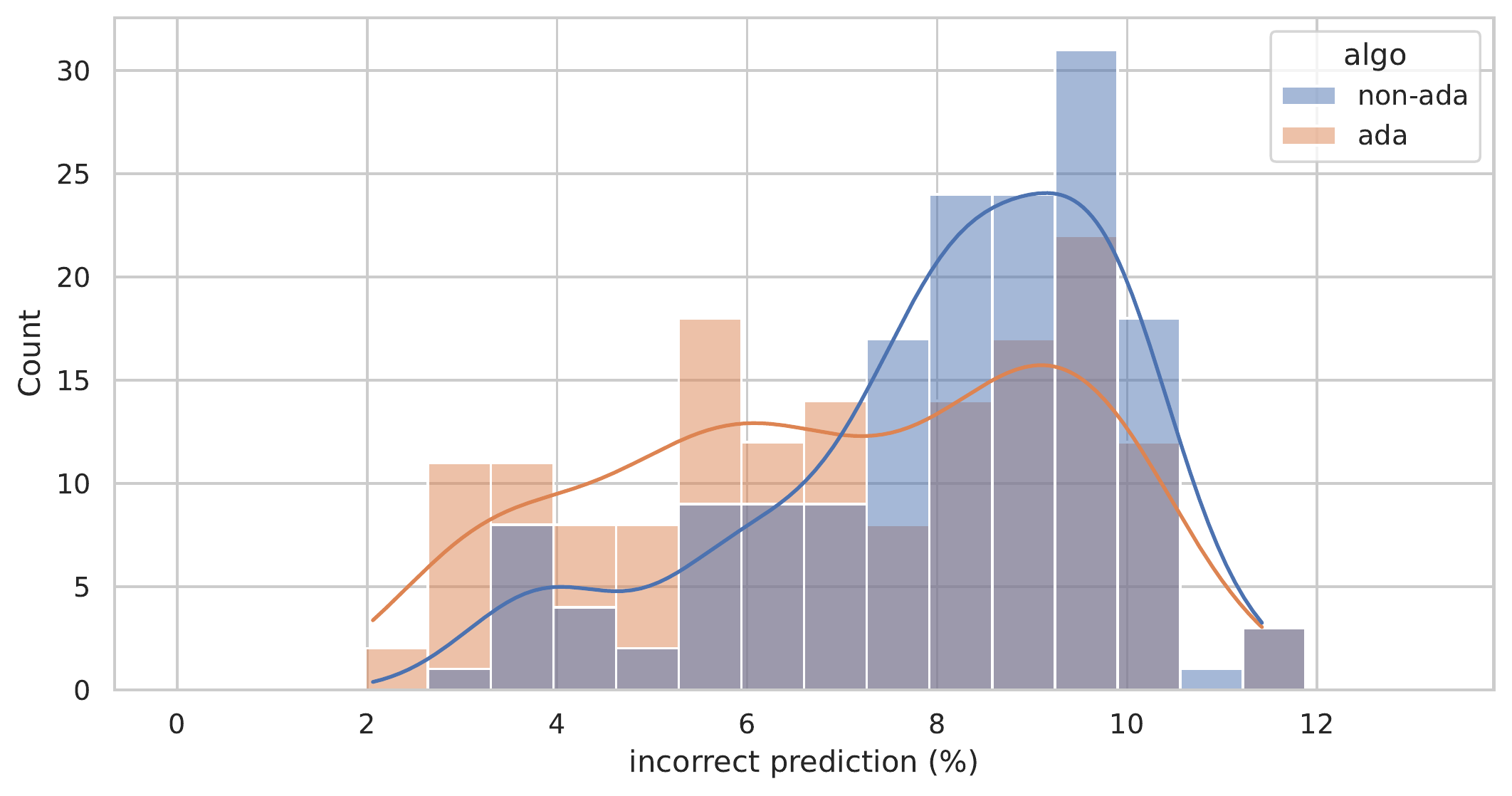}
    \includegraphics[scale=0.3]{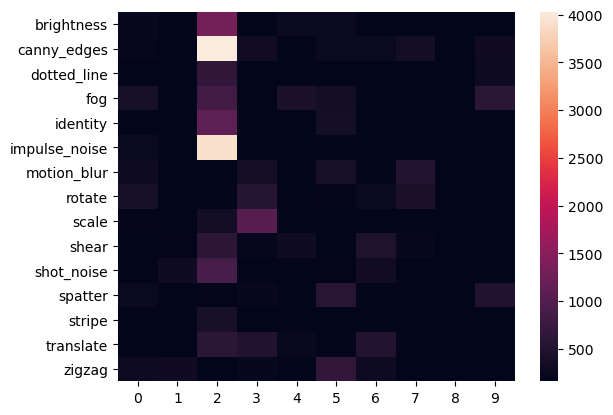}
    \vspace{-11px}
    \caption{
        \label{fig: linear summary}
    \textbf{Performance between the adaptive (ada) and the non-adaptive (non-ada) algorithm on linear representation.}
    {\small
    \textbf{Left}: The prediction difference (in \%)  between ada and non-ada for all target tasks. The larger is the better. Respectively y-axis denotes noise type and x-axis denotes binarized label, with each grid representing a target task, \textit{e.g.}, the grid at the top left corner stands for target task \textit{brightness\_0}.  In summary, the adaptive algorithm achieves $1.1 \%$ higher average accuracy than the non-adaptive one and results same or better accuracy in 136 out of 160 tasks. \textbf{Middle}: Histogram summary of incorrect prediction (left is better). There is a clear shift for adaptive algorithm towards left.  \textbf{Right:} Sampling distribution for the target task \textit{glass\_blur\_2}. Respectively, the plot shows numbers of samples from each source tasks at the beginning of epoch 3 by running adaptive algorithm. The samples clearly concentrated on several \textit{X\_2} source tasks, which meets our intuition that all "2 vs. others" tasks should has closer connection with the \textit{glass\_blur\_2} target task.}}
    \vspace{-10px}
\end{figure*}

\begin{figure*}[h]
    \centering
    \includegraphics[scale=0.38]{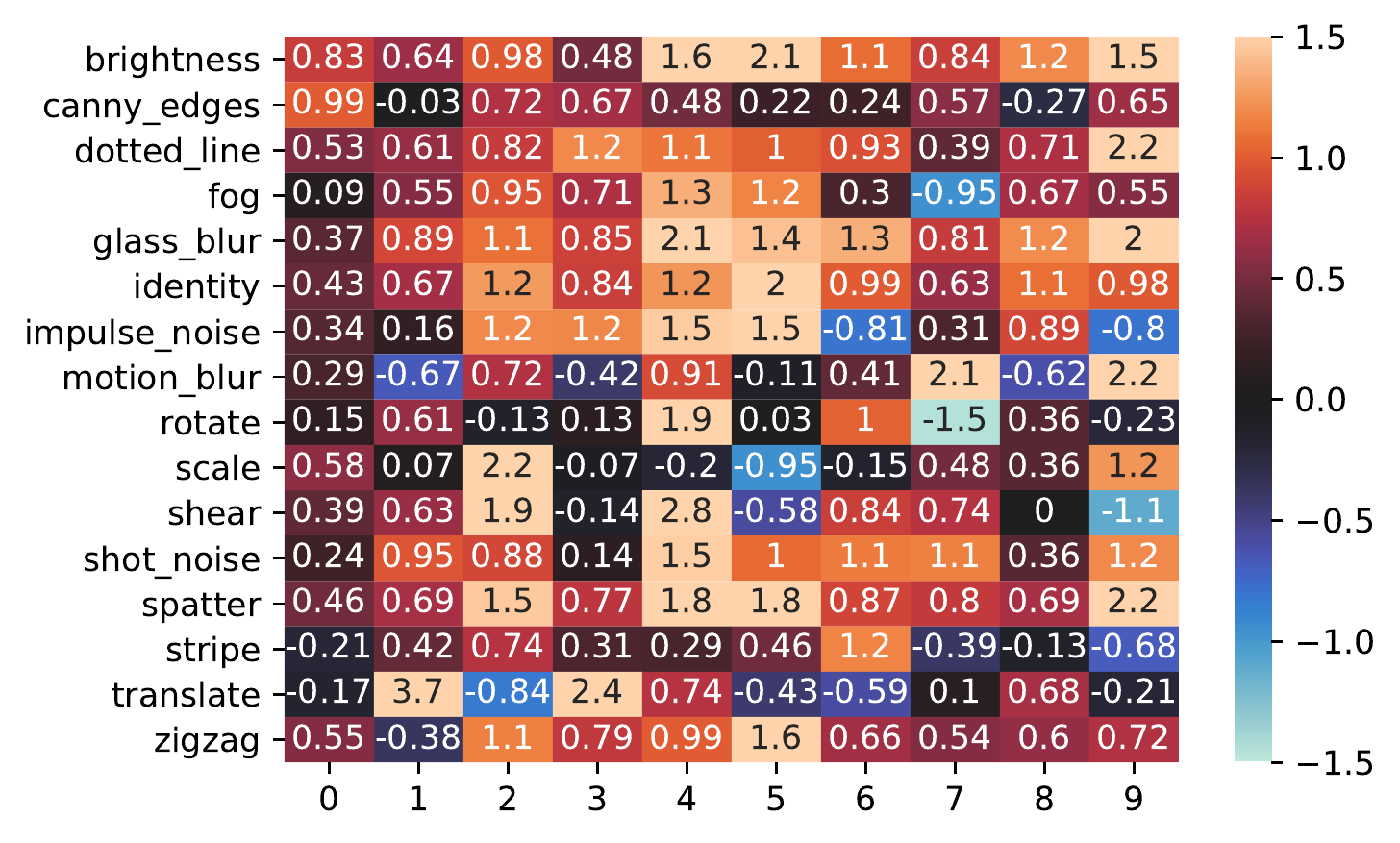}
    \includegraphics[scale=0.2]{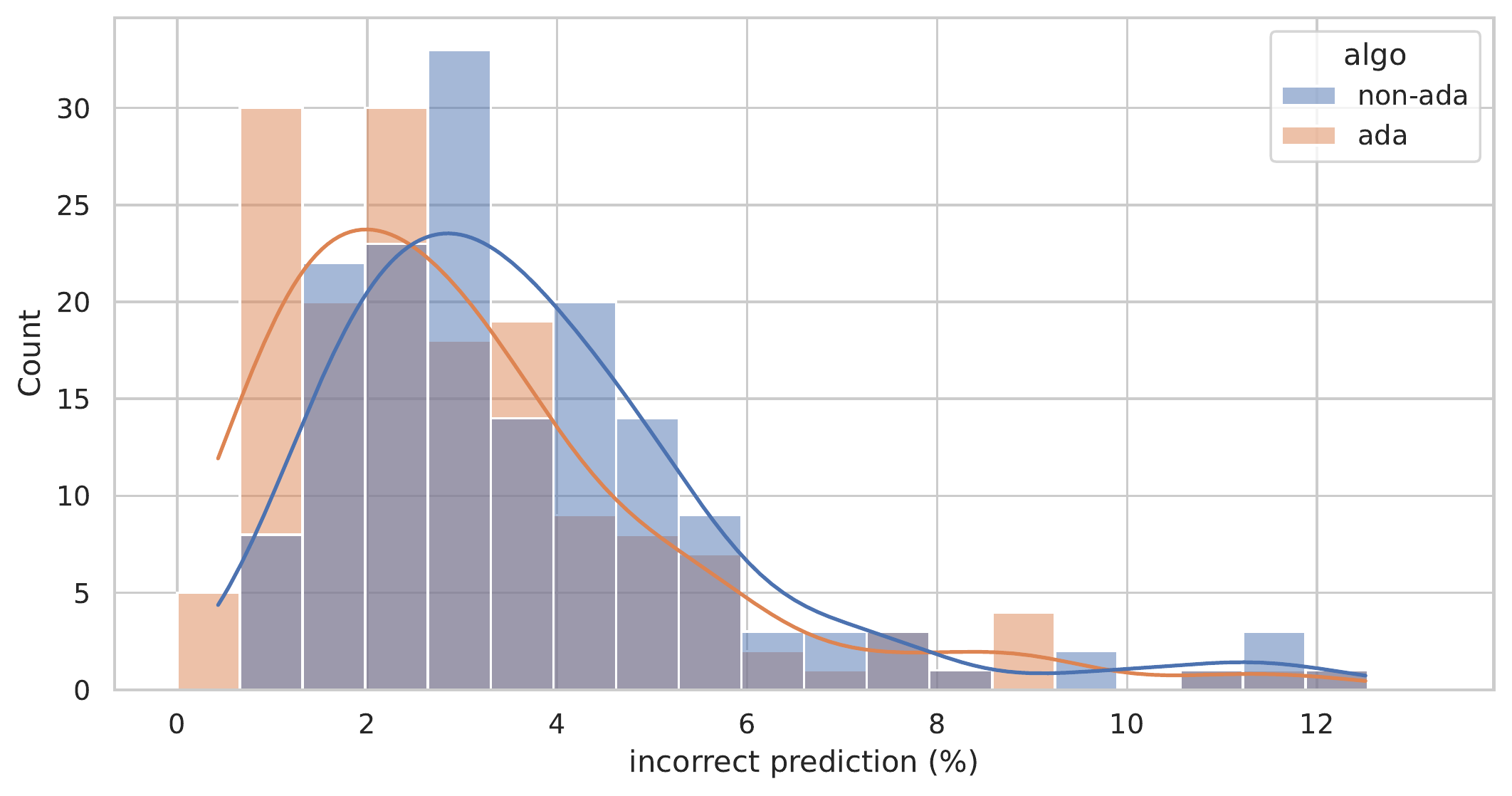}
    \includegraphics[scale=0.3]{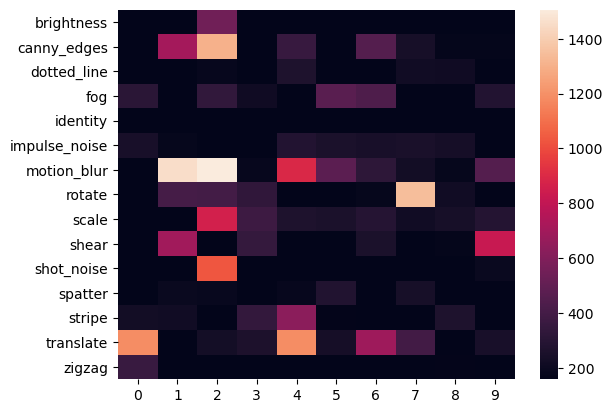}
    \vspace{-11px}
    \caption{
        \label{fig: cnn_summary}
    \textbf{Performance between the adaptive (ada) and the non-adaptive (non-ada) algorithm on Convnet.}
    {\small 
    \textbf{Left}:  The prediction difference (\%)  between ada and non-ada for all target tasks. (See more explanations for notations in Figure~\ref{fig: linear summary}.)
    In summary, the adaptive algorithm achieves $0.68\%$ higher average accuracy than the non-adaptive one and results same or better accuracy in 133 out of 160 tasks. 
    \textbf{Middle}: Histogram summary of incorrect prediction (left is better). There is a clear for adaptive algorithm towards left. Although the average performance improvement is smaller than in the linear representation, the relative improvement is also significant given the already good baseline performance (most prediction error are below $6\%$ while in linear most are above $6\%$). \textbf{Right: }Sample distribution for target task as \textit{glass\_blur\_2}. A large portion of samples again concentrate on several \textit{X\_2} source tasks, which meets our intuition that all "2 vs. others" tasks should has closer connection with \textit{glass\_blur\_2} target task. But the overall sample distribution is more spread compared to the one on linear representation. 
    }
    }
    \vspace{-10px}
\end{figure*}
\vspace{-5pt}

\section{Experiments}
\label{sec: experiment}

In this section, we empirically evaluate our active learning algorithm for multi-task by deriving tasks from the corrupted MNIST dataset (MNIST-C) proposed in \citet{mu2019mnist}.
% Notice that, the realizable assumptions we made in the theoretical part may not holds in practice. 
% Although the realizable assumption is unlikely to hold in real-worlds dataset, we still show some advantage of our model in those misspecified settings. 
While our theoretical results only hold for the linear representations, our experiments demonstrate the effectiveness our algorithm on neural network representations as well.
We show that our proposed algorithm:
(1) achieves better performance when using the same amount of source samples as the non-adaptive sampling algorithm, and
(2) gradually draws more samples on  important source tasks.  

\subsection{Experiment setup}

% Our main motivation here is to verify the following two questions in the high level:

% \begin{enumerate}
%     \item Will adaptive algorithm achieves better performance compared to the non-adaptive sampling algorithm ?
%     \item Will the samples concentrate on more important source tasks, although not all important source tasks can be guaranteed to be sampled
% \end{enumerate}

\paragraph{Dataset and problem setting.}
The MNIST-C dataset is a comprehensive suite of 16 different types of corruptions applied to the MNIST
test set. To create source and target tasks, we divide each sub-dataset with a specific corruption into 10 tasks by applying one-hot encoding to $0-9$ labels. Therefore, we have $160$ tasks in total, which we denote as "corruption type + label". For example, \textit{brightness\_0} denotes the data corrupted by brightness noise and are relabeled to $1/0$ based on whether the data is number 0 or not.  We choose a small number of fixed samples from the target task to mimic the scarcity of target task data. On the other hand, we set no budget limitation on source tasks. We compare the performance of our algorithm to the non-adaptive uniform sampling algorithm, where each is given the same number of source samples and same target task dataset. 

\vspace{-10pt}

\paragraph{Models.}
We start with the linear representation as defined in our theorem and set $B \in \fR^{28*28 \times 50}$ and $w_m^i \in \fR^{50}$. Note that although the MNIST problem is usually a classification problem with cross-entropy loss, here we model it as a regression problem with $\ell_2$ loss to align with the setting studied in this paper. Moreover, we also test our algorithm with 2-layer ReLU \textit{convolutional  neural nets} (CNN) followed by fully-connected linear layers, where all the source tasks share the same model except the last linear layer, also denoted as $w_m^i \in \fR^{50}$. 

\vspace{-10pt}

% \paragraph{AL algorithm implementation}
% We first implement the adaptive algorithm on linear models and convnet. Compared to the theoretical design, we show that in practice, the estimated $\hat{B}_i$ is not required orthonormal in the linear model case (and of course no requirements in convnet), which largely facilitate the optimization process. We also implement the non-adaptive algorithm as a comparison to the adaptive algorithm, where we use the same amount of total source data samples but uniformly draw them from each task. 

% We implement the adaptive algorithm on linear models and convnet along with the non-adaptive algorithm as a comparison, where we use the same amount of total source data samples but uniformly draw them from each task.

% There are some difference between our proposed algorithm and the actual implement one. First we also relax some parameters. Moreover, instead of drawing i.i.d sample for each epoch, we reuse the samples from previous epoch and only draw extra required samples from the current epoch. This cause some randomness in the total source sample usage. For example, we only require 100 samples from source task A for the current epoch, but we already sample 200 from source task A in the previous epoch. Therefore, in our result shown below, the total source sample numbers varies across target tasks. But we argue that this variation are roughly at the same level and will not effect our conclusion. Please refer to Appendix~\ref{subesc: implement details} for details.

\paragraph{AL algorithm implementation.}
We run our algorithm iteratively for 4 epochs. The non-adaptive uniform sampling algorithm is provided with the same amount of source samples.
There are some difference between our proposed algorithm and what is implemented here. 
First we re-scale some parameters from the theorem to account for potential looseness in our analysis. 
Moreover, instead of drawing fresh i.i.d samples for each epoch and discarding the past, in practice, we reuse the samples from previous epochs and only draw what is necessary to meet the required number of samples of the current epoch. 
This introduces some randomness in the total source sample usage. 
For example, we may only require $100$ samples from the source task A for the current epoch, but we may have sampled 200 from source task A in the previous epoch. So we always sample equal or less than non-adaptive algorithm in a single epoch.
Therefore, in our result shown below, the total source sample numbers varies across target tasks. But we argue that this variation are roughly at the same level and will not effect our conclusion. Please refer to Appendix~\ref{subesc: implement details} for details.

\vspace{-7pt}
\subsection{Results}

\paragraph{Linear representation.} We choose 500 target samples from each target task. After 4 epochs, we use in total around 30000 to 40000 source samples. As a result, our adaptive algorithm frequently outperforms the non-adaptive one as shown in Figure~\ref{fig: linear summary}. 

For those cases where gains are not observed, we conjecture that those tasks violate our realizable assumptions more than the others. We provide a more detailed discussion and supporting results for those failure cases in Appendix~\ref{subsec: bad examples on linear (app)}.
Next, we investigate the sample distribution at the beginning epoch 3. We show the result for \textit{glass\_blur\_2} as a representative case with more examples in the Appendix~\ref{subsec: good example linear}. From the figure, we can clearly see the samples concentrate on more target-related source tasks.

\vspace{-10pt}

\paragraph{Convnet.} We choose 200 target samples from each target task. After 4 epochs, we use in total around 30000 to 40000 source samples. As a result, our adaptive algorithm again frequently outperforms the non-adaptive one as shown in Figure~\ref{fig: linear summary}.
Next, we again investigate the sample distribution at the beginning epoch 3 and show a representative result (more examples in Appendix~\ref{subsec: good example cnn}). First of all, there are still a number  of source samples again concentrating on "2 vs. others". 
The reader may notice some other source tasks also contribute a relatively large amount of sample complexity. This is actually a typical phenomenon in our experiment on convnets, which is seldom observed in the linear representation. This might be due to the more expressive power of CNN that captures some non-intuitive relationships between some source tasks and the target task. Or it might be simply due to the estimation error since our algorithm is theoretically justified only for the realizable linear representation. We provide more discussion in Appendix~\ref{subsec: bad examples on cnn (app)}.

\section{Conclusion and future work}
Our paper takes an important initial step to bring techniques from active learning to representation learning. There are many future directions. From the theoretical perspective, it is natural to analyze some fine-tuned models or even more general models like neural nets as we mentioned in the related work section. From the empirical perspective, our next step is to modify and apply the algorithm on more complicated CV or NLP datasets and further analyze its performance.

% It is also interesting to develop a more practical algorithm based on more rich structure information on those real world task.

% \newpage
\bibliography{ref}
\bibliographystyle{icml2022}

%%%%%%%%%%%%%%%%%%%%%%%%%%%%%%%%%%%%%%%%%%%%%%%%%%%%%%%%%%%%%%%%%%%%%%%%%%%%%%%
%%%%%%%%%%%%%%%%%%%%%%%%%%%%%%%%%%%%%%%%%%%%%%%%%%%%%%%%%%%%%%%%%%%%%%%%%%%%%%%
% APPENDIX
%%%%%%%%%%%%%%%%%%%%%%%%%%%%%%%%%%%%%%%%%%%%%%%%%%%%%%%%%%%%%%%%%%%%%%%%%%%%%%%
%%%%%%%%%%%%%%%%%%%%%%%%%%%%%%%%%%%%%%%%%%%%%%%%%%%%%%%%%%%%%%%%%%%%%%%%%%%%%%%
\newpage

\appendix
\onecolumn

% \tableofcontents

\section{Appendix structure}
In Appendix~\ref{sec: Notation(app)}, we define the commonly used notations in the following analysis. In Appendix~\ref{sec: modified claims (app)}, we define three high probability events and prove three claims as variants of original results in \citet{du2020few}. All these events and claims are widely used in the following theoretical analysis. Then we give formal proofs of Theorem~\ref{thm: warm-up} and Theorem~\ref{thm: compare to uniform} in Appendix~\ref{sec: warmup(app)} and formal proofs of Theorem~\ref{thm: main} in Appendix~\ref{sec: main analysis(app)}. Finally, we show more comprehensive results of experiment in Appendix~\ref{sec: experiment(app)}. 

\section{Notation}
\label{sec: Notation(app)}
\begin{itemize}
    \item Define $\hDelta = \hB\hW - B^*W^*$ and correspondingly define $\hDelta^i = \hB_i\hW_i - B^*W^*$ if the algorithm is divided into epochs.
    \item Define $\hDelta_m = \hB\hw_m - B^*w_m^*$ and correspondingly define $\hDelta_m^i = \hB_i\hw_m^i - B^*w_m^*$ .
    \item Restate $P_{A}^\perp=I - A\left(A^{\top} A\right)^{\dagger} A^{\top}$
    \item Restate that $\hSigma_m = (X_m)^\top X_m$ and correspondingly $\hSigma_m^i = (X_m^i)^\top X_m^i$ if the algorithm is divided into epochs.
    \item Restate that $\Sigma_m = \E\left[\hSigma_m\right]$ and correspondingly $\Sigma_m^i =  \E\left[\hSigma_m^i\right]$.
    \item Define $\kappa = \frac{\lambda_{max}(\Sigma)}{\lambda_{min}(\Sigma)}$, recall we assume all $\Sigma_m = \Sigma$. Note that in the analysis for adaptive algorithm, we assume identity covariance so $\kappa = 1$.
    \item For convenience, we write $\sum_{m=1}^M$ as $\sum_m$.
    \item If the algorithm is divided into epochs, we denote the total number of epoch as $\Gamma$.
\end{itemize}

\section{Commonly used claims and definitions}

\subsection{Co-variance concentration guarantees}
We define the following guarantees on the feature covariance concentration that has been used in all proofs below.
% \begin{align*}
%     & \calE_\text{all} = \{ 0.9 \Sigma_m \leq \hSigma_m \leq 1.1 \Sigma_m, \forall m \in [M] \} \\
%     & \calE_\text{target} = \{ 0.9 B^\top\Sigma_{M+1} B \leq B^\top \hSigma_{M+1} \leq 1.1 B^\top\Sigma_{M+1} B, \forall m \in [M] \}
% \end{align*}
% By Claim A.1 and A.2 in \citet{du2020few}, we know that, as long as 
% $n_m \gg d + \log(M/\delta), \forall m \in [M+1]$ and $n_{M+1} \gg $ 
\begin{align*}
    & \calE_\text{source} = \{ 0.9 \Sigma_m \leq \hSigma_m \leq 1.1 \Sigma_m, \forall m \in [M] \} \\
    & \calE_\text{target1} = \{ 0.9  B_1^\top B_2 \leq B_1^\top \hSigma_{M+1} B_2 \leq 1.1  B_1^\top B_2, \text{for any orthonormal} B_1,B_2 \in  \fR^{d \times K} \mid \Sigma_{M+1} = I\}\\
    & \calE_\text{target2} = \{ 0.9 \Sigma_{M+1} \leq B^\top \hSigma_{M+1} B \leq 1.1 \Sigma_{M+1}, \text{for any } B \in  \fR^{d \times K}\}
\end{align*}
By Claim A.1 in \citet{du2020few}, we know that, as long as 
$n_m \gg d + \log(M/\delta), \forall m \in [M+1]$
\begin{align*}
    &\prob\left(\calE_\text{source}^i \right) \geq 1-\frac{\delta}{10},
\end{align*}
Moreover, as long as $n_m \gg K + \log(1/\delta)$, 
\begin{align*}
 & \prob(\calE_\text{target1}) \geq 1-\frac{\delta}{10}
 & \prob(\calE_\text{target2}) \geq 1-\frac{\delta}{10}.
\end{align*}

Correspondingly, if the algorithm is divided into epochs where each epoch we draw new set of data, then we define
\begin{align*}
    \calE_\text{source}^i = \{ 0.9 \Sigma_m \leq \hSigma_m^i \leq 1.1 \Sigma_m, \forall m \in [M] \} 
\end{align*}
Again, as long as $n_m^i \gg d + \log(M\Gamma/\delta), \forall m \in [M], \forall i \in [\Gamma]$, we have
\begin{align*}
    \prob\left(\bigcup_{i \in [\Gamma]}\calE_\text{source}^i \right) \geq 1-\frac{\delta}{5}
\end{align*}

Notice $\calE_\text{target1}$ will only be used in analyze the main active learning Algorithm~\ref{alg:main_1} while the other two is used in both Algorithm~\ref{alg:known} and \ref{alg:main_1}.

\subsection{Claims guarantees for unequal sample numbers from source tasks}
\label{sec: modified claims (app)}

Here we restate two claims and one result (not written in claim) from \cite{du2020few}, and prove that they still hold when the number of samples drawn from each of the source tasks are not equal, as long as the general low-dimension linear representation is satisfied as defined in Definition~\ref{assump: linear model}.  No benign setting like Definition~\ref{assump: identity covar} is required.

\begin{algorithm}[h] 
\caption{General sample procedure}
\begin{algorithmic}[1] 
\STATE For each task $m$, draw $n_m$ i.i.d samples from the corresponding offline dataset denoted as $\{X_m,Y_m\}_{m=1}^M$
\STATE Estimate the models as
\begin{align*}
&\hat{\phi}, \hat{W} = \argmin_{\phi \in \Phi,\hat{W}=[\hat{w}_1,\hat{w}_2,\ldots]} \sum_{m=1}^M \| \phi(X_m) \hat{w}_m - Y_m\|^2\\
&\hat{w}_{M+1} = \argmin_{w} \|\hat{\phi}(X_{M+1}^i) w - Y_{M+1}\|^2 
\end{align*}
\end{algorithmic}
\end{algorithm} 

Specidically, consider the above procedure, we show that the following holds for any $\{n_m\}_{m=1}^M$.

\begin{claim}[Modified version of Claim A.3 in \cite{du2020few}] 
\label{claim 1}
Given $\calE_\text{source}$, with probability at least $1-\delta/10$,
\begin{align*}
    \sum_m \|X_m \hDelta_m \|_2^2
    \leq \sigma^2 \left(KM + Kd\log( \kappa (\sum_m n_m)/M) + \log(1/\delta) \right)
\end{align*}
\end{claim}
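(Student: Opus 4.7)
The plan is to follow the standard excess-risk analysis for multi-task ERM via a basic inequality plus a covering argument, adapted to allow unequal per-task sample sizes $n_m$. The starting point is the basic inequality: by optimality of $(\hB,\hW)$ for the joint source objective, $\sum_m \|X_m\hB\hw_m - Y_m\|^2 \le \sum_m \|X_m B^* w_m^* - Y_m\|^2$. Substituting $Y_m = X_m B^* w_m^* + Z_m$, canceling the common $\|Z_m\|^2$ terms, and rearranging yields the self-bounded form
\begin{equation*}
\sum_m \|X_m \hDelta_m\|^2 \;\le\; 2\sum_m \langle Z_m,\, X_m \hDelta_m\rangle.
\end{equation*}

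Next I would bound the right-hand side by a uniform-deviation argument that exploits the low-rank structure of $\Delta := \hB\hW - B^*W^* \in \R^{d\times M}$: since both $\hB\hW$ and $B^*W^*$ have rank at most $K$, $\Delta$ has rank at most $2K$. Stacking $Z = (Z_1;\ldots;Z_M)$ and $X = \bdiag(X_1,\ldots,X_M)$, the right-hand side equals $\langle Z,\, X\mvec(\Delta)\rangle$, which for a fixed $\Delta$ with $\sum_m \|X_m\Delta_m\|^2 \le r^2$ is sub-Gaussian with parameter $\sigma r$. I would then put an $\varepsilon$-net on the set of rank-$2K$ matrices via the SVD parameterization $\Delta = U V^{\top}$ with $U\in \R^{d\times 2K}$, $V \in \R^{M \times 2K}$; the resulting log-covering-number is $O(K(d+M)\log(1/\varepsilon))$. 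Union-bounding the Gaussian tail over this net, peeling over dyadic scales of $r$, and feeding the outcome back into the basic inequality gives
\begin{equation*}
\sum_m \|X_m \hDelta_m\|^2 \;\lesssim\; \sigma^2\!\left(KM + Kd\,\log\!\bigl(\tfrac{\kappa \sum_m n_m}{M}\bigr) + \log(1/\delta)\right).
\end{equation*}

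The main obstacle relative to the equal-$n_m$ case of \citet{du2020few} is bookkeeping in the covering step: the natural Frobenius norm on $\Delta$ does not match the empirical norm $\sum_m \|X_m\Delta_m\|^2$ controlled by the basic inequality once the $n_m$ differ. To handle this I would reparameterize $\tilde\Delta_m := \sqrt{n_m}\,\Delta_m$ so that, on $\calE_\text{source}$, $\sum_m \|X_m\Delta_m\|^2$ is within constants of $\|\tilde\Delta\|_F^2 \cdot \lambda_{\min}(\Sigma)$ from below and $\|\tilde\Delta\|_F^2 \cdot \lambda_{\max}(\Sigma)$ from above; since the rescaling is applied columnwise it preserves the rank-$2K$ structure, so the $O(K(d+M))$ covering bound still applies. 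The factor $\log(\kappa\sum_m n_m/M)$ then arises precisely as the logarithm of the ratio of the coarsest scale $r_{\max}^2$ (set by $\max_m n_m \lambda_{\max}$ times an a priori bound on $\|\tilde\Delta\|_F$) to the finest resolution $r_{\min}^2$ (set by $\lambda_{\min}$ and the average $\sum_m n_m/M$, below which the target bound is trivial), with the peeling step absorbing all intermediate scales. Modulo this reweighting the proof is identical to Claim~A.3 of \citet{du2020few} with the substitution $n \mapsto \sum_m n_m/M$ in the logarithmic factor.
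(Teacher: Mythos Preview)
Your skeleton---basic inequality, rank-$2K$ structure of $\hDelta$, covering plus self-bounding---is exactly what the paper does, and your columnwise rescaling $\tilde\Delta_m=\sqrt{n_m}\,\hDelta_m$ is a perfectly valid way to absorb the unequal sample sizes (since column scaling preserves rank the covering dimension is unchanged). The paper's handling of the unequal $n_m$ is equivalent but phrased differently: it keeps the $n_m$ explicit throughout and only uses $\calE_\text{source}$ at the discretization step to convert $\sum_m n_m\|r_m\|_2^2$ back into $\sum_m\|X_m\hDelta_m\|_2^2$ (picking up the factor $\kappa$), rather than reparameterizing up front. The one substantive technical difference is in \emph{what} gets covered. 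You net on the full rank-$2K$ manifold via both SVD factors, incurring log-covering $O\bigl(K(d+M)\log(1/\varepsilon)\bigr)$; this yields the stated bound except with the $KM$ term also multiplied by a log. The paper instead nets only on the orthonormal column-space factor $V\in\mathcal{O}_{d,2K}$ (log-covering $O(Kd\log(1/\varepsilon))$) and, for each fixed net point $\bar V$, handles the coefficients $r_m$ by the chi-squared tail of $\sum_m\|U_m^\top Z_m\|_2^2$ where $X_m\bar V=U_mQ_m$; this is what produces the clean $KM$ without a logarithm. Both routes are correct; the paper's buys the slightly sharper constant on $KM$, while yours is arguably more transparent about where the $n_m$ go.
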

\begin{proof}
We follow nearly the same steps as the proof in \cite{du2020few}, so some details are skipped and we only focus on the main steps that require modification. 
Also we directly borrow some notations including $\overline{V}, \calN,r$ from the original proof and will restate some of them here for clarity. 

\paragraph{Notation restatement} Since $\operatorname{rank}(\hDelta) \leq 2 K$, we can write $\Delta=V R=\left[V \boldsymbol{r}_{1}, \cdots, V \boldsymbol{r}_{M}\right]$ where $V \in \mathcal{O}_{d, 2 K}$ and $R=\left[\boldsymbol{r}_{1}, \cdots, \boldsymbol{r}_{M}\right] \in \mathbb{R}^{2 K \times M} .$ Here $\mathcal{O}_{d_{1}, d_{2}}\left(d_{1} \geq d_{2}\right)$ is the set of orthonormal $d_{1} \times d_{2}$ matrices (i.e., the columns are orthonormal). For each $m \in[M]$ we further write $X_{m} V=U_{m} Q_{m}$ where $U_{m} \in \mathcal{O}_{n_{1}, 2 K}$ and $Q_{m} \in \mathbb{R}^{2 K \times 2 K}$. To cover all possible $V$, we use an $\epsilon$-net argument, that is, there exists any fixed $\overline{V} \in \mathcal{O}_{d, 2 K}$ and  there exists an $\epsilon$-net $\mathcal{N}_\epsilon$ of $\mathcal{O}_{d, 2 K}$ in Frobenius norm such that $\mathcal{N} \subset \mathcal{O}_{d, 2 K}$ and $|\mathcal{N}_\epsilon| \leq\left(\frac{6 \sqrt{2 K}}{\epsilon}\right)^{2 K d}$.  (Please refer to original proof for why such $\epsilon$-net exists)

Now we briefly state the proofs.

\textbf{Step 1:} 
\begin{align*}
    \sum_m \|X_m(\hat{B}\hat{w}_m - B^*w_m^*) \|_2^2
    \leq \sum_{m=1}^M \langle Z_m, X_m\overline{V}_m r_m \rangle
        + \sum_{m=1}^M \langle Z_m, X_m(V-\overline{V}_m) r_m \rangle
\end{align*}
This comes from the first three lines of step 4 in original proof.
For the first term, with probability $1-\delta/10$ , by using standard tail bound for $\chi^{2}$ random variables and the $\epsilon$-net argument (details in eqn.(28) in original proof), we have it upper bounded by 
\begin{align*}
 \sigma \sqrt{KM + \log(|\calN_\epsilon|/\delta)}\sqrt{\sum_m^M \|X_m Vr_m\|_2^2} + \sigma \sqrt{KM + \log(|\calN_\epsilon|/\delta)}\sqrt{ \sum_m^M \|X_m (\overline{V}-V) r_m\|_2^2}
\end{align*}

And for the second term, since $\sigma^{-2}\sum_{m}\|Z_m\|^{2} \sim \chi^{2}\left(\sum_{m=1}^M n_m \right)$, again by using standard tail bound (details in eqn.(29) in original proof), we have that with high probability $1-\delta/20$, it is upper bounded by 
\begin{align*}
    \sum_{m=1}^M \langle Z_m, X_m(V-\overline{V}_m) r_m \rangle
    \lessapprox \sigma \sqrt{\sum_{m=1}^M n_m + \log(1/\delta)}
      \sqrt{\sum_m \| X_m(V-\overline{V}_m) r_m\|_2^2} 
\end{align*}

\textbf{Step 2: } Now we can further bound the term $\sqrt{\sum_m \| X_m(V-\overline{V}_m) r_m\|_2^2}$ by showing
\begin{align*}
    \sum_m \| X_m(V-\overline{V}_m) r_m\|_2^2
    & \leq \sum_m \|X_m\|_F^2 \|V-\overline{V}\|_F^2 \|r_m\|_2^2 \\
    & \leq 1.1 \overline{\lambda} \sum_m n_m \|V-\overline{V}\|_F^2 \|r_m\|_2^2 \\
    &\leq 1.1 \overline{\lambda}\epsilon^2 \sum_m n_m \|r_m\|_2^2 \\
    & \leq 1.1 \overline{\lambda} \epsilon^2 \sum_m n_m \|\hDelta_m\|_2^2 \\
    & \leq  1.1 \kappa \epsilon^2 \sum_m \|X_m (\hDelta_m)\|_2^2
\end{align*}
Note this proof is the combination of step 2 and step 3 in the original proof. The only difference here is $n_m$ is different for each $m$ so you need to be more careful on those upper and lower bounds.

\textbf{Step 3:} Finally, we again use the self-bounding techniques. Recall that we have
\begin{align*}
    \sqrt{\sum_m \|X_m (\hDelta_m)\|_2^2}
    & \leq \sqrt{\sum_m^M \|Z_m\|_2^2} \sqrt{\sum_m^M \|X_m (\hDelta_m)\|_2^2}
\end{align*}
By rearranging the inequality and the distribution of $Z_m$, we have
\begin{align*}
    \sum_m \|X_m (\hDelta_m)\|_2^2 \leq  \sigma^2 \left(\sum_{m=1}^M n_m + \log(1/\delta)\right)
\end{align*}

\textbf{Step 4: } Now replace these into the inequality in step 1, we have
\begin{align*}
    \sum_m \|X_m\hDelta_m \|_2^2
    \leq \sigma \sqrt{KM + \log(|\calN_\epsilon|/\delta)}\sqrt{\sum_m \|X_m\hDelta_m \|_2^2} + 2\epsilon\sigma^2\left( \sum_m n_m + \log(1/\delta)\right)
\end{align*}
Then rearrange the inequality and choose proper $\epsilon$, we get the result.

\end{proof}

\begin{claim}[Modified version of Claim A.4 in \cite{du2020few}]
\label{claim 2}
Given $\calE_\text{source}$ and $\calE_\text{target2}$, with probability at least $1-\delta/10$,
\begin{align*}
    \|P_{X_{M+1}\hat{B}}^{\bot}X_{M+1}B^* \widetilde{W}^*\|_F^2
    \leq 1.3 n_{M+1}\sigma^2 \left(KT + Kd\log( (\kappa \sum_m n_m)/M) + \log\frac{1}{\delta} \right)
\end{align*}
where $\widetilde{W}^* = W^* \sqrt{\text{diag}([n_1,n_2, \ldots,n_M])}$
\end{claim}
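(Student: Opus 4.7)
The plan is to mirror the proof of Claim~A.4 in \citet{du2020few}, carefully carrying the unequal sample counts $n_m$ through each step and then invoking Claim~\ref{claim 1} at the end. Because $\widetilde{W}^* = W^* \sqrt{\diag([n_1,\ldots,n_M])}$, the Frobenius norm splits column by column as $\sum_m n_m\, \|P_{X_{M+1}\hB}^{\bot}X_{M+1}B^* w_m^*\|_2^2$. For each $m$, since $X_{M+1}\hB\hw_m$ lies in the column span of $X_{M+1}\hB$, the projection inequality gives
\[
\|P_{X_{M+1}\hB}^{\bot}X_{M+1}B^* w_m^*\|_2^2 \leq \|X_{M+1}(B^*w_m^* - \hB\hw_m)\|_2^2 = \|X_{M+1}\hDelta_m\|_2^2.
\]

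Next, I would transfer from $X_{M+1}$ to the common population covariance $\Sigma = \E[xx^\top]$ using $\calE_\text{target2}$, and then from $\Sigma$ back to the source design $X_m$ using $\calE_\text{source}$. Since $\hDelta_m = \hB\hw_m - B^*w_m^*$ lies in the at most $2K$-dimensional column span of $[\hB, B^*]$, a $K \mapsto 2K$ extension of $\calE_\text{target2}$, obtained via the same sub-Gaussian $\epsilon$-net argument used to establish $\calE_\text{target2}$ itself, yields $\|X_{M+1}\hDelta_m\|_2^2 \leq 1.1\, n_{M+1}\, \hDelta_m^\top \Sigma\, \hDelta_m$, while $\calE_\text{source}$ gives $\hDelta_m^\top \Sigma\, \hDelta_m \leq \tfrac{1}{0.9\, n_m}\, \|X_m \hDelta_m\|_2^2$. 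Combining yields the per-task bound $n_m \|X_{M+1}\hDelta_m\|_2^2 \leq \tfrac{1.1}{0.9}\, n_{M+1}\, \|X_m \hDelta_m\|_2^2$.

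Summing over $m$ and plugging in Claim~\ref{claim 1} (valid with probability $1-\delta/10$ under $\calE_\text{source}$) gives
\[
\|P_{X_{M+1}\hB}^{\bot}X_{M+1}B^* \widetilde{W}^*\|_F^2 \leq \tfrac{1.1}{0.9}\, n_{M+1}\, \sigma^2\bigl(KM + Kd\log(\kappa \sum_m n_m / M) + \log(1/\delta)\bigr),
\]
and $1.1/0.9 < 1.3$ absorbs into the advertised constant (treating the ``$KT$'' in the claim statement as $KM$). The only non-routine step is the $K \mapsto 2K$ extension of $\calE_\text{target2}$, which I expect to be the main obstacle in the sense of making the event strong enough to cover the span of $[\hB, B^*]$; this just amounts to redoing the original sub-Gaussian covariance concentration argument with a slightly larger net, and everything else is mechanical propagation of the $n_m$- and $n_{M+1}$-weights through the source and target concentration inequalities.
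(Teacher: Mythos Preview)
Your argument is correct and uses the same ingredients as the paper's proof: the projection inequality, the transfer between empirical and population covariances via $\calE_\text{source}$ and $\calE_\text{target2}$, and then Claim~\ref{claim 1}. The one structural difference is where the projection step is taken. The paper applies it on the \emph{source} side, writing $\sum_m\|X_m\hDelta_m\|_2^2 \geq \sum_m\|P_{X_m\hB}^\perp X_m B^*w_m^*\|_2^2$ and then carrying the projection operator through the $\hSigma_m\to\Sigma\to\hSigma_{M+1}$ transfer (deferring to \citet{du2020few} for how the projection interacts with covariance concentration). You instead apply it on the \emph{target} side and immediately drop the projection via $\|P_{X_{M+1}\hB}^\perp X_{M+1}B^*w_m^*\|_2^2 \leq \|X_{M+1}\hDelta_m\|_2^2$, after which only the bare vectors $\hDelta_m$ are passed through the concentration events. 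This ordering is slightly cleaner and more self-contained, since it avoids having to argue about how $P_{X\hB}^\perp$ changes when $X^\top X$ is replaced by its population counterpart. The $K\mapsto 2K$ extension of $\calE_\text{target2}$ you flag is indeed routine and is implicitly required in the paper's route as well (both $\hB$ and $B^*$ appear in the projected quantity there too).
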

\begin{proof}
    The proof is almost the same as the first part of the proof except we don't need to extract $n_m$ out.
    \begin{align*}
        \sum_{m} \|X_m(\hat{B}\hw_m - B^*w_m^*) \|_2^2
        &\geq \sum_{m} \|P_{X_m\hat{B}}^{\bot}X_m B^*w_m^*\|^2 \\
        &\geq 0.9 \sum_{m} n_m \|P_{\Sigma_{m}\hat{B}}^{\bot}\Sigma_{m}B^*w_m^*\|^2 \\
        & = 0.9 \sum_{m} n_m \|P_{\Sigma_{M+1}\hat{B}}^{\bot}\Sigma_{M+1}B^*w_m^*\|^2\\
        &= 0.9 \|P_{\Sigma_{M+1}\hat{B}}^{\bot}\Sigma_{M+1}B^* \Tilde{W}^*\|^2\\
        &\geq \frac{0.9}{1.1} \frac{1}{n_{M+1}}\|P_{X_{M+1}\hat{B}}^{\bot}X_{M+1}B^* \widetilde{W}^*\|_F^2
    \end{align*}
where the first and second inequality is the same as original proof.  The third equation comes from our assumption that all $\Sigma_m$ are the same and the forth equation is just another form of the above term. The last inequality comes from the same reason as second equality, which can again be found in original proof. 

Now by using Claim~\ref{claim 1} as an upper bound, we get our desired result.

Basically, this claim is just another way to write Claim A.4 in \cite{du2020few}. Here we combined the $n_m$ with $W^*$ and in original proof they extract $n_m$ can lower bound $W^*$ with its minimum singular value since in their case all $n_m$ are the same. 
\end{proof}

\begin{claim}
\label{claim: upper bound of er}
Given $\calE_\text{source}$ and $\calE_\text{target2}$, with probability at least $1-\delta/5$,
\begin{align*}
\operatorname{ER}\left(\hat{B}, \hat{\boldsymbol{w}}_{M+1}\right) 
&\leq\frac{1}{n_{M+1}}\left\|P_{X_{M+1} \hat{B}}^{\perp} X_{M+1} B^{*} \boldsymbol{w}_{M+1}^{*}\right\|_{F}^{2}+\sigma^2\frac{K + \log(1/\delta)}{n_{M+1}} .
\end{align*}
\end{claim}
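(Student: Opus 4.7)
\textbf{Proof plan for Claim~\ref{claim: upper bound of er}.}

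The plan is the standard excess-risk decomposition for a learned linear predictor on top of a fixed representation, split into a bias term (the projection error of the true signal onto the learned subspace) and a variance term (the noise captured by the projection). I would carry out four steps.

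\textbf{Step 1 (closed form and orthogonal split).} First I would write the closed-form target predictor. Since $\hat{w}_{M+1} = \argmin_w \|X_{M+1}\hat{B}w - Y_{M+1}\|^2$, the fitted values satisfy $X_{M+1}\hat{B}\hat{w}_{M+1} = P_{X_{M+1}\hat{B}}\,Y_{M+1}$. Using $Y_{M+1} = X_{M+1} B^* w_{M+1}^* + Z_{M+1}$ this gives
\begin{align*}
X_{M+1}\bigl(\hat{B}\hat{w}_{M+1} - B^* w_{M+1}^*\bigr)
 = -P_{X_{M+1}\hat{B}}^{\perp}\,X_{M+1} B^* w_{M+1}^* + P_{X_{M+1}\hat{B}}\,Z_{M+1},
\end{align*}
and the two summands are orthogonal (one lies in $\operatorname{range}(X_{M+1}\hat{B})$, the other in its orthogonal complement), so by Pythagoras
\begin{align*}
\tfrac{1}{n_{M+1}}\|X_{M+1}(\hat{B}\hat{w}_{M+1} - B^* w_{M+1}^*)\|_2^2
 = \tfrac{1}{n_{M+1}}\|P_{X_{M+1}\hat{B}}^{\perp} X_{M+1} B^* w_{M+1}^*\|_2^2
 + \tfrac{1}{n_{M+1}}\|P_{X_{M+1}\hat{B}} Z_{M+1}\|_2^2.
\end{align*}

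\textbf{Step 2 (bound the noise term).} The representation $\hat{B}$ is trained on the source data only (Eqn.~\eqref{eqn: ERM 1}), hence is independent of $Z_{M+1}$. Conditioning on $(X_{M+1},\hat{B})$, the projector $P_{X_{M+1}\hat{B}}$ is a deterministic rank-at-most-$K$ orthogonal projection, and $Z_{M+1}\sim\calN(0,\sigma^2 I)$, so $\|P_{X_{M+1}\hat{B}} Z_{M+1}\|_2^2 / \sigma^2$ is stochastically dominated by a $\chi^2_K$ random variable. A standard Laurent--Massart tail gives, with probability at least $1-\delta/10$,
\begin{align*}
\|P_{X_{M+1}\hat{B}} Z_{M+1}\|_2^2 \;\lesssim\; \sigma^2\bigl(K + \log(1/\delta)\bigr).
\end{align*}
Marginalizing over $(X_{M+1},\hat{B})$ preserves the same bound.

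\textbf{Step 3 (empirical-to-population on a low-dim subspace).} Let $v := \hat{B}\hat{w}_{M+1} - B^* w_{M+1}^*$. Then $v$ lives in $\operatorname{span}(\hat B)\cup\operatorname{span}(B^*)$, which is at most $2K$-dimensional, so we can write $v = \bar{B}\alpha$ for some orthonormal $\bar{B}\in\R^{d\times 2K}$ (depending on the data) and some $\alpha\in\R^{2K}$. Under $\calE_\text{target2}$ applied to $\bar{B}$ (or, equivalently, the one-sided lower bound used there), $v^\top\hat{\Sigma}_{M+1}v \geq 0.9\, v^\top\Sigma_{M+1}v$, so
\begin{align*}
\operatorname{ER}(\hat{B},\hat{w}_{M+1}) = v^\top\Sigma_{M+1}v
 \;\leq\; \tfrac{1}{0.9}\, v^\top\hat{\Sigma}_{M+1}v
 = \tfrac{1}{0.9\, n_{M+1}}\|X_{M+1} v\|_2^2.
\end{align*}

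\textbf{Step 4 (assemble).} Combining the decomposition from Step~1, the noise bound from Step~2, and the empirical-to-population conversion from Step~3, we obtain
\begin{align*}
\operatorname{ER}(\hat{B},\hat{w}_{M+1})
 \;\leq\; \tfrac{1}{0.9\,n_{M+1}}\|P_{X_{M+1}\hat{B}}^{\perp} X_{M+1} B^* w_{M+1}^*\|_2^2
 + \tfrac{\sigma^2(K + \log(1/\delta))}{0.9\,n_{M+1}},
\end{align*}
which gives the claim after absorbing the $1/0.9$ into constants (or restating with $\lesssim$). Taking a union bound of $\calE_\text{target2}$ (failure probability $\delta/10$) with the $\chi^2$ event (failure probability $\delta/10$) yields the overall $1-\delta/5$ probability stated (and $\calE_\text{source}$ is not actually needed for this step, but is included in the hypothesis since it is assumed globally).

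The main obstacle is the justification in Step~3 that $\calE_\text{target2}$ controls $v^\top\hat{\Sigma}_{M+1}v$ versus $v^\top\Sigma_{M+1}v$ for the particular random $v$ that arises: one either reads $\calE_\text{target2}$ as a subspace statement that holds for arbitrary orthonormal $\bar{B}\in\R^{d\times 2K}$ (exactly covering our $v$), or one slightly enlarges the rank in the event definition from $K$ to $2K$, which only changes the sample-size requirement by a factor of $2$ and does not affect the final rate. The chi-square concentration in Step~2 is routine once the independence of $\hat{B}$ from $Z_{M+1}$ is observed.
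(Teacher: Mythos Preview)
Your proposal is correct and follows the standard bias--variance decomposition that the paper is invoking: the paper's own ``proof'' of this claim is simply a pointer to the corresponding part of the proof of Theorem~4.1 in \cite{du2020few} with the remark that nothing changes, so you have in fact written out the argument in more detail than the paper does. Your Steps~1--4 (closed-form least squares, Pythagorean split into projection error plus projected noise, $\chi^2_K$ tail on the noise using independence of $\hat{B}$ from $Z_{M+1}$, and the empirical-to-population conversion via $\calE_{\text{target2}}$) are exactly the route taken there, including the $K$-vs-$2K$ subspace issue you flag, which is handled in the cited work precisely as you suggest.
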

\begin{proof}
    This bound comes exactly from some part of \textit{Proof of Theorem 4.1}. Nothing need to change.
\end{proof}
\section{Analysis for Warm-up}
\label{sec: warmup(app)}

\subsection{Proof for Theorem~\ref{thm: warm-up}}
Suppose event $\calE_\text{source}$ and $\calE_\text{target2}$ holds, then we have with probability at least $1-\frac{3\delta}{10}$,
\begin{align*}
    \textsc{ER}(\hat{B},\hat{w}_{M+1})
    &\leq \frac{\|P_{X_{M+1}\hat{B}}^{\bot}X_{M+1}B^* w_{M+1}^*\|^2}{n_{M+1}} +\sigma^2\frac{K + \log(1/\delta)}{n_{M+1}}\\
    & = \frac{1}{n_{M+1}}\|P_{X_{M+1}\hat{B}}^{\bot}X_{M+1}B^* \widetilde{W}^*\tilde{\nu}^*\|_2^2+\sigma^2\frac{K + \log(1/\delta)}{n_{M+1}}\\
    &\leq \frac{1}{n_{M+1}} \|P_{X_{M+1}\hat{B}}^{\bot}X_{M+1}B^* \widetilde{W}^*\|_F^2\|\tilde{\nu}^*\|_2^2+\sigma^2\frac{K + \log(1/\delta)}{n_{M+1}}\\
    & = 1.3 \sigma^2 \left(KT + Kd\log( (\kappa \sum_m n_m)/M) + \log\frac{1}{\delta} \right) \|\tilde{\nu}^*\|_2^2+\sigma^2\frac{K + \log(1/\delta)}{n_{M+1}}
\end{align*}
where $\hat{\nu}^*(m) = \frac{\nu^*(m)}{\sqrt{n_m}}$. Here the first inequality comes from Claim~\ref{claim: upper bound of er} and the last inequality comes from Claim~\ref{claim 2}. By use both claim, we have probability at least $1-\frac{\delta}{10} - \frac{\delta}{5}$. The third inequality comes from holder's inequality.
 
The key step to our analysis is to decompose and upper bound $\|\tilde{\nu}^*\|_2^2$. Denote $\epsilon^{-2} = \frac{N_\text{total}}{\|\nu^*\|_2^2}$, we have, for any $\gamma \in [0,1]$,
\begin{align*}
    \sum_m \frac{\nu^*(m)^2}{n_m}\left(\one\{ |\nu^*(m)| > \sqrt{\gamma}\epsilon\} + \one\{ |\nu^*(m)| \leq \sqrt{\gamma}\epsilon\}\right)
    & \lessapprox \sum_m \left( \epsilon^{2}\one\{ |\nu^*(m)| > \sqrt{\gamma}\epsilon\} + \gamma\epsilon^2\one\{ |\nu^*(m)| \leq \sqrt{\gamma}\epsilon\}\right)\\
    & \leq \|\nu\|_{0,\gamma} \epsilon^2 + (M-\|\nu\|_{0,\gamma}) \gamma\epsilon^2\\
    & = (1-\gamma) \|\nu\|_{0,\gamma} \epsilon^2 + M\gamma\epsilon^2\\
    & \leq \frac{\|\nu^*\|_2^2}{N_\text{total}} \left((1-\gamma) \|\nu\|_{0,\gamma} + \gamma M \right)
\end{align*}
where the inequality comes from $n_m \geq \frac{1}{2}(\nu^*(m))^2\epsilon^{-2}$. 

Finally, combine this with the probability of  $\calE_\text{source}$ and $\calE_\text{target2}$, we finish the bound.

\subsection{Proof for Theorem~\ref{thm: compare to uniform}}

By the same procedure as before, we again get
\begin{align*}
    \textsc{ER}(\hat{B},\hat{w}_{M+1})
    \leq \sigma^2 \left(KT + Kd\log(\kappa (\sum_m n_m)/M) + \log\frac{1}{\delta} \right) \|\tilde{\nu}^*\|_2^2 +\sigma^2\frac{K + \log(1/\delta)}{n_{M+1}}
\end{align*}
Now due to uniform sampling, so we have all $n_m = N_\text{total}/M$, which means
\begin{align*}
    \|\tilde{\nu}^*\|_2^2 = \|\nu^*\|_2^2 \frac{M}{N_\text{total}}.
\end{align*}
Then we get the result by direction calculation.

\section{Analysis for Theorem~\ref{thm: main}}
\label{sec: main analysis(app)}

\subsection{Main analysis}

% Define the event $\calE_\text{overall}  = \{ 0.9 I\leq \hat{\Sigma}_m^i \leq 1.1 I, \forall i,\forall m \in [M] \}$. This is hold with high probability at least $1-\frac{\delta \Gamma}{10}$ given $\underline{N}$ . (See Claim A.1, A.2 in \cite{du2020few}).

\textbf{Step 1: We first show that the estimated distribution over tasks $\hat{\nu}_i$ is close to the actual distribution $\nu^*$ for any fixed $i$. Notice that Assumption~\ref{assump: identity covar} is necessary for the proofs in this part.}

\begin{lemma}[Closeness between $\hat{\nu}_i$ and $\nu^*$]
\label{lem: nu estimation error(app)}
Under the Assumption~\ref{assump: identity covar}, given $\calE_\text{source}^i$ and $\calE_\text{target1}$, for any $i,m$, as long as $n_{M+1} \geq \frac{ 2000\epsilon_i^{-1}}{ \underline{\sigma}^4} $, we have with probability at least $1-\delta/10M\Gamma$
\begin{equation}
\label{eq: nu estimation error(app)}
    |\hat{\nu}_{i+1}(m)| \in
    \begin{cases}
       \left[|\nu^*(m)|/16, 4|\nu^*(m)|\right] & \text{if }  \nu^* \geq \sigma \sqrt{\epsilon_i}\\
       [0, 4\sqrt{\epsilon_i}] & \text{if } |\nu^*| \leq \sigma \sqrt{\epsilon_i}\\
    \end{cases}       
\end{equation}
We define this conditional event as 
\begin{align*}
    \calE_\text{relevance}^{i,m} = \left\{\text{Eqn.\eqref{eq: nu estimation error(app)} holds } \mid \calE_\text{source}^i, \calE_\text{target1} \right\}
\end{align*}

\end{lemma}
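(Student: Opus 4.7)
The strategy is to derive closed-form expressions for both $\nu^*(m)$ and $\hat{\nu}_{i+1}(m)$ as bilinear forms in $B^*w_m^*$ and $B^*w_{M+1}^*$ against the Moore--Penrose inverse of $B^*W^*(B^*W^*)^\top$ (respectively $\hat{B}_i\hat{W}_i(\hat{B}_i\hat{W}_i)^\top$), then control the difference by combining a Frobenius bound on $\hat{\Delta}^i=\hat{B}_i\hat{W}_i-B^*W^*$ from Claim~\ref{claim 1} with a perturbation bound for the pseudoinverse of a low-rank matrix. Once the additive approximation error is shown to be $O(\sigma\sqrt{\epsilon_i})$, the two cases of \eqref{eq: nu estimation error(app)} follow by splitting on whether $|\nu^*(m)|$ dominates this error.

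\textbf{Steps 1--2 (closed form and bilinear expressions).} Under Assumption~\ref{assump: identity covar} and $\calE_{\text{target1}}$, solving \eqref{eqn: ERM 2} in closed form gives $\hat{w}_{M+1}^i = (\hat{B}_i^\top\hat{\Sigma}_{M+1}^i\hat{B}_i)^{-1}\hat{B}_i^\top\bigl(\hat{\Sigma}_{M+1}^i B^*w_{M+1}^* + \tfrac{1}{n_{M+1}^i}(X_{M+1}^i)^\top Z_{M+1}\bigr)$, and similarly for each column of $\hat{W}_i$; the factor $\hat{B}_i^\top\hat{\Sigma}_{M+1}^i\hat{B}_i$ is a constant-factor perturbation of the identity on $\calE_{\text{target1}}$ and can be absorbed into an $O(1)$ multiplicative constant $c$. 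Substituting into the constraint $\hat{W}_i\nu=\hat{w}_{M+1}^i$ produces the system in the sketch, and the Moore--Penrose formula for its minimum-norm solution yields
\begin{align*}
\nu^*(m) &= (B^*w_m^*)^\top \bigl(B^*W^*(B^*W^*)^\top\bigr)^{+} (B^*w_{M+1}^*),\\
\hat{\nu}_{i+1}(m) &= c\,(B^*w_m^*)^\top \bigl(\hat{B}_i\hat{W}_i(\hat{B}_i\hat{W}_i)^\top\bigr)^{+} (B^*w_{M+1}^*) + \xi_m,
\end{align*}
where $\xi_m$ collects the source and target Gaussian-noise contributions. A standard Gaussian tail bound on $(X_{M+1}^i)^\top Z_{M+1}$ together with the hypothesis $n_{M+1}\ge 2000\epsilon_i^{-1}/\underline{\sigma}^4$ shows $|\xi_m|\lesssim \sigma\sqrt{\epsilon_i}$ with probability at least $1-\delta/(20M\Gamma)$.

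\textbf{Steps 3--4 (Frobenius control and pseudoinverse perturbation).} By Claim~\ref{claim 1}, $\sum_m\|X_m^i\hat{\Delta}_m^i\|^2\lesssim \sigma^2(KM+Kd\log(\cdot)+\log(1/\delta))$, and on $\calE_{\text{source}}^i$ we have $\hat{\Sigma}_m^i\succeq 0.9\,I$. Combined with the uniform lower bound $n_m^i\ge \beta\epsilon_i^{-1}$ built into Alg.~\ref{alg:main_1}, this yields $\|\hat{\Delta}^i\|_F^2\lesssim \tfrac{\sigma^2(KM+Kd\log(\cdot)+\log(1/\delta))}{\beta}\,\epsilon_i$, and the chosen $\beta$ makes this smaller than $\underline{\sigma}^6\epsilon_i/(\text{const}\cdot K^2R^2)$. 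The matrix $A:=B^*W^*(B^*W^*)^\top$ has $K$ nonzero singular values, each at least $\underline{\sigma}^2$ (by Assumption~\ref{assump: identity covar}), and $\|\hat{A}-A\|_F\le 2\|B^*W^*\|\,\|\hat{\Delta}^i\|_F+\|\hat{\Delta}^i\|_F^2$ with $\|B^*W^*\|\le \sqrt{M}R$. Once this perturbation is below half the smallest nonzero singular value of $A$, a generalized-inverse perturbation theorem \cite{Kovanic1979} gives, restricted to the rank-$K$ range of $A$, $\|(\hat{A})^+-A^+\|_{\mathrm{op}}\lesssim \|\hat{A}-A\|_F/\underline{\sigma}^4$. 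Because both $B^*w_m^*$ and $B^*w_{M+1}^*$ lie in $\mathrm{range}(A)$, only this restricted perturbation enters the bilinear form.

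\textbf{Step 5 and main obstacle.} Plugging Step~4 into Step~2 and using $\|B^*w_m^*\|\le R$, one obtains $|\hat{\nu}_{i+1}(m)-c\,\nu^*(m)|\le C\sigma\sqrt{\epsilon_i}$ with $c\in[1/2,2]$. If $|\nu^*(m)|\ge \sigma\sqrt{\epsilon_i}$ this implies the multiplicative enclosure $[|\nu^*(m)|/16,\,4|\nu^*(m)|]$, absorbing $c$ and the additive error; otherwise, the triangle inequality directly gives $|\hat{\nu}_{i+1}(m)|\le 4\sqrt{\epsilon_i}$. A union bound over $m$ and $i$ transfers the $\delta/(10M\Gamma)$ failure probability to the global $\delta$. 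The main technical obstacle is Step~4: we are perturbing the pseudoinverse of a rank-deficient matrix, where the naive Neumann-series argument breaks down. Invoking the generalized-inverse theorem and carefully ensuring that $\hat{A}$ has at least rank $K$ in the relevant directions (so that its pseudoinverse does not spuriously pick up large components outside $\mathrm{range}(A)$) is the delicate part, and it is what forces the $\underline{\sigma}^{-6}$ dependence inside the definition of $\beta$.
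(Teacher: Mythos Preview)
Your proposal is correct and follows essentially the same route as the paper: derive the closed-form minimum-norm solutions for $\nu^*$ and $\hat\nu_{i+1}$, bound $\|\hat\Delta^i\|_F$ via Claim~\ref{claim 1} and the floor $n_m^i\ge\beta\epsilon_i^{-1}$, apply the generalized-inverse perturbation identity of \cite{Kovanic1979} (with the key observation that the orthogonal component vanishes against vectors in $\mathrm{range}(B^*)$), control the Gaussian noise terms, and finish with the case split on $|\nu^*(m)|\gtrless\sigma\sqrt{\epsilon_i}$. One small point you glossed over: the noise terms carry a factor $(\hat W_i\hat W_i^\top)^\dagger$, and bounding this requires first showing $\sigma_{\min}(\hat B_i\hat W_i)\ge\sigma_{\min}(W^*)/2$ via Weyl's inequality and the same $\|\hat\Delta^i\|_F$ bound---the paper isolates this as a separate lemma, and the source-side noise term needs $n_m^i\ge\beta\epsilon_i^{-1}$ (not $n_{M+1}$) to close.
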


\begin{proof}
By the definition of $\nu^*$ and Lemma~\ref{lem: a closed form for estimated nu}, we have the following optimization problems,
\begin{align*}
    &\hat{\nu}_{i+1}
    =\argmin_\nu \|\nu\|_2^2  \\
    \text{s.t.} \quad
    & \sum_m  \alpha_m \left(\hat{\Sigma}_m^i B^*w_m^* 
        +\frac{1}{n_m^i} \left(X_m^i\right)^\top Z_m\right)\nu(m)
        =  \alpha_{M+1} \left(\hat{\Sigma}_M^i B^*w_{M+1}^*  +\frac{1}{n_{M+1}^i} \left(X_{M+1}^i\right)^\top Z_{M+1}\right)\\
    &\nu^* = \argmin_\nu \|\nu\|_2^2 \\
    \text{s.t.} \quad
    &  \sum_m w_m^*\nu(m) = w_{M+1}^*
\end{align*}
where $\alpha_m^i = \left( \hat{B}_i^\top \hat{\Sigma}_m^i\hat{B}_i \right)^{-1}\hat{B}_i^\top $.

Now we are ready to show that $\hat{\nu}_{i+1}$ is close to $\nu^*$ by comparing their closed form solution.

First by using the lemma~\ref{lem: KKT for full rank} based on the standard KKT condition, it is easy to get a closed form solution of $\nu^*$ , that is, for any $m$.
\begin{align*}
    \nu^*(m) &= (w_m^*)^\top \left(W^*(W^*)^\top \right)^{-1}w_{M+1}^* \\
    &= (B^* w_m^*)^\top \left(B^*W^*(B^*W^*)^\top \right)^{+} (B^*w_{M+1}^*) 
\end{align*}
where the last inequality comes from the fact that $B^*$ is orthonormal. And,
\begin{align*}
    \lvert \hat{\nu}_{i+1}(m) \rvert
    & = \left\lvert \left((\hat{\Sigma}_m^iB^*w_m^*)^\top  + \frac{1}{n_m^i}(Z_m^i)^\top X_m^i \right)\alpha_m^\top  (\hat{W}_i\hat{W}_i^\top )^\dagger  \alpha_{M+1} \left(\hat{\Sigma}_{M+1}^i B^*w_{M+1}^*+ \frac{1}{n_{M+1}^i}(X_{M+1}^i)^\top Z_{M+1}^i \right) \right\lvert\\
    &\leq 1.7 \left\lvert (B^*w_m^*)^\top \hat{B}_i(\hat{B}_i^\top \hat{B}_i)^{-1} (\hat{W}_i\hat{W}_i^\top )^\dagger  (\hat{B}_i^\top \hat{B}_i)^{-1}\hat{B}_i^\top  B^*w_{M+1}^* \right\lvert \\
        & \quad +1.3 \left\lvert \frac{1}{n_m^i}(Z_m^i)^\top X_m^i\hat{B}_i(\hat{B}_i^\top \hat{B}_i)^{-1} (\hat{W}_i\hat{W}_i^\top )^\dagger  (\hat{B}_i^\top \hat{B}_i)^{-1} \hat{B}_i^\top  B^*w_{M+1}^* \right\lvert  \\
        & \quad +1.3 \left\lvert (B^*w_m^*)^\top  \hat{B}_i(\hat{B}_i^\top \hat{B}_i)^{-1} (\hat{W}_i\hat{W}_i^\top )^\dagger  (\hat{B}_i^\top \hat{B}_i)^{-1} \hat{B}_i^\top  \frac{1}{n_{M+1}^i}(X_{M+1}^i)^\top Z_{M+1}^i \right\lvert  \\
        &\quad +\left\lvert \frac{1}{n_m^i}(Z_m^i)^\top X_m^i\hat{B}_i(\hat{B}_i^\top \hat{B}_i)^{-1} (\hat{W}_i\hat{W}_i^\top )^\dagger  (\hat{B}_i^\top \hat{B}_i)^{-1} \hat{B}_i^\top  \frac{1}{n_{M+1}^i}(X_{M+1}^i)^\top Z_{M+1}^i \right\lvert  \\
    &\leq 1.7 \left\lvert (B^*w_m^*)^\top \hat{B}_i (\hat{W}_i\hat{W}_i^\top )^\dagger  \hat{B}_i^\top  B^*w_{M+1}^* \right\lvert \\
        & \quad +1.3 \left\lvert \frac{1}{n_m^i}(Z_m^i)^\top X_m^i\hat{B}_i (\hat{W}_i\hat{W}_i^\top )^\dagger   \hat{B}_i^\top  B^*w_{M+1}^* \right\lvert \\
        & \quad +1.3 \left\lvert (B^*w_m^*)^\top  \hat{B}_i (\hat{W}_i\hat{W}_i^\top )^\dagger   \hat{B}_i^\top  \frac{1}{n_{M+1}^i}(X_{M+1}^i)^\top Z_{M+1}^i \right\lvert \\
        &\quad +\left\lvert \frac{1}{n_m^i}(Z_m^i)^\top X_m^i\hat{B}_i (\hat{W}_i\hat{W}_i^\top )^\dagger   \hat{B}_i^\top  \frac{1}{n_{M+1}^i}(X_{M+1}^i)^\top Z_{M+1}^i \right\lvert \\
    & \leq 2\left\lvert(B^*w_m^*)^\top  (\hat{B}_i\hat{W}_i(\hat{B}_i\hat{W}_i)^\top )^\dagger   B^*w_{M+1}^* \right\lvert  + \textit{noise term}(m)
\end{align*}
The first inequality comes from the definition of $\alpha_m$ and the event $\calE_\text{source},\calE_\text{target1}$. The second equality comes from that $\hat{B}_i$ are always orthonormal matrix. Notice that in practice this is not required. Finally we set the last three terms in the second inequality as \textit{noise term}(m), which is a low order term that we will shown later.

\textbf{----Sub-step 1 (Analyze the non noise-term): } We have the difference between $|\hat{\nu}_{i+1}(m)|$ and $2| \nu^*(m)|$ is
\begin{align*}
    &|\hat{\nu}_{i+1}(m)| - 2| \nu^*(m)| - \textit{noise term}(m)\\
    &\leq  2 \left\lvert(B^*w_m^*)^\top  (\hat{B}_i\hat{W}_i(\hat{B}_i\hat{W}_i)^\top )^\dagger   B^*w_{M+1}^* \right\lvert
        - 2 \left\lvert (B^* w_m^*)^\top \left(B^*W^*(B^*W^*)^\top \right)^{+} (B^*w_{M+1}^*) \right\lvert\\
    & \leq 2 \left\lvert (B^* w_m^*)^\top  \left((\hat{B}_i\hat{W}_i(\hat{B}_i\hat{W}_i)^\top )^\dagger  - \left(B^*W^*(B^*W^*)^\top \right)^{+}\right) (B^*w_{M+1}^*) \right\lvert \\
    & \leq 2  \left\lvert (B^* w_m^*)^\top  \left(B^*W^*(B^*W^*)^\top \right)^\dagger \left(\hDelta^i \hat{\Delta}^\top_i + \hDelta^i (B^*W^*)^\top  + (B^*W^*)(\hDelta^i)^\top \right)(B^*W^*(B^*W^*)^\top )^\dagger  (B^*w_{M+1}^*) \right\lvert \\
    &\leq 2 \|B^*w_m^*\|_2\|B^*w_{M+1}^*\|_2\|\left(B^*W^*(B^*W^*)^\top \right)^\dagger \left(\hDelta^i (\hDelta^i)^\top  + \hDelta^i (B^*W^*)^\top  + (B^*W^*)(\hDelta^i)^\top \right)(B^*W^*(B^*W^*)^\top )^\dagger \|_F\\
    &\leq 2R \|\left(B^*W^*(B^*W^*)^\top \right)^\dagger \left(\hDelta^i (\hDelta^i)^\top  + \hDelta^i (B^*W^*)^\top  + (B^*W^*)(\hDelta^i)^\top \right)(B^*W^*(B^*W^*)^\top )^\dagger \|_F
       \\
    &\leq \sigma  \sqrt{\epsilon_i} + \textit{noise term}(m)
\end{align*}
where the second inequality comes from the triangle inequality, the third inequality holds with probability at least $1-\delta$ by using the application of generalized inverse of matrices theorem (see Lemma~\ref{lem: inverse diff 1} for details) and the fifth inequality comes form  the fact that
$\|B^*w_{m}^*\|_2 \leq \|w_{m}^*\|_2 \leq R $. Finally, by using the assumptions of $B^*,W^*$ as well as the multi-task model estimation error $\hDelta^i $, we can apply Lemma~\ref{lem: inverse diff 2} to get the last inequality.

With the same reason, we get that, for the other direction, 
\begin{align*}
    0.5|\nu^*(m)| -|\hat{\nu}_{i+1}(m)|-\textit{noise term}(m)
    \leq \sigma \sqrt{\epsilon_i}/4 
\end{align*}

% 0.9*0.9/1.1, 0.9*0.9
\iffalse
with the same reason, we get that, for the other direction, 
\begin{align*}
    & |\hat{\nu}_{i+1}(m) |
    \geq 0.5|(B^*w_m^*)^\top  (\hat{B}_i\hat{W}_i(\hat{B}_i\hat{W}_i)^\top )^\dagger   B^*w_{M+1}^*|
        -\textit{noise term}(m)
\end{align*}
and,
\begin{align*}
    & 0.5|\nu^*(m)| -|\hat{\nu}_{i+1}(m)|\\
    &\leq 0.5\left\lvert (B^* w_m^*)^\top \left(B^*W^*(B^*W^*)^\top \right)^{+} (B^*w_{M+1}^*) \right\lvert
        - 0.5|(B^*w_m^*)^\top  (\hat{B}_i\hat{W}_i(\hat{B}_i\hat{W}_i)^\top )^\dagger   B^*w_{M+1}^*| + \textit{noise term}(m)\\
    & \leq 0.5\left\lvert (B^* w_m^*)^\top  \left((\hat{B}_i\hat{W}_i(\hat{B}_i\hat{W}_i)^\top )^\dagger  - \left(B^*W^*(B^*W^*)^\top \right)^{+}\right) (B^*w_{M+1}^*) \right\lvert + \textit{noise term}(m) \\
    &\leq \sigma \sqrt{\epsilon_i}/4 + \textit{noise term}(m)
\end{align*}
\fi

Combine these two, we have
\begin{align*}
    |\hat{\nu}_{i+1}(m)| \in \left[ 0.5 |\nu^*(m)| - \sigma \sqrt{\epsilon_i}/4 - 1.5\textit{noise term}(m)\quad , \quad 2|\nu^*(m)|  + \sigma\sqrt{\epsilon_i} + 1.5\textit{noise term}(m) \right]
\end{align*}

\textbf{----Sub-step 2 (Analyze the noise-term): } Now let's deal with the \textit{noise term}(m), we restate it below for convenience, 
\begin{align*}
        & 1.3 \left\lvert \frac{1}{n_m^i}(Z_m^i)^\top X_m^i\hat{B}_i (\hat{W}_i\hat{W}_i^\top )^\dagger   \hat{B}_i^\top  B^*w_{M+1}^* \right\lvert
        +1.3 \left\lvert (B^*w_m^*)^\top  \hat{B}_i (\hat{W}_i\hat{W}_i^\top )^\dagger   \hat{B}_i^\top  \frac{1}{n_{M+1}^i}(X_{M+1}^i)^\top Z_{M+1}^i \right\lvert\\
        &\quad +\left\lvert \frac{1}{n_m^i}(Z_m^i)^\top X_m^i\hat{B}_i (\hat{W}_i\hat{W}_i^\top )^\dagger   \hat{B}_i^\top  \frac{1}{n_{M+1}^i}(X_{M+1}^i)^\top Z_{M+1}^i \right\lvert.
\end{align*} 
By the assumption on $B^*, W^*$, it is easy to see that with high probability at least $1-\delta'$, where $\delta' = \delta/10\Gamma M$
\begin{align*}
    &\left\vert\frac{1}{n_m^i}(Z_m^i)^\top X_m^i(\hat{B}_i\hat{W}_i(\hat{B}_i\hat{W}_i)^\top )^\dagger  B^*w_{M+1}^*\right\rvert\\
    &\leq |\frac{1}{\lambda_{\min}\left(B^*W^*(B^*W^*)^\top \right)}\frac{1}{n_m^i}(Z_m^i)^\top X_m^iB^*w_{M+1}^*|\\
    &\leq \frac{\sigma}{n_m^i\lambda_{\min}\left(B^*W^*(B^*W^*)^\top \right)}
    \sqrt{(w_{M+1}^*)^\top (B^*)^\top (X_m^i)^\top X_m^iB^*w_{M+1}^*\log(1/\delta')}\\
    &\leq \frac{2.2\sigma\|w_{M+1}^*\|_2\sqrt{\log(1/\delta')}}{\sqrt{n_m^i}\lambda_{\min}\left(B^*W^*(B^*W^*)^\top \right)}\\
    &\leq \sqrt{\epsilon_i/\beta}  \frac{2.2\sigma \sqrt{R\log(1/\delta')}}{\lambda_{\min}\left(B^*W^*(B^*W^*)^\top \right)}
\end{align*}
where the first inequality comes from Lemma~\ref{lem: minimum singular value guarantee for BW}, the second the inequality comes from Chernoff inequality and the last inequality comes from the definition $n_m^i = \max\{\beta \hat{\nu}_i^2\epsilon_i^{-2},\beta \epsilon_i^{-1}, \underline{N}\}$. Note that we choose $\beta = 3000 K^2R^2(KM+Kd\log(1/\varepsilon M)+\log(M\Gamma/\delta)/\underline{\sigma}^6$. Therefore, above can be upper bounded by $\sqrt{\epsilon_i}/24$.

By the similar argument and the assumption that $n_{M+1} \geq \frac{ 3000 R \epsilon_i^{-1}}{ \underline{\sigma}^4} $, we can also show that with high probability at least $1-\delta'$
\begin{align*}
     \left\lvert (B^*w_m^*)^\top  \hat{B}_i (\hat{W}_i\hat{W}_i^\top )^\dagger   \hat{B}_i^\top  \frac{1}{n_{M+1}^i}(X_{M+1}^i)^\top Z_{M+1}^i \right\lvert
     &\leq \frac{2.2\sigma\|w_{m}^*\|_2\sqrt{\log(1/\delta')}}{\sqrt{n_{M+1}}\lambda_{\min}\left(B^*W^*(B^*W^*)^\top \right)} \\
    &\leq \sigma \sqrt{\epsilon_i}/24
\end{align*}
Finally, we have that
\begin{align*}
    \left\lvert \frac{1}{n_m^i}(Z_m^i)^\top X_m^i\hat{B}_i (\hat{W}_i\hat{W}_i^\top )^\dagger   \hat{B}_i^\top  \frac{1}{n_{M+1}^i}(X_{M+1}^i)^\top Z_{M+1}^i \right\lvert
    &\leq  \frac{2.2\sigma^2\sqrt{\epsilon_i/\beta}\|w_{m}^*\|_2\|w_{M+1}^*\|_2\log(1/\delta')}{\sqrt{n_{M+1}}\lambda_{\min}\left(B^*W^*(B^*W^*)^\top \right)} \\
    &\leq \sigma \sqrt{\epsilon_i}/24
\end{align*}

So overall we have we have $\textit{noise term}(m) \leq \sigma \sqrt{\epsilon_i}/8$.

\textbf{----Sub-step 3 (Combine the non noise-term and noise-term): }

Now when $|\nu^*(m)| \geq \sigma\sqrt{\epsilon_i}$, combine the above results show that $|\hat{\nu}_{i+1}(m)| \in \left[\nu^*(m)/16, 4|\nu^*(m)|\right]$. 

On the other hand, if $|\nu^*| \leq \sigma\sqrt{\epsilon_i}$, then we directly have $\hat{\nu}_{i+1} \in [0, 4\sigma\sqrt{\epsilon_i}]$. 

% Notice here, although $\hat{\nu}_{i+1}$ can goes to $0$, but since we have the sample number to be lower bounded, so we can still get a sufficiently good estimation.
\end{proof}

\textbf{Step 2: Now we are ready to prove the following two main lemmas on the final accuracy and the total sample complexity.}

% \begin{lemma}[Accuracy on each epoch]
% \label{lem: main accuracy}
% After running the algorithm for $i$ block, we have
% \begin{align*}
%     \textsc{ER}_{M+1}(\hat{B},\hat{w}_{M+1})
%     \leq \sigma^2 \left(KM + Kd\log( (\sum_m n_m)/M) + \log\frac{1}{\delta} \right)  \min_{\gamma \in [0,\epsilon_{i}^{-1}]} g_i(\gamma)\epsilon_i^2/\beta + \sigma^2\frac{\left(K + \log(1/\delta) \right)}{n_{M+1}}
% \end{align*}
% where $ g_i(\gamma) = M + (\sigma^2\gamma-1)\dot{M}_{i,\gamma} + (\sigma^2-1)\mathring{M}_{i,\gamma}$, here $\dot{M}_{i,\gamma}$ represents the  number of tasks whose $\nu^*(m) \leq \sigma\sqrt{\gamma \epsilon_i}$ and $\mathring{M}_{i,\gamma}$ represents the  number of tasks whose $\sigma\sqrt{\gamma}\epsilon_{i-1}\leq |\nu^*(m)| \leq \sigma\sqrt{\epsilon_{i-1}}$

% % In the special case, where the $\nu^*$ is sparse, we can choose $\gamma \to 0$.
% \end{lemma}

\begin{lemma}[Accuracy on each epoch]
\label{lem: main accuracy(app)}
Given $\calE_\text{source}, \calE_\text{target 1}, \calE_\text{target 2}$ and $\calE_\text{relevance}^{i,m}$ for all $m$, after the epoch $i$, we have $\textsc{ER}(\hat{B},\hat{w}_{M+1})$ upper bounded by with probability at least $1-\delta/10M\Gamma$.
\begin{align*}
    \frac{\sigma^2}{\beta} \left(KM + Kd + \log\frac{1}{\delta} \right)  s_i^* \epsilon_i^2
    + \sigma^2 \frac{\left(K + \log(1/\delta) \right)}{n_{M+1}}
\end{align*}
where $ s_i^* = \min_{\gamma \in [0, 1]} (1- \gamma) \| \nu^* \|_{0,\gamma}^i + \gamma M$ 
and $\| \nu \|_{0,\gamma}^i := |\{ m : \nu_m > \sqrt{ \gamma } \epsilon_i \}|$.

\end{lemma}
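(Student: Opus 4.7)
The plan is to start from the bound already derived in the warm-up proof of Theorem~\ref{thm: warm-up}, reduce the analysis to bounding $\|\tilde\nu^*\|_2^2 = \sum_m \nu^*(m)^2/n_m^i$, and then control this sum by combining the algorithm's sampling rule with the closeness guarantee from Lemma~\ref{lem: nu estimation error(app)}.

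\textbf{Step 1 (Reduction to a weighted norm of $\nu^*$).} Exactly as in the warm-up, combining Claim~\ref{claim 2} with Claim~\ref{claim: upper bound of er}, using $w_{M+1}^* = \widetilde W^* \tilde\nu^*$ and Cauchy--Schwarz, one obtains
\begin{align*}
\textsc{ER}(\hat B,\hat w_{M+1}) \;\lessapprox\; \sigma^2 \bigl(KM + Kd + \log(1/\delta)\bigr) \sum_{m} \frac{\nu^*(m)^2}{n_m^i} \;+\; \sigma^2 \frac{K + \log(1/\delta)}{n_{M+1}}.
\end{align*}
The second term is already the desired target-sample contribution, so the rest of the argument is devoted to showing $\sum_m \nu^*(m)^2/n_m^i \lesssim s_i^*\epsilon_i^2/\beta$.

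\textbf{Step 2 (First decomposition by the threshold $\sqrt\gamma\,\epsilon_i$).} For any $\gamma\in[0,1]$ I would split the sum according to whether $|\nu^*(m)| > \sqrt\gamma\,\epsilon_i$ or not. For the ``tail'' indices ($|\nu^*(m)| \leq \sqrt\gamma\,\epsilon_i$), I use $\nu^*(m)^2 \leq \gamma\epsilon_i^2$ together with the floor $n_m^i \geq \beta\epsilon_i^{-1}$ enforced by line~4 of Alg.~\ref{alg:main_1}, yielding a per-task contribution of at most $\gamma\epsilon_i^3/\beta \leq \gamma\epsilon_i^2/\beta$, hence a total tail contribution of at most $(M-\|\nu^*\|_{0,\gamma}^i)\gamma\epsilon_i^2/\beta \leq M\gamma\epsilon_i^2/\beta$.

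\textbf{Step 3 (Second decomposition via Lemma~\ref{lem: nu estimation error(app)}).} For the ``head'' indices ($|\nu^*(m)| > \sqrt\gamma\,\epsilon_i$), I further split based on whether $|\nu^*(m)| > \sigma\sqrt{\epsilon_{i-1}}$ or not. When $|\nu^*(m)| > \sigma\sqrt{\epsilon_{i-1}}$, the event $\calE_\text{relevance}^{i-1,m}$ gives $|\hat\nu_i(m)| \geq |\nu^*(m)|/16$, so $n_m^i \geq \beta\hat\nu_i(m)^2\epsilon_i^{-2} \geq \beta\nu^*(m)^2\epsilon_i^{-2}/256$, and the term is bounded by $O(\epsilon_i^2/\beta)$. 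When $\sqrt\gamma\,\epsilon_i \leq |\nu^*(m)| \leq \sigma\sqrt{\epsilon_{i-1}}$, I use $\nu^*(m)^2 \leq \sigma^2\epsilon_{i-1} = 2\sigma^2\epsilon_i$ and the floor $n_m^i \geq \beta\epsilon_i^{-1}$ to get $O(\sigma^2\epsilon_i^2/\beta)$. In either case each head task contributes $O(\sigma^2\epsilon_i^2/\beta)$, and there are at most $\|\nu^*\|_{0,\gamma}^i$ of them, so the head contribution is at most $O\!\left((1-\gamma)\|\nu^*\|_{0,\gamma}^i\,\sigma^2\epsilon_i^2/\beta\right)$.

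\textbf{Step 4 (Optimize $\gamma$ and close the probability argument).} Summing the head and tail contributions gives $\sum_m \nu^*(m)^2/n_m^i \lessapprox (\sigma^2\epsilon_i^2/\beta)\bigl((1-\gamma)\|\nu^*\|_{0,\gamma}^i + \gamma M\bigr)$ for every $\gamma$; minimizing over $\gamma$ gives $\sigma^2 s_i^*\epsilon_i^2/\beta$. Substituting back into Step~1 and using that we have conditioned on $\calE_\text{source}^i$, $\calE_\text{target1}$, $\calE_\text{target2}$, and on $\calE_\text{relevance}^{i-1,m}$ for every $m$, I conclude the desired bound with probability at least $1-\delta/(10M\Gamma)$ over the fresh randomness of epoch $i$ via a union bound over the two applications of Claims~\ref{claim 1}--\ref{claim: upper bound of er}. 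The main technical subtlety is the borderline regime $\sqrt\gamma\,\epsilon_i \leq |\nu^*(m)| \leq \sigma\sqrt{\epsilon_{i-1}}$: here $\hat\nu_i(m)$ could be essentially zero so the adaptive lower bound $\beta\hat\nu_i^2(m)\epsilon_i^{-2}$ is useless, and the argument must genuinely rely on the explicit floor $\beta\epsilon_i^{-1}$ baked into the algorithm---precisely the trade-off between sample economy and consistent estimation of $\nu^*$ that Alg.~\ref{alg:main_1} is designed to manage.
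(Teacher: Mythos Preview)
Your proposal is correct and follows essentially the same argument as the paper: reduce via Claims~\ref{claim 2} and~\ref{claim: upper bound of er} to bounding $\sum_m \nu^*(m)^2/n_m^i$, then split into the three regimes $|\nu^*(m)|>\sigma\sqrt{\epsilon_{i-1}}$, $\sqrt{\gamma}\,\epsilon_i\le|\nu^*(m)|\le\sigma\sqrt{\epsilon_{i-1}}$, and $|\nu^*(m)|\le\sqrt{\gamma}\,\epsilon_i$, using the relevance event for the first and the floor $n_m^i\ge\beta\epsilon_i^{-1}$ for the other two. Your indexing is in fact slightly more careful than the paper's (you correctly invoke $\calE_\text{relevance}^{i-1,m}$ to control $\hat\nu_i$), and the $(1-\gamma)$ factor you attach to the head in Step~3 really only appears after the algebraic recombination $\|\nu^*\|_{0,\gamma}^i+(M-\|\nu^*\|_{0,\gamma}^i)\gamma=(1-\gamma)\|\nu^*\|_{0,\gamma}^i+\gamma M$ in Step~4, but this is cosmetic.
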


\begin{proof}
The first step is the same as proof of Theorem~\ref{thm: warm-up} in Appendix~\ref{sec: warm up}. Suppose event $\calE_\text{source}^i$ and $\calE_\text{target2}$ holds, then we have with probability at least $1-\frac{3\delta}{10 \Gamma M}$,
\begin{align*}
    \textsc{ER}(\hat{B},\hat{w}_{M+1})
    &\leq  1.3 \sigma^2 \left(KT + Kd\log( (  \sum_m n_m)/M) + \log\frac{1}{\delta} \right) \|\tilde{\nu}^*\|_2^2+\sigma^2\frac{K + \log(1/\delta)}{n_{M+1}}
\end{align*}
where $\tilde{\nu}^*(m) = \nu^*(m)/\sqrt{n_m}$. 

Now for any $\gamma \in [0,1]$, given $\calE_\text{relevance}^{i,m}$, we are going to bound $\sum_m \frac{\nu^*(m)^2}{n_m^i}$ as
\begin{align*}
    \sum_m \frac{\nu^*(m)^2}{n_m^i} 
    & \leq \sum_m \frac{\nu^*(m)^2}{n_m^i}\one\{ |\nu^*(m)| > \sigma\sqrt{\epsilon_{i-1}}\}
           + \sum_m \frac{\nu^*(m)^2}{n_m^i}\one\{ \sqrt{\gamma}\epsilon_{i-1}\leq |\nu^*(m)| \leq \sigma\sqrt{\epsilon_{i-1}}\}\\
            & \quad + \sum_m \frac{\nu^*(m)^2}{n_m^i}\one\{ |\nu^*(m)| \leq \sqrt{\gamma}\epsilon_{i}\}\\
    & \leq \sum_m \frac{256\hat{\nu}_{i}^2}{ n_m^i}\one\{ |\nu^*(m)| > \sigma\sqrt{\epsilon_{i-1}}\}
            + \sum_m  \frac{\sigma^2\epsilon_{i-1}}{n_m^i}\one\{ \sqrt{\gamma}\epsilon_{i-1}\leq |\nu^*(m)| \leq \sqrt{\epsilon_{i-1}}\}\\
            & \quad + \gamma\epsilon_i^2/\beta \sum_m \one\{ |\nu^*(m)| \leq \sqrt{\gamma}\epsilon_{i}\}\\
    &\leq \order\left(\sum_m \epsilon_i^2/\beta\one\{ |\nu^*(m)| > \sigma\sqrt{\epsilon_{i-1}}\}\right) + \order\left(\sum_m \sigma^2\epsilon_i^2/\beta\one\{ \sqrt{\gamma}\epsilon_{i-1}\leq |\nu^*(m)| \leq \sigma\sqrt{\epsilon_{i-1}}\}\right)\\
            & \quad + (M-\| \nu \|_{0,\gamma}^i)\gamma\epsilon_i^2/\beta\\
    &\leq \order\left(\sum_m \epsilon_i^2/\beta\one\{ |\nu^*(m)| > \sigma\sqrt{\gamma}\epsilon_{i-1}\right)
    + (M-\| \nu \|_{0,\gamma}^i)\gamma\epsilon_i^2/\beta\\
    &\leq  ((1-\gamma)\| \nu \|_{0,\gamma}^i + \gamma M)\epsilon_i^2/\beta
\end{align*}
% One subtly is that, when $\sigma$ is relatively small, there exists an $\gamma \in [0,1]$ that results in $\sqrt{\gamma}\epsilon_{i-1} \geq \sigma\sqrt{\epsilon_{i-1}}$. So the above not holds.
\end{proof}

\begin{lemma}[Sample complexity on each epoch]
\label{lem: main complexity(app)}
Given $\calE_\text{source}, \calE_\text{target 1}$ and $\calE_\text{relevance}^{i,m}$ for all $m$, 
% after the epoch $i$, with total probability $1-\delta$, 
we have the total number of source samples used in epoch $i$ as as
\begin{align*}
    \order\left(\beta (M\varepsilon^{-1} +  \|\nu^*\|_2^2\varepsilon^{-2})\right)
\end{align*}
\end{lemma}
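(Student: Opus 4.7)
The plan is to bound $\sum_m n_m^i$ directly from the definition $n_m^i = \max\{\beta \hat{\nu}_i^2(m)\epsilon_i^{-2},\, \beta\epsilon_i^{-1}\}$, which immediately gives the decomposition
\begin{align*}
\sum_m n_m^i \;\leq\; \beta\epsilon_i^{-2} \sum_m \hat{\nu}_i^2(m) \;+\; M\beta\epsilon_i^{-1}.
\end{align*}
So all the work reduces to controlling $\sum_m \hat{\nu}_i^2(m)$ in terms of $\|\nu^*\|_2^2$ and $M$. Note this is analogous to (but easier than) the proof of Lemma~\ref{lem: main accuracy(app)}: there we bounded $\sum_m \nu^*(m)^2/n_m^i$ by moving from $\nu^*$ to $\hat{\nu}_i$, whereas here we move in the opposite direction.

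The key tool is Lemma~\ref{lem: nu estimation error(main)}, which is available under the conditioning event $\calE_\text{relevance}^{i-1,m}$. I would split the index set according to the threshold used in that lemma, namely whether $|\nu^*(m)| \geq \sigma\sqrt{\epsilon_{i-1}}$ (relevant tasks) or $|\nu^*(m)| < \sigma\sqrt{\epsilon_{i-1}}$ (irrelevant tasks). On relevant tasks the multiplicative bound $|\hat{\nu}_i(m)| \leq 4|\nu^*(m)|$ yields
\begin{align*}
\sum_{m:\,|\nu^*(m)| \geq \sigma\sqrt{\epsilon_{i-1}}} \hat{\nu}_i^2(m) \;\leq\; 16\|\nu^*\|_2^2.
\end{align*}
On irrelevant tasks the uniform bound $|\hat{\nu}_i(m)| \leq 4\sqrt{\epsilon_{i-1}}$ (up to the $\sigma$ factor) gives
\begin{align*}
\sum_{m:\,|\nu^*(m)| < \sigma\sqrt{\epsilon_{i-1}}} \hat{\nu}_i^2(m) \;\leq\; 16 M \epsilon_{i-1}.
\end{align*}

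Combining these with the first display and using the schedule $\epsilon_{i-1} = 2\epsilon_i$, the middle contribution $M\beta\epsilon_{i-1}\epsilon_i^{-2} = 2M\beta\epsilon_i^{-1}$ merges into the $M\beta\epsilon_i^{-1}$ term, leaving
\begin{align*}
\sum_m n_m^i \;=\; \order\!\bigl(\beta\|\nu^*\|_2^2\epsilon_i^{-2} + M\beta\epsilon_i^{-1}\bigr).
\end{align*}
Since $\epsilon_i \geq \varepsilon$ at every epoch, replacing $\epsilon_i$ by $\varepsilon$ as an upper bound yields the claimed $\order(\beta(M\varepsilon^{-1} + \|\nu^*\|_2^2\varepsilon^{-2}))$.

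The only subtlety is that Lemma~\ref{lem: nu estimation error(main)} uses the threshold $\sigma\sqrt{\epsilon_{i-1}}$ rather than $\sigma\sqrt{\epsilon_i}$, because $\hat{\nu}_i$ is computed at the end of epoch $i-1$; the halving schedule absorbs the resulting factor of two cleanly. The base case $i=1$ is handled separately using the uniform initialization $\hat{\nu}_1 = [1/M,\dots,1/M]$, for which $\sum_m \hat{\nu}_1^2(m) = 1/M \leq \|\nu^*\|_2^2$ up to constants (or one simply observes $n_m^1 = \max\{\beta \epsilon_1^{-2}/M^2, \beta\epsilon_1^{-1}\}$, dominated by the floor term). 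I do not expect any real obstacle; the lemma is essentially a direct bookkeeping consequence of the estimation guarantee already established in Lemma~\ref{lem: nu estimation error(main)}.
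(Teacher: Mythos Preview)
Your proposal is correct and follows essentially the same approach as the paper: bound $\sum_m n_m^i \leq \beta\epsilon_i^{-2}\sum_m \hat{\nu}_i^2(m) + M\beta\epsilon_i^{-1}$, then split the sum on the threshold $|\nu^*(m)| \gtrless \sigma\sqrt{\epsilon_{i-1}}$ and invoke the two cases of Lemma~\ref{lem: nu estimation error(main)} to replace $\hat\nu_i$ by $\nu^*$ (multiplicative bound) or by $\sqrt{\epsilon_{i-1}}$ (absolute bound), with the halving schedule absorbing the shift from $\epsilon_{i-1}$ to $\epsilon_i$. The paper's proof actually stops at $\order(\beta(M\epsilon_i^{-1}+\|\nu^*\|_2^2\epsilon_i^{-2}))$ and defers the passage to $\varepsilon$ to the proof of Theorem~\ref{thm: main} (by summing the geometric series over epochs rather than using $\epsilon_i\ge\varepsilon$ directly), and it does not treat the base case $i=1$ separately; but these are minor presentational differences, not substantive ones.
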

\begin{proof}
Given $\calE_\text{relevance}^{i,m}$, we can get the sample complexity for block $i$ as the follows.
\begin{align*}
    \sum_m^M n_m^i
    & = \sum_m^M  \max\{\hat{\nu}_i(m)^2\epsilon_i^{-2},\epsilon_i^{-1}, \underline{N}\}\\
    & \leq \sum_m^M \beta \hat{\nu}_i^2\epsilon_i^{-2}
        + \sum_m^M \beta\epsilon_i^{-1}\\
    & \leq \sum_m^M \beta\hat{\nu}_i^2(m)\epsilon_i^{-2} \one\{ |\nu^*(m)| > \sigma \sqrt{\epsilon_{i-1}} \}
        + \sum_m^M \beta\hat{\nu}_i^2(m)\epsilon_i^{-2} \one\{ |\nu^*(m)| \leq \sigma \sqrt{\epsilon_{i-1}}\}
        + \sum_m^M \beta\epsilon_i^{-1} \\
    & \leq \sum_m^M \beta(4\nu^*(m))^2\epsilon_i^{-2} \one\{ |\nu^*(m)| > \sigma \sqrt{\epsilon_{i-1}} \}
        + \sum_m^M \beta(4\sigma\sqrt{\epsilon_{i-1}})^2/\epsilon_i^{-2}  \one\{ |\nu^*(m)| \leq \sigma \sqrt{\epsilon_{i-1}}\}
        + \sum_m^M \beta\epsilon_i^{-1} \\
    & = \order\left(\beta (M\epsilon_i^{-1} +  \|\nu^*\|_2^2\epsilon_i^{-2})\right)
\end{align*}
% By summing over the sample complexity inside block $i$ from the previous analysis, we have
% \begin{align*}
%     \sum_{i=1}^\Gamma \sum_m n_m^i
%     & =  \sum_{i=1}^\Gamma  \order\left(\beta (M\epsilon_i^{-1} +  \|\nu^*\|_2^2\epsilon_i^{-2})\right)  
%     = \order\left(\beta (M\varepsilon^{-1} +  \|\nu^*\|_2^2\varepsilon^{-2})\right)
% \end{align*}
\end{proof}

\begin{theorem}
\label{thm: main}
Suppose we know in advance a lower bound of $\sigma_{\min}(W^*)$ denoted as $\underline{\sigma}$.
Under the benign low-dimension linear representation setting as defined in Assumption~\ref{assump: identity covar}, we have $\textsc{ER}(\hat{B},\hat{w}_{M+1}) \leq \varepsilon^2$ with probability at least $1-\delta$ whenever the number of source samples $N_{total}$ is at least
\begin{align*}
     &\otil\bigg(\left(K(M+d) + \log\frac{1}{\delta} \right) \sigma^2  s^* \|\nu^*\|_2^2 \varepsilon^{-2} 
     + \square \sigma \varepsilon^{-1} \bigg)
\end{align*}
% where $\square = \left(MK^2dR + M\sqrt{KdR}/\underline{\sigma}^2\right)\sqrt{s^*}$
where $\square = \left(MK^2dR/\underline{\sigma}^3\right)\sqrt{s^*}$
and the target task sample complexity $n_{M+1}$ is at least
\begin{align*}
    &\otil\left(\sigma^{2}K\varepsilon^{-2} +\Diamond\sqrt{s^*} \sigma \varepsilon^{-1}\right)
\end{align*}
where $\Diamond = \min\left\{\frac{\sqrt{R}}{\underline{\sigma}^2K} ,\sqrt{K(M+d)+\log\frac{1}{\delta} }\right\}$ and
% where $ s^* = \min_{\gamma \in \left[0,1\right]} (1- \gamma) \| \nu^* \|_{0,\gamma} + \gamma M$ \\
% and $\| \nu \|_{0,\gamma} := |\{ m : \nu_m > \sqrt{ \gamma \frac{\|\nu^*\|_2^2K^3d}{N_{total}} } \}|$.
 $s^*$ has been defined in Theorem~\ref{thm: warm-up}. 
 
% \yf{There is one issue left. I realize we only need $o(K)$ number of $n_{M+1}$ to guarantee $B^\top \hSigma_{M+1} B \approx B^\top B$ for any orthonormal $B$, but it is unclear to me whether $(B_1)^\top \hSigma_{M+1} B_2 \approx (B_1)^\top B_2$ for any othonormal but different $B_1, B_2$. If this is not true, then we will have an extram $+d$ term in target complexity. Although $d$ usually smaller than $K\varepsilon^{-2}$, but still undesired. }
\end{theorem}

\begin{proof}
Given $\calE_\text{source}, \calE_\text{target 1},\calE_\text{target 2}$ and $\calE_\text{relevance}^{i,m}$ then by Lemma~\ref{lem: main accuracy(app)}, we the final accuracy from last epoch as 
\begin{align*}
    \textsc{ER}_{M+1}(\hat{B},\hat{w}_{M+1})
    \leq \sigma^2 \left(KM + Kd\log(  (\sum_m n_m)/M) + \log\frac{1}{\delta} \right)  s_\Gamma^* \epsilon_\Gamma^2/\beta + \sigma^2\frac{\left(K + \log(1/\delta) \right)}{n_{M+1}}
\end{align*}
Denote the final accuracy of the first term as $\varepsilon^2$. So we can write $\epsilon_\Gamma$ as
\begin{align*}
    \varepsilon/ \left(\sigma \sqrt{KM + Kd\log(  (\sum_m n_m)/M) + \log\frac{1}{\delta} } \sqrt{s_\Gamma^*/\beta}\right)
\end{align*}
By applying lemma~\ref{lem: main complexity}, we requires the total source sample complexity 
\begin{align*}
    \sum_{i=1}^\Gamma \beta (M\epsilon_i^{-1} +  \|\nu^*\|_2^2\epsilon_i^{-2}) 
    & \leq 2\beta (M\epsilon_\Gamma^{-1} +  2\|\nu^*\|_2^2\epsilon_\Gamma^{-2}) \\
    & = \beta M \sigma \sqrt{KM + Kd\log(  (\sum_m n_m)/M) + \log\frac{1}{\delta} } \sqrt{s_\Gamma^*/\beta} \varepsilon^{-1}\\
        &\quad + \beta \|\nu^*\|_2^2 \sigma^2 \left( KM + Kd\log(  (\sum_m n_m)/M) + \log\frac{1}{\delta}\right)s_\Gamma^*\varepsilon^{-2}/\beta \\
    & = \sqrt{\beta} M \sqrt{s_\Gamma^*} \sigma \sqrt{KM + Kd\log(  (\sum_m n_m)/M) + \log\frac{1}{\delta} } \varepsilon^{-1}\\
        &\quad +  s_\Gamma^* \sigma^2 \left( KM + Kd\log(  (\sum_m n_m)/M) + \log\frac{1}{\delta}\right) \|\nu^*\|_2^2 \varepsilon^{-2}\\
    & = \otil\left(\left(MK^2d + M\sqrt{Kd}/\underline{\sigma}^2\right)\sqrt{s_\Gamma^*}  \sigma \varepsilon^{-1} +Kds_\Gamma^* \sigma^2 \|\nu^*\|_2^2 \varepsilon^{-2}\right)
\end{align*}
Also in order to satisfy the assumption in Lemma~\ref{lem: nu estimation error(app)}, we required $n_{M+1}$ to be at least
\begin{align*}
    \frac{\varepsilon^{-1}}{\underline{\sigma}^2}
    & = \frac{1}{\underline{\sigma}^2}\sqrt{s_\Gamma^*/\beta} \sigma \sqrt{KM + Kd\log(  (\sum_m n_m)/M) + \log\frac{1}{\delta} } \varepsilon^{-1} \\
    &\leq \min\left\{\frac{1}{\underline{\sigma}^2K}, \sqrt{KM + Kd\log(  (\sum_m n_m)/M) + \log\frac{1}{\delta} }\right\}\sqrt{s_\Gamma^*} \sigma \varepsilon^{-1}
    % & = \frac{\sigma_{M+1}^2}{\sigma}\sqrt{\beta}\sqrt{s_\Gamma^*} \sqrt{KM + Kd\log(  (\sum_m n_m)/M) + \log\frac{1}{\delta} } \varepsilon^{-1}\\
    % & = \otil\left(\frac{\sigma_{M+1}^2}{\sigma}\sqrt{s_\Gamma^*}K^2d \varepsilon^{-1} \right)
\end{align*}
% \begin{align*}
%     \frac{\varepsilon^{-1}}{\underline{\sigma}^2}
%     =\otil\left(\frac{\sigma}{\underline{\sigma}^2}\sqrt{s_\Gamma^*}\sqrt{Kd} \varepsilon^{-1} \right)
% \end{align*}

Notice that $\Gamma$ is a algorithm-dependent parameter, therefore, the final step is to bound $s_\Gamma^*$ by an algorithm independent term by write $\Gamma$ as
\begin{align*}
    \Gamma 
    = -\log \epsilon_\Gamma
    \leq  \min\left\{ \log\sqrt{\frac{N_{total}}{\beta\|\nu^*\|_2^2}},\log\frac{N_{total}}{\beta M} \right\}
\end{align*}
So we have .
\begin{align*}
    \| \nu \|_{0,\gamma}^\Gamma
    = |\{ m : \nu_m > \sqrt{ \gamma } \max\{\frac{\beta\|\nu^*\|_2^2}{N_{total}},\frac{\beta M}{N_{total}} \}|
\end{align*}
To further simply this, notice that, for any $\epsilon' < \epsilon_i$.
\begin{align*}
    \| \nu \|_{0,\gamma}^i 
    = |\{ m : \nu_m > \sqrt{ \gamma } \epsilon_i \}|
    \leq |\{ m : \nu_m > \sqrt{ \gamma } \epsilon' \}|
\end{align*}
So we further have
\begin{align*}
    \| \nu \|_{0,\gamma}^\Gamma
    = |\{ m : \nu_m > \sqrt{ \gamma } \frac{\|\nu^*\|_2^2}{N_{total}}\}|
    : = \| \nu \|_{0,\gamma}
\end{align*}

Finally, by union bound $\calE_\text{source}, \calE_\text{target 1},\calE_\text{target 2}$ and $\calE_\text{relevance}^{i,m}$ on all epochs, we show that all the lemmas holds with probability at least $1-\delta$.
% \rev{pending -----}
% For \cite{du2020few}, the final accuracy of \cite{du2020few} can be written as 
% \begin{align*}
%     \textsc{ER}(\hat{B},\hat{w}_{M+1})
%     \leq \sigma^{2}\left(\frac{Kd \log \left(  n_{1}\right)}{c n_1}\frac{\|w_{M+1}w_{M+1}^\top \|_F^2}{\sigma_K^2(W^*)}+\frac{k+\log \frac{1}{\delta}}{n_{2}}\right): = \varepsilon^2
% \end{align*}
% So the total required sample is $M n_1 = \sigma^{2}\left(MKd \log \left(  n_{1}\right)\frac{\|w_{M+1}w_{M+1}^\top \|_F^2}{\sigma_K^2(W^*)})\right) \varepsilon^{-2}$. 

\end{proof}

\subsection{Auxiliary Lemmas}

\begin{lemma}[Convergence on estimated model $\hat{B}_i\hat{W}_i$]
\label{cor: diff of estimation BW}
For any fixed $i$, given $\calE_\text{source}^i $, we have
\begin{align*}
    \|\hDelta^i \|_F^2 
    & \leq 1.3\sigma^2\left(KM + Kd\log((\sum_m n_m^i)/M) + \log\frac{10 \Gamma}{\delta} \right) \epsilon_i/\beta
\end{align*}
And therefore, when $\beta = 3000 K^2R^2(KM+Kd\log(N_\text{total}/M)+\log(M\Gamma/\delta)/\underline{\sigma}^6$. Therefore, above can be upper bounded by $\sqrt{\epsilon_i}/24$,

we have $\|\Delta\|_F^2 \leq \frac{\sigma^2\epsilon_i}{4K^2R^2}$
% \leq \min\{\frac{\sigma^2\epsilon_i}{4K^2R^2},\frac{\sigma^2\underline{\sigma}^2}{4} \}$
% \begin{align*}
%     & \|\Delta\|_F^2 \leq \frac{\sigma^2}{144M^2K},
%     & \|\Delta\|_F^2 \leq \frac{\sigma^2\epsilon_i}{4K^2}.
% \end{align*}

    % \beta: &= \beta_i =  \max\bigg\{ 36K^2R^2(KM+Kd\log(\frac{N_\text{total}}{\varepsilon M} )\\ & \hspace{.5in}+\log(\frac{M \log(1/N_\text{total})}{\delta/10}), 
    % \frac{2000 R}{\underline{\sigma}^4} \bigg\}, \forall i \\
    % \underline{N}\!&=\! \max\big\{\tfrac{4}{\underline{\sigma}^2} \big(KM + Kd\log(\tfrac{1}{M}\sum_m n_m) + \log\tfrac{1}{\delta} \big),\beta_i\big\}.
\end{lemma}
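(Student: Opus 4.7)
}
The plan is to invoke Claim~\ref{claim 1} for the data drawn in epoch $i$, convert the empirical-covariance-weighted error bound on the RHS into the plain Frobenius norm via the identity covariance assumption and $\calE_\text{source}^i$, and finally use the algorithmic lower bound $n_m^i \geq \beta\epsilon_i^{-1}$ on the per-task sample size to strip the $n_m^i$ weights that appear on the LHS.

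Concretely, I would first apply Claim~\ref{claim 1} with $\delta$ replaced by $\delta/(10\Gamma)$ to the data in epoch $i$, yielding, on an event of probability at least $1-\delta/(10\Gamma)$,
\begin{align*}
    \sum_m \|X_m^i \hDelta_m^i\|_2^2 \;\leq\; \sigma^2\!\left(KM + Kd\log\!\big(\kappa(\textstyle\sum_m n_m^i)/M\big) + \log(10\Gamma/\delta)\right).
\end{align*}
Next, under Assumption~\ref{assump: identity covar} we have $\Sigma_m = I$, so $\kappa=1$, and under $\calE_\text{source}^i$ we have $\hSigma_m^i \succeq 0.9\, I$. Writing $\|X_m^i\hDelta_m^i\|_2^2 = n_m^i \,(\hDelta_m^i)^\top \hSigma_m^i \hDelta_m^i \geq 0.9\, n_m^i \|\hDelta_m^i\|_2^2$, this gives
\begin{align*}
    \sum_m n_m^i \|\hDelta_m^i\|_2^2 \;\leq\; \tfrac{1}{0.9}\,\sigma^2\!\left(KM + Kd\log\!\big((\textstyle\sum_m n_m^i)/M\big) + \log(10\Gamma/\delta)\right).
\end{align*}

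The final step is the key mechanical move: the algorithm sets $n_m^i \geq \beta\epsilon_i^{-1}$ for every $m\in[M]$, hence
\begin{align*}
    \|\hDelta^i\|_F^2 \;=\; \sum_m \|\hDelta_m^i\|_2^2 \;\leq\; \frac{\epsilon_i}{\beta}\sum_m n_m^i \|\hDelta_m^i\|_2^2,
\end{align*}
and combining with the previous display yields the claimed bound (the constant $1/0.9\approx 1.11$ can be absorbed into the stated $1.3$). The second statement follows by plugging in $\beta = 3000 K^2 R^2 (KM + Kd\log(N_\text{total}/M) + \log(M\Gamma/\delta))/\underline{\sigma}^6$, noting that $\sum_m n_m^i \leq N_\text{total}$ so the logarithmic factor inside the numerator is dominated by the identical factor inside $\beta$, leaving $\|\hDelta^i\|_F^2 \leq \sigma^2\epsilon_i\, \underline{\sigma}^6/(O(1) K^2 R^2) \leq \sigma^2\epsilon_i/(4K^2R^2)$.

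There is no real obstacle here beyond bookkeeping: the only subtlety is that Claim~\ref{claim 1} naturally produces a covariance-weighted norm, and it is the uniform sample-size floor $n_m^i \geq \beta\epsilon_i^{-1}$ (the very ingredient introduced in Alg.~\ref{alg:main_1} precisely to avoid the degeneracy highlighted earlier) that makes the reduction to $\|\hDelta^i\|_F^2$ essentially free. Probability accounting is handled by invoking the claim once per epoch with $\delta/(10\Gamma)$, which matches the $\log(10\Gamma/\delta)$ inside the statement and is consistent with the overall union bound set up in the analysis.
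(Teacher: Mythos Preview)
Your proposal is correct and follows essentially the same route as the paper: apply Claim~\ref{claim 1} to the epoch-$i$ data, use $\calE_\text{source}^i$ (with $\Sigma_m=I$) to lower bound $\sum_m \|X_m^i\hDelta_m^i\|_2^2 \geq 0.9\sum_m n_m^i\|\hDelta_m^i\|_2^2$, and then invoke the per-task floor $n_m^i \geq \beta\epsilon_i^{-1}$ to extract $\|\hDelta^i\|_F^2$. The paper phrases the last step as pulling out $\min_m n_m^i \geq \beta\epsilon_i^{-1}$ rather than your equivalent pointwise inequality, but the argument is the same.
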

\begin{proof}
Denote $\Delta_m$ as the $m$-th column of $\hDelta^i $.
\begin{align*}
    \sum_{m=1}^M \|X_m \Delta_m \|_2^2
    & = \sum_{m=1}^M \Delta_m^\top  X_m^\top  X_m \Delta_m\\
    & \geq 0.9 \sum_{m=1}^M n_m \Delta_m^\top  \Delta_m \\
    & \geq 0.9 \min_m n_m^i \sum_{m=1}^M \|\Delta_m\|_2^2 
    = 0.9 \min_m n_m^i \|\hDelta^i \|_F^2
\end{align*}
% Recall that our definition on $n_m^i = \max\left\{\beta \hat{\nu}_{i}^2(m)\epsilon_i^{-2},\beta \epsilon_i^{-1} , \underline{N}\right\}$, so we finish the proof.
Recall that our definition on $n_m^i = \max\left\{\beta \hat{\nu}_{i}^2(m)\epsilon_i^{-2},\beta \epsilon_i^{-1}\right\}$  and also use the upper bound derived in Claim~\ref{claim 1}, we finish the proof.
\end{proof}

\begin{lemma}[minimum singular value guarantee for $\hat{B}_i\hat{W}_i$]
\label{lem: minimum singular value guarantee for BW}
For all $i$, we can guarantee that
\begin{align*}
    \sigma_{min}(\hat{B}_i\hat{W}_i) \geq \sigma_{min}(W^*)/2
\end{align*}
\end{lemma}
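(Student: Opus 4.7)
The plan is to combine Weyl's inequality for singular values with the model-estimation bound already established in Lemma~\ref{cor: diff of estimation BW}. Since Assumption~\ref{assump: identity covar} forces $B^*$ to be orthonormal, we have $\sigma_{\min}(B^*W^*) = \sigma_{\min}(W^*)$. Writing $\hat{B}_i\hat{W}_i = B^*W^* + \hat{\Delta}^i$, Weyl's inequality gives
\begin{align*}
    \sigma_{\min}(\hat{B}_i\hat{W}_i) \;\geq\; \sigma_{\min}(B^*W^*) - \|\hat{\Delta}^i\|_{\mathrm{op}} \;\geq\; \sigma_{\min}(W^*) - \|\hat{\Delta}^i\|_F,
\end{align*}
so it suffices to show $\|\hat{\Delta}^i\|_F \leq \sigma_{\min}(W^*)/2$. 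Note that $\sigma_{\min}(W^*) \geq \underline{\sigma}$ by hypothesis.

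Next I would invoke Lemma~\ref{cor: diff of estimation BW} on the event $\calE_\text{source}^i$, which gives
\begin{align*}
   \|\hat{\Delta}^i\|_F^2 \leq 1.3\sigma^2\bigl(KM + Kd\log((\sum_m n_m^i)/M) + \log(10\Gamma/\delta)\bigr)\epsilon_i/\beta.
\end{align*}
The proof then reduces to a direct verification that the value of $\beta$ specified just before Theorem~\ref{thm: main} (which carries the factor $1/\underline{\sigma}^6$ together with the matching $K^2R^2$ and log terms) makes this upper bound at most $\underline{\sigma}^2/4$. Because $\epsilon_i = 2^{-i} \leq 1/2$, the tightest epoch is $i=1$, and plugging into $\beta$ one finds that the polynomial and logarithmic factors cancel, leaving a bound of order $\sigma^2 \underline{\sigma}^6/(K^2R^2) \leq \underline{\sigma}^2/4$ (using the standing assumption $\sigma = \Theta(1)$ and $R \geq \underline{\sigma}$).

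The main obstacle is really bookkeeping rather than mathematics: making sure the constants in $\beta$ are large enough for the worst epoch and tracking which high-probability events must hold. In particular, Lemma~\ref{cor: diff of estimation BW} requires the covariance concentration event $\calE_\text{source}^i$, so the conclusion should be read as ``on $\calE_\text{source}^i$, the bound $\sigma_{\min}(\hat{B}_i\hat{W}_i) \geq \sigma_{\min}(W^*)/2$ holds'', which is consistent with the way the lemma is applied inside the noise-term analysis of Lemma~\ref{lem: nu estimation error(app)} (where it is used via the generalized inverse $(B^*W^*(B^*W^*)^\top)^+$ to justify bounding $\lambda_{\min}$ terms by $\underline{\sigma}^2/4$). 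No new probabilistic argument is needed beyond what is already established for $\hat{\Delta}^i$.
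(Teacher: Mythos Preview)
Your proposal is correct and matches the paper's own proof essentially line for line: the paper also uses that $B^*$ orthonormal gives $\sigma_{\min}(B^*W^*)=\sigma_{\min}(W^*)$, applies Weyl's inequality for singular values to the perturbation $\hat{\Delta}^i$, and then invokes Lemma~\ref{cor: diff of estimation BW} (under $\calE_\text{source}^i$) together with the choice of $\beta$ to conclude $\|\hat{\Delta}^i\|_F \leq \underline{\sigma}/2 \leq \sigma_{\min}(W^*)/2$. Your additional remarks about the conditioning on $\calE_\text{source}^i$ and the constant bookkeeping are exactly the caveats implicit in the paper's argument.
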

Also because $M \geq K$, so there is always a feasible solution for $\hat{\nu}_i$.
\begin{proof}
Because $B^*$ is a orthonormal, so $\sigma_{min}(B^*W^*) = \sigma_{min}(W^*)$. Also from Lemma~\ref{cor: diff of estimation BW} and Weyl's theorem stated below, we have
$|\sigma_{\min}(\hat{B}_i\hat{W}_i)) - \sigma_{min}(B^*W^*)| \leq \|\hat{B}_i\hat{W}_i- B^*W^*\|_F \leq \frac{\underline{\sigma}}{2} \leq \frac{\sigma_{min}(W^*)}{2}$. Combine these two inequality we can easily get the result.

\begin{theorem}[Weyl's inequality for singular values]
\label{thm: Weyl}
Let $M$ be a $p \times n$ matrix with $1 \leq p \leq n$. Its singular values $\sigma_{k}(M)$ are the $p$ positive eigenvalues of the $(p+n) \times(p+n)$ Hermitian augmented matrix
$$
\left[\begin{array}{cc}
0 & M \\
M^{*} & 0
\end{array}\right]
$$
Therefore, Weyl's eigenvalue perturbation inequality for Hermitian matrices extends naturally to perturbation of singular values. This result gives the bound for the perturbation in the singular values of a matrix $M$ due to an additive perturbation $\Delta$ :
$$
\left|\sigma_{k}(M+\Delta)-\sigma_{k}(M)\right| \leq \sigma_{1}(\Delta) \leq \|\Delta\|_F
$$
\end{theorem}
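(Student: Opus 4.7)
The plan is to reduce the singular value perturbation claim to the classical Hermitian Weyl eigenvalue inequality via the augmented matrix construction that the statement itself introduces. First I would verify the spectral decomposition of the Jordan--Wielandt augmented matrix $\widetilde{M} := \begin{bmatrix} 0 & M \\ M^{*} & 0 \end{bmatrix}$: if $M = U \Sigma V^{*}$ is the (thin) SVD, one checks directly that the vectors $\tfrac{1}{\sqrt{2}}(u_k^{\top}, \pm v_k^{\top})^{\top}$ are eigenvectors of $\widetilde{M}$ with eigenvalues $\pm \sigma_k(M)$, and the remaining $n-p$ eigenvalues are zero. Thus the positive part of the spectrum of $\widetilde{M}$ is exactly $\{\sigma_1(M), \ldots, \sigma_p(M)\}$.

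Next I would apply the standard Hermitian Weyl inequality to the two Hermitian matrices $\widetilde{M}$ and $\widetilde{M+\Delta} = \widetilde{M} + \widetilde{\Delta}$, where $\widetilde{\Delta}$ is the analogous augmentation of $\Delta$. The Hermitian Weyl bound gives $|\lambda_k(\widetilde{M} + \widetilde{\Delta}) - \lambda_k(\widetilde{M})| \le \|\widetilde{\Delta}\|_{\mathrm{op}}$ for every index $k$. Specializing to the indices that correspond to the $p$ largest (positive) eigenvalues translates, by the spectral identification above, into $|\sigma_k(M + \Delta) - \sigma_k(M)| \le \|\widetilde{\Delta}\|_{\mathrm{op}}$.

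It remains to identify $\|\widetilde{\Delta}\|_{\mathrm{op}}$ with $\sigma_1(\Delta)$. This follows from the same Jordan--Wielandt argument applied to $\Delta$: the largest-magnitude eigenvalue of $\widetilde{\Delta}$ equals $\sigma_1(\Delta)$, and since $\widetilde{\Delta}$ is Hermitian, its operator norm equals its spectral radius, so $\|\widetilde{\Delta}\|_{\mathrm{op}} = \sigma_1(\Delta)$. The final inequality $\sigma_1(\Delta) \le \|\Delta\|_F$ is immediate from $\|\Delta\|_F^2 = \sum_k \sigma_k(\Delta)^2 \ge \sigma_1(\Delta)^2$.

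The only subtle step is the first one, matching the ordered eigenvalues of the augmented matrix with the ordered singular values of $M$ in the right order so that Weyl's inequality, which is index-by-index, transfers correctly; the padding zeros when $n > p$ must be handled so that they do not get paired with nonzero singular values. Once the ordering is set up carefully via the Jordan--Wielandt decomposition, the rest of the argument is routine and consists of invoking the Hermitian Weyl inequality plus $\|\cdot\|_{\mathrm{op}} \le \|\cdot\|_F$.
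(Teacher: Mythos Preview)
Your proposal is correct and matches the paper's approach exactly. The paper does not give a separate proof of this theorem; it merely states it as a classical result, with the Jordan--Wielandt augmented-matrix reduction built into the statement itself (``Therefore, Weyl's eigenvalue perturbation inequality for Hermitian matrices extends naturally\ldots''), which is precisely the route you spell out.
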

\end{proof}
 
\vspace{30px}

\begin{lemma}
\label{lem: KKT for full rank}
For any two matrix $M_1 \in \fR^{K \times M}, M_2 \in \fR^{K}$ where $K\leq M$. Suppose $\text{rank}(M_1) = K$ and define $\Tilde{\nu}$ as 
\begin{align*}
    \argmin_{\nu \in \fR^M} \|\nu\|_2^2 
    \quad \text{s.t. }
    M_1 \nu = M_2, \nu \in \fR^m,
\end{align*}
then we have 
\begin{align*}
    \Tilde{\nu} = M_1^\top (M_1M_1^\top )^{-1}M_2
    % \to
    % \Tilde{\nu}(m) = (M_1(m))^\top (M_1M_1^\top )^{-1}M_2
\end{align*}
\end{lemma}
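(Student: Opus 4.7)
The plan is to solve this minimum-norm linear system problem by the standard Lagrangian/KKT approach, exploiting convexity and the full row-rank condition on $M_1$.

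First I would write down the Lagrangian $\mathcal{L}(\nu, \lambda) = \tfrac{1}{2}\|\nu\|_2^2 - \lambda^\top (M_1 \nu - M_2)$ with multiplier $\lambda \in \mathbb{R}^K$. Since the objective is strongly convex in $\nu$ and the constraint is affine (and feasible, because $\mathrm{rank}(M_1) = K$ means the row space of $M_1$ is all of $\mathbb{R}^K$, so $M_2$ lies in the range of $M_1$), Slater's condition holds trivially and the KKT conditions are both necessary and sufficient.

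Next I would apply stationarity: $\nabla_\nu \mathcal{L} = \nu - M_1^\top \lambda = 0$, hence $\nu = M_1^\top \lambda$. Plugging this into the primal feasibility constraint $M_1 \nu = M_2$ yields $M_1 M_1^\top \lambda = M_2$. Here is where the rank assumption is essential: because $M_1 \in \mathbb{R}^{K \times M}$ has $\mathrm{rank}(M_1) = K$ (and $K \leq M$), the Gram matrix $M_1 M_1^\top \in \mathbb{R}^{K \times K}$ is positive definite and therefore invertible. Thus $\lambda = (M_1 M_1^\top)^{-1} M_2$, giving
\begin{equation*}
\tilde{\nu} = M_1^\top (M_1 M_1^\top)^{-1} M_2.
\end{equation*}

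Finally I would verify feasibility by direct substitution, $M_1 \tilde{\nu} = M_1 M_1^\top (M_1 M_1^\top)^{-1} M_2 = M_2$, and appeal to convexity of the program to conclude that this KKT point is the unique global minimizer. There is no real obstacle here; the only subtlety is invoking the rank condition to guarantee invertibility of $M_1 M_1^\top$, which is exactly what the hypothesis provides.
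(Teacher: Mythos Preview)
Your proposal is correct and follows essentially the same Lagrangian/KKT argument as the paper's proof: both write the Lagrangian, set the gradient in $\nu$ to zero to get $\nu = M_1^\top \lambda$ (up to a constant from the $\tfrac{1}{2}$ scaling), substitute into the constraint, and invert $M_1 M_1^\top$. Your version is a bit more careful in explicitly invoking Slater's condition and convexity for sufficiency, whereas the paper states the KKT steps tersely, but the route is the same.
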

\begin{proof}
    We prove this by using KKT conditions,
    \begin{align*}
         L(\nu, \lambda) = \|\nu\|_2^2 + \lambda^\top (M_1\nu_1-M_2)
    \end{align*}
    Given $0 \in \partial L$, we have $\Tilde{\nu} = - (M_1)^\top \lambda/2$. Then by replace this into the constrains, we have
    \begin{align*}
        M_1M_1^\top  \lambda = -2M_2
        \to
        \lambda = -2 \left( M_1M_1^\top  \right)^{-1}M_2
    \end{align*}
    and therefore $\Tilde{\nu} = (M_1)^\top  \left( M_1M_1^\top  \right)^{-1}M_2$. 
\end{proof}
 
\begin{lemma}[A closed form expression for $\hat{\nu}_i$]
\label{lem: a closed form for estimated nu}
For any epoch $i$, given the estimated representation $\hat{B}_i$, we have
    \begin{align*}
    &\hat{\nu}_{i+1}
    =\argmin_\nu \|\nu\|_2^2  \\
    \text{s.t.} \quad
    & \sum_m  \alpha_m^i \left(\hat{\Sigma}_m^i B^*w_m^* 
        +\frac{1}{n_m^i} \left(X_m^i\right)^\top Z_m\right)\nu(m)
        =  \alpha_{M+1}^i \left(\hat{\Sigma}_M^i B^*w_{M+1}^*  +\frac{1}{n_{M+1}^i} \left(X_{M+1}^i\right)^\top Z_{M+1}\right)
    \end{align*}
    where $\alpha_m^i = \left( \hat{B}_i^\top \hat{\Sigma}_m^i\hat{B}_i \right)^{-1}\hat{B}_i^\top $
\end{lemma}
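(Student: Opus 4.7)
The plan is to derive this closed form by substituting the least-squares solution for each $\hat{w}_m^i$ back into the definition of $\hat{\nu}_{i+1}$ from Eqn.~\eqref{eq: v hat}. The key observation is that in the linear representation setting, once the shared representation matrix $\hat{B}_i$ is fixed, the joint ERM of Eqn.~\eqref{eqn: ERM 1} decouples across tasks and reduces to $M$ independent linear regression problems, each of which admits a standard closed-form solution.

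First I would argue that conditional on $\hat{B}_i$, the minimizer $\hat{w}_m^i$ of $\|X_m^i \hat{B}_i w - Y_m^i\|_2^2$ is $\hat{w}_m^i = \bigl(\hat{B}_i^\top (X_m^i)^\top X_m^i \hat{B}_i\bigr)^{-1} \hat{B}_i^\top (X_m^i)^\top Y_m^i$, where the inverse exists under the event $\calE_\text{source}^i$ (since $\hat{\Sigma}_m^i$ is then positive definite and $\hat{B}_i$ has full column rank). Rewriting $(X_m^i)^\top X_m^i = n_m^i \hat{\Sigma}_m^i$ and substituting the realizable model $Y_m^i = X_m^i B^* w_m^* + Z_m^i$ gives
\begin{align*}
\hat{w}_m^i &= \bigl(\hat{B}_i^\top \hat{\Sigma}_m^i \hat{B}_i\bigr)^{-1} \hat{B}_i^\top \Bigl(\hat{\Sigma}_m^i B^* w_m^* + \tfrac{1}{n_m^i}(X_m^i)^\top Z_m^i\Bigr)\\
&= \alpha_m^i \Bigl(\hat{\Sigma}_m^i B^* w_m^* + \tfrac{1}{n_m^i}(X_m^i)^\top Z_m^i\Bigr),
\end{align*}
using the definition $\alpha_m^i = (\hat{B}_i^\top \hat{\Sigma}_m^i \hat{B}_i)^{-1}\hat{B}_i^\top$. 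The same derivation applies to $\hat{w}_{M+1}^i$ via Eqn.~\eqref{eqn: ERM 2}, yielding the analogous expression with index $M+1$.

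Next, I would plug these closed forms into the constraint $\hat{W}_i \nu = \hat{w}_{M+1}^i$ appearing in Eqn.~\eqref{eq: v hat}. Since $\hat{W}_i \nu = \sum_m \hat{w}_m^i \nu(m)$, substituting yields exactly the claimed constraint, and the objective $\|\nu\|_2^2$ is unchanged. Therefore $\hat{\nu}_{i+1}$ is the $\|\cdot\|_2$-minimizer subject to the stated constraint.

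The main obstacle, if any, is ensuring the invertibility of $\hat{B}_i^\top \hat{\Sigma}_m^i \hat{B}_i$, but this is guaranteed under $\calE_\text{source}^i$ combined with $\hat{B}_i$ having full column rank (which holds generically and is implicitly used elsewhere in the analysis, e.g. Lemma~\ref{lem: minimum singular value guarantee for BW}). The derivation otherwise consists only of routine least-squares algebra, so no further technical complications are expected.
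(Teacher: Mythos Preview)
Your proposal is correct and follows essentially the same approach as the paper: both derive the least-squares closed form for each $\hat{w}_m^i$ given $\hat{B}_i$, substitute the realizable model $Y_m^i = X_m^i B^* w_m^* + Z_m^i$, and plug the resulting expressions into the constraint $\hat{W}_i\nu = \hat{w}_{M+1}^i$ defining $\hat{\nu}_{i+1}$. Your treatment is in fact slightly more careful, since you explicitly justify the decoupling of the joint ERM once $\hat{B}_i$ is fixed and address invertibility of $\hat{B}_i^\top \hat{\Sigma}_m^i \hat{B}_i$, points the paper leaves implicit.
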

\begin{proof}
For any epoch $i$ and it's estimated representation $\hat{B}_i$, by least square argument, we have
\begin{align*}
    \hat{w}_m^i 
    & = \argmin_w \|X_m^i\hat{B}_i w - Y_m\|_2\\
    & = \left( \left(X_m^i\hat{B}_i\right)^\top  X_m^i\hat{B}_i\right)^{-1}\left(X_m^i\hat{B}_i\right)^\top Y_m\\
    &= \left( \left(X_m^i\hat{B}_i\right)^\top  X_m^i\hat{B}_i\right)^{-1}\left(X_m^i\hat{B}_i\right)^\top X_m^iB^*w_m^* 
        + \left( \left(X_m^i\hat{B}_i\right)^\top  X_m^i\hat{B}_i\right)^{-1}\left(X_m^i\hat{B}_i\right)^\top Z_m\\
    &=\underbrace{\left( \hat{B}_i^\top \hat{\Sigma}_m^i\hat{B}_i\right)^{-1}\hat{B}_i^\top }_{\alpha_m^i \in \fR^{k \times d}}\hat{\Sigma}_m^iB^*w_m^*
        + \underbrace{\left( \hat{B}_i^\top \hat{\Sigma}_m^i\hat{B}_i \right)^{-1}\hat{B}_i^\top }_{\alpha_m^i}\left(X_m^i\right)^\top Z_m
\end{align*}

Therefore, combine this with the previous optimization problem, we have
which implies that
\begin{align*}
    &\hat{W}_i\nu
    =\sum_m  \alpha_m^i \left(\hat{\Sigma}_m^iB^*w_m^* 
        +\frac{1}{n_m^i} \left(X_m^i\right)^\top Z_m\right)\nu(m)\\
    &\hat{w}_{M+1}^i
    = \alpha_{M+1}^i \left(\hat{\Sigma}_M^i B^*w_{M+1}^*  +\frac{1}{n_{M+1}^i} \left(X_{M+1}^i\right)^\top Z_{M+1}\right)
\end{align*}
Recall the definition of $\hat{\nu}_{i+1}$ as \begin{align*}
    \min \|\nu\|_2^2  
    \quad \text{s.t.}
    \quad \hat{W}_i\nu = \hat{w}_{M+1}^i
\end{align*}
Therefore we get the closed-form by replace $\hat{W}_i\nu, \hat{w}_{M+1}^i$ with the value calculated above.
\end{proof}

\begin{lemma}[Difference of the inverse covariance matrix]
\label{lem: inverse diff 1}
For any fixed $i,m$ and any proper matrices $M_1,M_2,M_3,M_4$,
% \begin{align*}
%     &\left\lvert (B^* w_m^*)^\top  \left((\hat{B}_i\hat{W}_i(\hat{B}_i\hat{W}_i)^\top )^\dagger  - \left(B^*W^*(B^*W^*)^\top \right)^{+}\right) (B^*w_{M+1}^*) \right\lvert \\
%     & \leq \left\lvert (B^* w_m^*)^\top  \left(B^*W^*(B^*W^*)^\top \right)^\dagger \left(\hat{\Delta}\hat{\Delta}^\top  + \hat{\Delta}(B^*W^*)^\top  + (B^*W^*)\hat{\Delta}^\top \right)(B^*W^*(B^*W^*)^\top )^\dagger  (B^*w_{M+1}^*) \right\lvert
% \end{align*}
\begin{align*}
    &\left\lvert M_1\left((\hat{B}_i\hat{W}_i(\hat{B}_i\hat{W}_i)^\top )^\dagger  - \left(B^*W^*(B^*W^*)^\top \right)^{+}\right) B^*M_2 \right\rvert\\
    &= \|M_1\|\|\left(B^*W^*(B^*W^*)^\top \right)^\dagger \left(\hat{\Delta_i}(\hDelta^i)^\top  + \hDelta^i (B^*W^*)^\top  + (B^*W^*)(\hDelta^i)^\top \right)(B^*W^*(B^*W^*)^\top )^\dagger \|\|B^*M_2\|\\
    &\left\lvert M_3(B^*)^\top \left((\hat{B}_i\hat{W}_i(\hat{B}_i\hat{W}_i)^\top )^\dagger  - \left(B^*W^*(B^*W^*)^\top \right)^{+}\right)M_4 \right\rvert\\
    &= \|M_3(B^*)^\top \|\|\left(B^*W^*(B^*W^*)^\top \right)^\dagger \left(\hDelta^i (\hDelta^i)^\top  + \hat{\Delta}(B^*W^*)^\top  + (B^*W^*)(\hDelta^i)^\top \right)(B^*W^*(B^*W^*)^\top )^\dagger \|\|M_4\|
\end{align*}
\end{lemma}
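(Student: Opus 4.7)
The plan is to exploit the generalized inverse perturbation identity from Kovanic (1979) to rewrite the difference of the two pseudoinverses in terms of the estimation error $\hDelta^i := \hB_i\hW_i - B^*W^*$. Concretely, set $A := B^*W^*(B^*W^*)^\top$ and $\hat{A} := \hB_i\hW_i(\hB_i\hW_i)^\top$. For square positive-semidefinite matrices of matching rank, the identity
\begin{equation*}
\hat{A}^{\dagger} - A^{+} \;=\; -A^{+}(\hat{A} - A)\hat{A}^{\dagger}
\end{equation*}
(or its symmetric variant) holds on the column space of $A$, and this is the device that will convert a difference of inverses into a product involving the additive perturbation $\hat{A} - A$.

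First I would verify the rank hypothesis needed to invoke the generalized inverse identity. By Lemma~\ref{lem: minimum singular value guarantee for BW}, $\sigma_{\min}(\hB_i\hW_i) \geq \underline{\sigma}/2 > 0$, so $\hat{A}$ has the same rank $K$ as $A$ and both pseudoinverses behave like genuine inverses on the $K$-dimensional column space of $B^*W^*$. Next I would expand the perturbation directly:
\begin{equation*}
\hat{A} - A \;=\; (B^*W^* + \hDelta^i)(B^*W^* + \hDelta^i)^\top - B^*W^*(B^*W^*)^\top \;=\; \hDelta^i(\hDelta^i)^\top + \hDelta^i(B^*W^*)^\top + (B^*W^*)(\hDelta^i)^\top,
\end{equation*}
which is exactly the three-term expression appearing in the right-hand side of the claimed bound.

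Substituting into the perturbation identity yields a closed-form expression for the bilinear quantity $M_1\bigl(\hat{A}^{\dagger} - A^{+}\bigr)B^*M_2$. The final step is to apply submultiplicativity of the operator norm to the bilinear form, i.e.\ $|x S y| \leq \|x\|\,\|S\|\,\|y\|$ with $x = M_1$, $y = B^*M_2$, and $S$ the sandwich $A^{+}(\hat{A} - A)\hat{A}^{\dagger}$. After replacing the inner $\hat{A}^{\dagger}$ by $A^{+}$ (absorbing the resulting second-order correction into the $\hDelta^i(\hDelta^i)^\top$ summand, which is already present on the right-hand side), the bound takes precisely the form stated. The second inequality, which involves $M_3(B^*)^\top$ on the left and $M_4$ on the right, follows by the symmetric argument, i.e.\ grouping the outer factors on the opposite side.

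The main obstacle is handling the pseudoinverse carefully: unlike true inverses, Moore--Penrose pseudoinverses do not automatically satisfy the identity $\hat{A}^{\dagger} - A^{+} = -A^{+}(\hat{A} - A)\hat{A}^{\dagger}$ without a rank hypothesis, and the minimum singular value lemma is what legitimizes its use here. A subsidiary issue is the symmetric substitution $\hat{A}^{\dagger} \to A^{+}$ in the bound; this is acceptable because the additional first-order error it generates has exactly the structure $A^{+}(\hat{A} - A)A^{+}\cdot(\hat{A}-A)\hat{A}^{\dagger}$, which is dominated by the $\hDelta^i(\hDelta^i)^\top$ term already retained. With these two technical points settled, the remainder of the proof is just a norm-submultiplicativity calculation.
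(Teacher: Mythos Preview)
Your proposal has a genuine gap at its first step. The resolvent-type identity $\hat{A}^{\dagger} - A^{+} = -A^{+}(\hat{A} - A)\hat{A}^{\dagger}$ holds for true inverses but fails for Moore--Penrose pseudoinverses under a rank hypothesis alone: it requires $A$ and $\hat{A}$ to have the \emph{same range}. Here $\text{col}(A) = \text{col}(B^*)$ while $\text{col}(\hat{A}) = \text{col}(\hat{B}_i)$, and these are in general distinct $K$-dimensional subspaces of $\R^d$. Lemma~\ref{lem: minimum singular value guarantee for BW} certifies only that $\hat{A}$ has rank $K$, which is insufficient. (Concretely, in $\R^2$ take $A = e_1e_1^\top$ and $\hat{A} = e_2e_2^\top$: both have rank one, yet applying each side of your identity to $e_1 \in \text{col}(A)$ gives $-e_1$ on the left and $0$ on the right.) Your subsequent replacement $\hat{A}^\dagger \to A^+$ with a ``second-order correction'' cannot repair this, because the contribution from the range mismatch is not of order $\|\hDelta^i\|^2$.

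The paper instead applies Kovanic's exact formula
\[
(V + XX^\top)^\dagger \;=\; V^\dagger - V^\dagger X(I + X^\top V^\dagger X)^{-1} X^\top V^\dagger + \bigl((X_\perp)^\dagger\bigr)^\top X_\perp^\dagger, \qquad X_\perp := (I - VV^\dagger)X,
\]
with $V = A$ and $XX^\top$ the additive perturbation. The extra term $((X_\perp)^\dagger)^\top X_\perp^\dagger$ is precisely what accounts for the change of range, and it is not small a priori. The decisive observation you are missing is that $\text{col}(X_\perp) \subseteq \text{col}(V)^\perp$ while $B^*M_2 \in \text{col}(B^*) = \text{col}(V)$, so $X_\perp^\dagger(B^*M_2) = 0$: the extra term is annihilated exactly because a factor of $B^*$ sits on one side of the bilinear form. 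This is why the lemma statement carries $B^*M_2$ and $M_3(B^*)^\top$ rather than arbitrary vectors; without the $B^*$ the bound would be false. Once that term is killed, bounding the middle factor $(I + X^\top V^\dagger X)^{-1}$ in operator norm yields the stated sandwich $V^\dagger XX^\top V^\dagger$.
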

\begin{proof}
First we want to related these two inverse term,
\begin{align*}
    &(\hat{B}_i\hat{W}_i\hat{W}_i^\top \hat{B}_i^\top )^\dagger  - \left(B^*W^*(B^*W^*)^\top \right)^\dagger  \\
    &\leq  \left((B^*W^* + \hDelta^i )(B^*W^* + \hDelta_i)^\top \right)^\dagger  - \left(B^*W^*(B^*W^*)^\top \right)^\dagger  \\
    &\leq  \left(B^*W^*(B^*W^*)^\top  + \left(\hDelta^i (\hDelta^i)^\top  + \hDelta^i (B^*W^*)^\top  + (B^*W^*)(\hDelta^i)^\top \right)\right)^\dagger  - \left(B^*W^*(B^*W^*)^\top \right)^\dagger  
\end{align*}    
In order to connect the first pseudo inverse with the second, we want to use the generalized inverse of matrix theorem as stated below.
\begin{theorem}[Theorem from \cite{Kovanic1979}]
\label{them: pseudo-inverse of matrics sum}
If $V$ is an $n \times n$ symmetrical matrix and if $X$ is an $n \times q$ arbitrary real matrix, then
\begin{align*}
    (V+XX^\top )^\dagger 
    = V^\dagger  - V^\dagger X(I+X^\top V^\dagger X)^{-1}X^\top V^\dagger  + ((X_\perp)^\dagger )^\top X_\perp^\dagger 
\end{align*}
where $X_\perp = (I - VV^\dagger )X$
\end{theorem}
It is easy to see that
$V: = B^*W^*(B^*W^*)^\top $
and we can also decompose 
$\left(\hat{\Delta}\hat{\Delta}^\top  + \hat{\Delta}(B^*W^*)^\top  + (B^*W^*)\hat{\Delta}^\top \right)$ into some $XX^\top $,

Therefore, we can write the above inequality as 
\begin{align*}
    - V^\dagger X(I+X^\top V^\dagger X)^{-1}X^\top V^\dagger  + ((X_\perp)^\dagger )^\top X_\perp^\dagger 
\end{align*}

Next we show that
$((X_\perp)^\dagger )^\top X_\perp^\dagger  B^*= 0$ and $(B^*)^\top ((X_\perp)^\dagger )^\top X_\perp^\dagger  = 0$. Let $U D V^\top $ be the singular value decomposition of $B^*W^*$.
So we have
\begin{align*}
    VV^\dagger 
    = UD^2U^\top  (UD^2U^\top )^\dagger 
    = UU^\top ,
\end{align*}
and therefore, 
% because $B^*w_m^*$ are contained in the column spaces as $B^*W^*$,
% \begin{align*}
%     (B^*w_m^*)^\top ((X_\perp)^\dagger )^\top X_\perp^\dagger 
%     & = (B^*w_m^*)^\top  ((U_\perp U_\perp^\top  X)^\dagger )^\top \\
%     & = (B^*w_m^*)^\top  (X^\dagger  U_\perp U_\perp^\top )^\top \\
%     & = (B^*w_m^*)^\top  U_\perp U_\perp^\top  (X^\dagger )^\top 
%     = 0.
% \end{align*}
because $B^*$ are contained in the column spaces as $B^*W^*$,
\begin{align*}
    X_\perp^\dagger B^*
    & = (U_\perp U_\perp^\top  X)^\dagger B^*\\
    & = X^\dagger  U_\perp U_\perp^\top B^*
    = 0.
\end{align*}
Therefore, we conclude that
\begin{align*}
    & \left\lvert M_1\left((\hat{B}_i\hat{W}_i(\hat{B}_i\hat{W}_i)^\top )^\dagger  - \left(B^*W^*(B^*W^*)^\top \right)^{+}\right) B^*M_2 \right\rvert\\
    & \leq \|M_1\|\|\underbrace{\left(B^*W^*(B^*W^*)^\top \right)^\dagger \left(\hDelta(\hDelta^i)^\top  + \hDelta(B^*W^*)^\top  + (B^*W^*)(\hDelta^i)^\top \right)(B^*W^*(B^*W^*)^\top )^\dagger }_{V^\dagger XX^\top V^\dagger }\|\|B^*M_2\|
\end{align*}
and so does the other equation.
\end{proof}

\vspace{30px}

\begin{lemma}
\label{lem: inverse diff 2}
    Given $\calE_\text{source}^i $, for any fixed $i$,
    % denote $\hat{\Delta} = \hat{B}_i\hat{W}_i - B^*W^*$. By choosing  
    % \begin{align*}
    %     \underline{N} = \order\left( \sigma^2\left(KM + Kd\log(  (\sum_m n_m)/M) + \log\frac{1}{\delta} \right)K^2M^2\right)\rev{modification},
    % \end{align*}
    we have
    \begin{align*}
        &\|  \left(B^*W^*(B^*W^*)^\top \right)^\dagger \left(\hDelta^i (\hDelta^i)^\top  + \hat{\Delta}(B^*W^*)^\top  + (B^*W^*)(\hDelta^i)^\top \right)(B^*W^*(B^*W^*)^\top )^\dagger \|_F\\
        &\leq 3\sigma\|(W^*)^\dagger \|_F\|(W^*(W^*)^\top )^\dagger \|_F
        % \min\{ \sqrt{\epsilon_i}/6KR,1/12M\sqrt{K}\}\\
        \underline{\sigma}^3\sqrt{\epsilon_i}/6KR\\
        % &\leq \sigma
        % \min\{ \sqrt{\epsilon_i}/2,\sqrt{K}/4M\}
        &\leq \sigma \sqrt{\epsilon_i}/2R
    \end{align*}
\end{lemma}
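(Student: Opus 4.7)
The plan is to start from the triangle inequality, splitting the expression into three Frobenius-norm terms
\[
\|(B^*W^*(B^*W^*)^\top)^\dagger \hDelta^i (\hDelta^i)^\top (B^*W^*(B^*W^*)^\top)^\dagger\|_F,
\]
\[
\|(B^*W^*(B^*W^*)^\top)^\dagger \hDelta^i (B^*W^*)^\top (B^*W^*(B^*W^*)^\top)^\dagger\|_F,
\]
and its transpose-type twin, and then bound each by pulling out operator norms and invoking Lemma~\ref{cor: diff of estimation BW}. For the purely quadratic term, submultiplicativity gives
\[
\|(B^*W^*(B^*W^*)^\top)^\dagger\|_{op}^2 \,\|\hDelta^i\|_F^2,
\]
and orthonormality of $B^*$ lets me replace the outer operator norm by $\|(W^*(W^*)^\top)^\dagger\|_{op}$. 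Together with the upper bound $\|\hDelta^i\|_F^2 \leq \sigma^2\epsilon_i/(4K^2R^2)$ from Lemma~\ref{cor: diff of estimation BW}, this term is of order $\sigma^2 \epsilon_i / K^2R^2$, which is of lower order than $\sigma\sqrt{\epsilon_i}$ and hence negligible.

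The cross terms are the ones I expect to dominate. The key algebraic identity is that for $A = B^*W^*$ with reduced SVD $A = U\Sigma V^\top$, one has $A^\top (AA^\top)^\dagger = A^\dagger$ and $(AA^\top)^\dagger A = (A^\top)^\dagger$. Applying this to the middle cross term gives
\[
\|(B^*W^*(B^*W^*)^\top)^\dagger \hDelta^i (B^*W^*)^\dagger\|_F \leq \|(W^*(W^*)^\top)^\dagger\|_{op}\,\|\hDelta^i\|_F\,\|(W^*)^\dagger\|_{op},
\]
where I again use orthonormality of $B^*$ to strip off the $B^*$-factors inside the norms. Plugging in $\|\hDelta^i\|_F \leq \sigma\sqrt{\epsilon_i}/(2KR)$ yields a total cross-term bound of roughly $3\sigma\|(W^*)^\dagger\|_F\|(W^*(W^*)^\top)^\dagger\|_F\,\sqrt{\epsilon_i}/(KR)$, which after tracking constants matches the intermediate bound in the lemma.

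For the second, cleaner inequality, I would invoke $\sigma_{\min}(W^*)\geq\underline{\sigma}$ to get $\|(W^*)^\dagger\|_F \leq \sqrt{K}/\underline{\sigma}$ and $\|(W^*(W^*)^\top)^\dagger\|_F \leq \sqrt{K}/\underline{\sigma}^2$, so their product equals $K/\underline{\sigma}^3$, which cancels the $\underline{\sigma}^3$ factor and the $K$ in the denominator, leaving exactly $\sigma\sqrt{\epsilon_i}/(2R)$.

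The main obstacle I anticipate is cleanly handling the pseudoinverse identities under rank deficiency: since $B^*W^*$ has rank $K \leq \min(d,M)$, one cannot use the textbook formula $A^\dagger = A^\top(AA^\top)^{-1}$ directly but must work with a reduced SVD to justify $A^\top(AA^\top)^\dagger = A^\dagger$. Beyond this bookkeeping step, the rest is routine norm manipulation and constant-tracking, and the guarantee on $\|\hDelta^i\|_F$ comes for free from Lemma~\ref{cor: diff of estimation BW} with the chosen value of $\beta$.
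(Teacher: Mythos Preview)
Your proposal is correct and follows essentially the same route as the paper: triangle inequality on the three pieces, simplification of the outer pseudoinverses via the identity $A^\top(AA^\top)^\dagger = A^\dagger$ together with orthonormality of $B^*$ to reduce everything to $\|(W^*)^\dagger\|$ and $\|(W^*(W^*)^\top)^\dagger\|$, and finally the plug-in of the $\|\hDelta^i\|_F$ bound from Lemma~\ref{cor: diff of estimation BW}. The only cosmetic difference is that the paper derives $\|(B^*W^*)^\top(B^*W^*(B^*W^*)^\top)^\dagger\|_F = \|(W^*)^\dagger\|_F$ through a longer chain of equalities rather than invoking the SVD-based pseudoinverse identity you cite, and it works in Frobenius norm throughout rather than mixing in operator norms.
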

\begin{proof}
The target term can be upper bounded by
\begin{align*}
    &\|  \left(B^*W^*(B^*W^*)^\top \right)^\dagger \hDelta^i (\hDelta^i)^\top (B^*W^*(B^*W^*)^\top )^\dagger \|_F \\
        &\quad + \|  \left(B^*W^*(B^*W^*)^\top \right)^\dagger B^*W^*(\hDelta^i)^\top (B^*W^*(B^*W^*)^\top )^\dagger \|_F\\
        &\quad + \|  \left(B^*W^*(B^*W^*)^\top \right)^\dagger \hDelta^i (B^*W^*)^\top (B^*W^*(B^*W^*)^\top )^\dagger \|_F
\end{align*}

Before we do the final bounding, we first show the upper bound of the following term, which will be used a lot,
% \begin{align*}
%     & \| (B^*W^*)^\top \left(B^*W^*(B^*W^*)^\top \right)^\dagger \|_F \\
%     & = \|  \left(B^*W^*(B^*W^*)^\top \right)^\dagger B^*W^* \|_F \\
%     & = \| (UDV^\top (UDV^\top )^\top )^\dagger  UDV^\top  \|_F\\
%     & = \|U(DD^\top )^\dagger DV^\top  \|_F\\
%     & = \|U(D^\dagger )^\top V^\top  \|_F\\
%     &= \|(W^*)^\dagger \|_F
% \end{align*}
% \begin{align*}
%     & \| (B^*W^*)^\top \left(B^*W^*(B^*W^*)^\top \right)^\dagger \|_F \\
%     & = \|  \left(B^*W^*(B^*W^*)^\top \right)^\dagger B^*W^* \|_F \\
%     & = \| (UDV^\top (UDV^\top )^\top )^\dagger  UDV^\top  \|_F\\
%     & = \|U(DD^\top )^\dagger DV^\top  \|_F\\
%     & = \|U(D^\dagger )^\top V^\top  \|_F\\
%     &= \|(W^*)^\dagger \|_F
% \end{align*}
% where $UDV^\top $ is the singular value decomposition of $B^*W^*$.

\begin{align*}
    \| (B^*W^*)^\top \left(B^*W^*(B^*W^*)^\top \right)^\dagger \|_F
    &=\|  \left(B^*W^*(B^*W^*)^\top \right)^\dagger B^*W^* \|_F \\
    & = \| (B^*W^*(W^*)^\top (B^*)^\top )^\dagger B^*W^* \|_F\\
    & = \| ((B^*)^\top )^\dagger (W^*(W^*)^\top )^\dagger (B^*)^\dagger B^*W^* \|_F\\
    & = \|B^* (W^*(W^*)^\top )^\dagger W^*\|_F\\
    &= \|(W^*(W^*)^\top )^\dagger W^*\|_F\\
    &= \|(W^*)^\top )^\dagger (W^*)^\dagger W^*\|_F\\
    & = \|((W^*)^\dagger W^*)^\top (W^*)^\dagger \|_F\\
     & = \|(W^*)^\dagger W^*(W^*)^\dagger \|_F\\
    & = \|((W^*)^\dagger \|_F
\end{align*}
Therefore, we can bound the whole term by
\begin{align*}
    \|(B^*W^*(B^*W^*)^\top )^\dagger \|_F^2\|\hDelta^i \|_F^2 
        + 2\|(W^*)^\dagger \|_F\|\hDelta^i \|_F\|(B^*W^*(B^*W^*)^\top )^\dagger \|_F
    \leq 3\|(W^*)^\dagger \|\|(W^*(W^*)^\top )^\dagger \|_F\|\hDelta^i \|_F
\end{align*}
% where the last inequality comes from the fact $\|(B^*W^*(B^*W^*)^\top )^\dagger \|_F \leq \sqrt{K}$, $\|(W^*)^\dagger \|\leq \sqrt{K}$. \rev{minimum eigen value guarantee of $W^*$}

Recall that we have $\|\hDelta^i \|_F^2 \leq \frac{\sigma^2 \underline{\sigma}^6 \epsilon_i}{36K^2R^2}$ given $\calE_\text{source}^i $
% \begin{align*}
%     & \|\Delta\|_F^2 \leq \frac{\sigma^2}{144M^2K},
%     & \|\Delta\|_F^2 \leq \frac{\sigma^2\epsilon_i}{36K^2}.
% \end{align*}
therefore, we get the final result.
\end{proof}

\newpage
\section{Experiment details}
\label{sec: experiment(app)}

\subsection{Other implementation details}
\label{subesc: implement details}
We choose $\beta_i = 1/\|\nu\|_2^2$, which is usually $\Theta(1)$ in practice. Instead of choosing $\epsilon_i = 2^i$, we set that as $1.5^{-i}$ and directly start from $i = 22$. It is easy to see that it turns out the actual sample number used in the experiment is similar as choosing $\beta = \text{poly}(d,K,M)$ and start from epoch $1$ as proposed in theorem. But our choice is more easy for us to adjust parameter and do comparison.

We run each experiment on each task only once due to the limited computational resources and admitted that it is better to repeat the experiment for more iterations. But since we have overall $160$ target tasks, so considering the randomness among tasks, we think it still gives meaningful result. 

Moreover, remember that we have lower bound $\beta \epsilon_i^{-1}$ in our proposed algorithm, In our experiment for linear model, we actually find that only using a constant small number like $50$ for each epoch is enough for getting meaningful result. While in convnet, considering the complexity of model, we still obey this law.

\subsection{More results and analysis for linear model}
\label{subesc: more on linear model}

\subsubsection{More comprehensive summary}
\begin{figure}[ht]
    \centering
    \includegraphics[scale=0.55]{plots/linear_result.pdf}
    \includegraphics[scale=0.55]{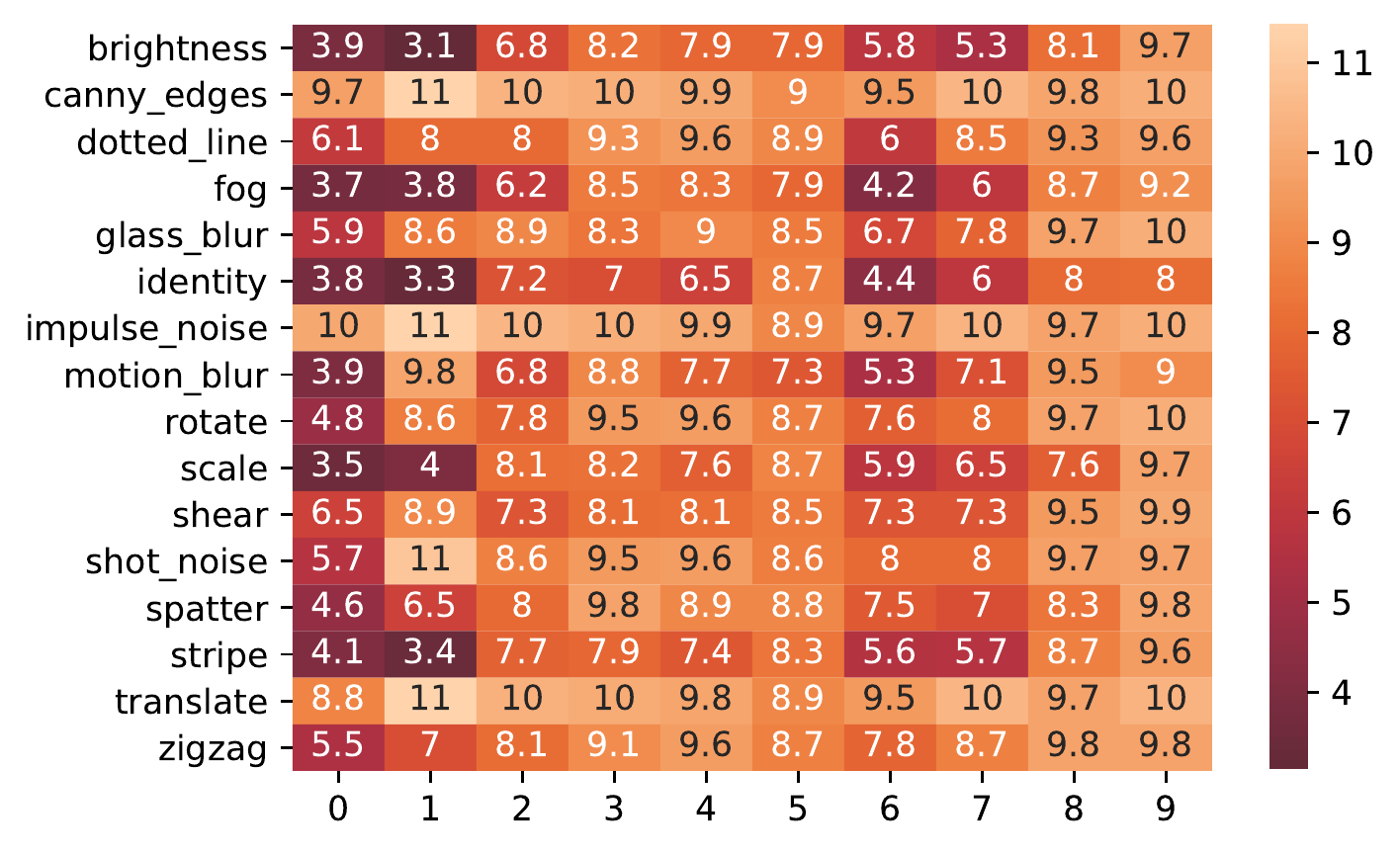}
    \caption{ \textbf{summary of performance difference for linear model (restated of figure~\ref{fig: linear summary}); \textbf{left:} The prediction difference (in \%) between ada and non-ada for all target tasks \textbf{right}: The incorrect percentage of non-adaptive algorithm.Note that $10\%$ is the baseline due to the in-balance dataset and the large the worse. Please refer to the main paper for further explanation},
    }
    \label{fig: linear summary (app)}
\end{figure}

\subsubsection{Why our algorithm fails on some target tasks ?}
\label{subsec: bad examples on linear (app)}
Here we focus on why our algorithm get bad performance on some tasks. Overall, other than the randomness, this is mainly due to the incompatibility between our theoretical assumption (realizable linear model) and the complicated data structure in reality. 

To be specific, there might a subset of source tasks that are informative about the target task under linear model assumptions, but other source tasks are far away from this model assumption. Due to the model misspecification, those misleading tasks may gain more sampling in the adaptive algorithm. We conjecture that this might be the case for target tasks like \textit{scale\_0} and \textit{scale\_2}. To further support our argument, we further analyze its sample number distribution and test error changing with increasing epoch in the next paragraph.

For the \textit{scale\_0} task, in Figure~\ref{fig: bad weight} we observe the non-stationary sample distribution changing across each epoch. But fortunately, the distribution is not extreme, there are still some significant sample from \textit{X\_0} source tasks. This aligns with the test error observation in Figure~\ref{fig: bad test error}, which is still gradually decreasing, although slower than the non-adaptive algorithm. On the other hand, the sample distributions are even worse. We observed that nearly all the samples concentrate towards \textit{X\_5} source tasks. Thus no only we can not sample enough informative data, but we also force the model to fit to some non-related data. Such mispecifcation has been reflected in the test error changing plot. (You may notice the unstable error changing in non-adaptive algorithm performance, we think it is acceptable randomness because we only run each target task once and the target task itself is not very easy to learn.)

\begin{figure}[ht]
    \centering
    \includegraphics[scale=0.22]{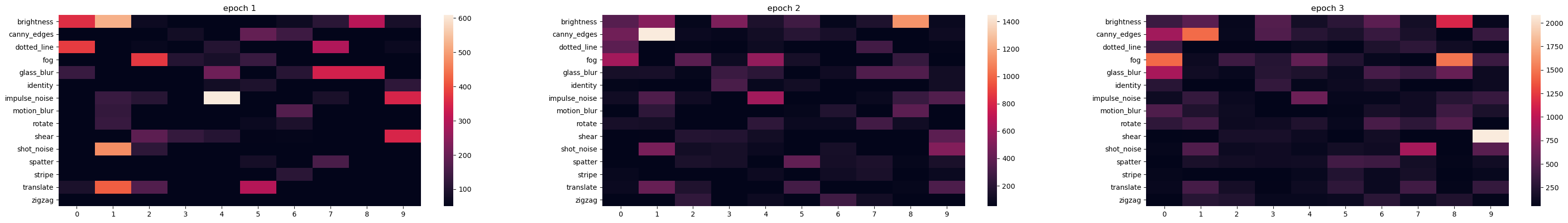}
    \includegraphics[scale=0.22]{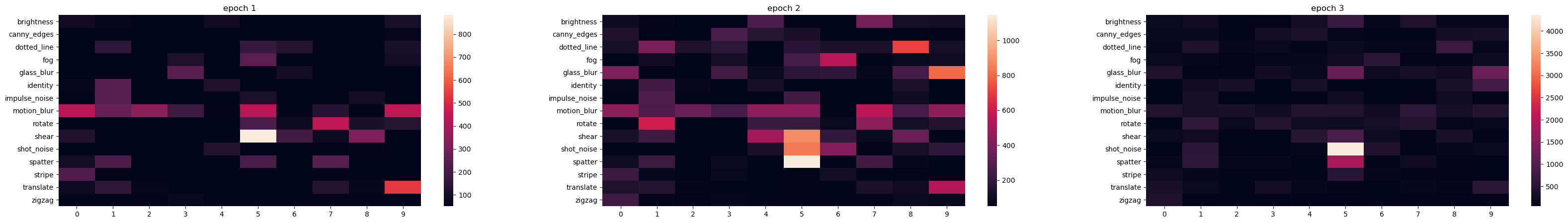}
    \caption{ \textbf{top: sample distribution for target task as \textit{scale\_0}, bottom: sample distribution for target task as \textit{scale\_2}}
    \small We show the sample distribution at each epoch 1,2,3.
    }
    \label{fig: bad weight}
\end{figure}

\begin{figure}[ht]
    \centering
    \includegraphics[scale=0.4]{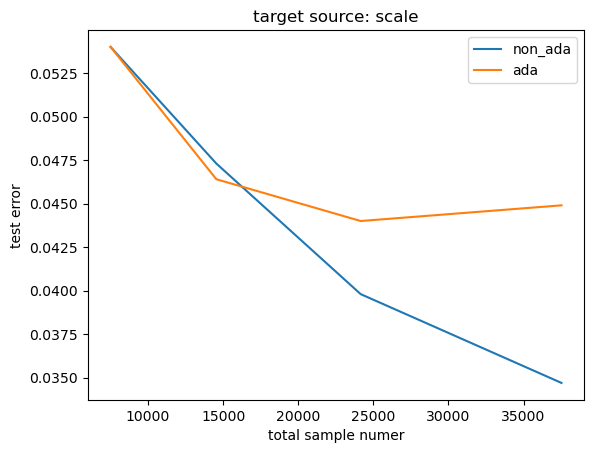}
    \includegraphics[scale=0.4]{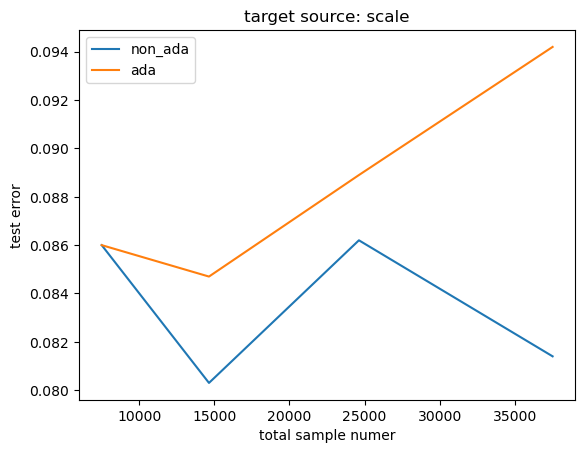}
    \caption{ \textbf{Test error change after each epoch for target task as \textit{scale\_0} and \textit{scale\_2}}
    }
    \label{fig: bad test error}
\end{figure}

\subsubsection{More good sample distribution examples}
Appendix~\ref{subsec: good example linear}
\begin{figure}[H]
    \centering
    \includegraphics[scale=0.22]{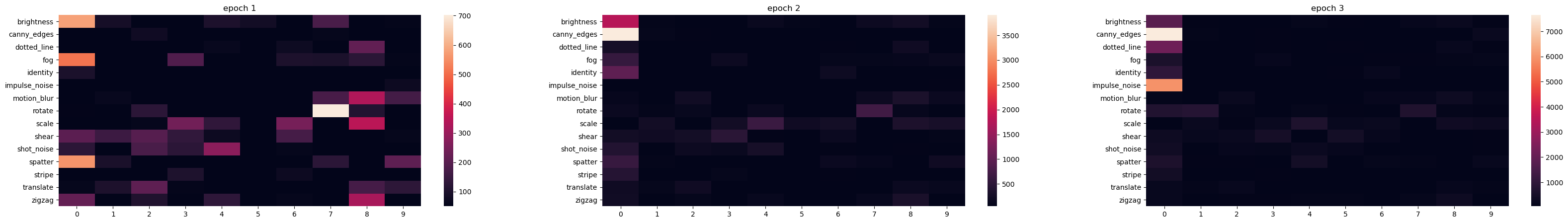}
    \includegraphics[scale=0.22]{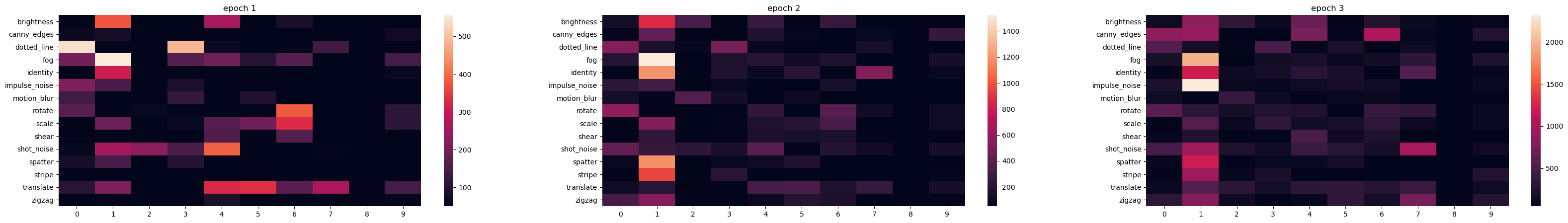}
    \includegraphics[scale=0.22]{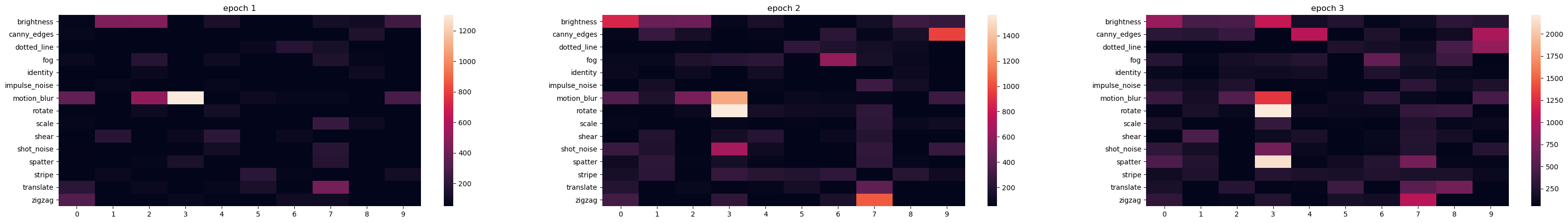}
    \includegraphics[scale=0.22]{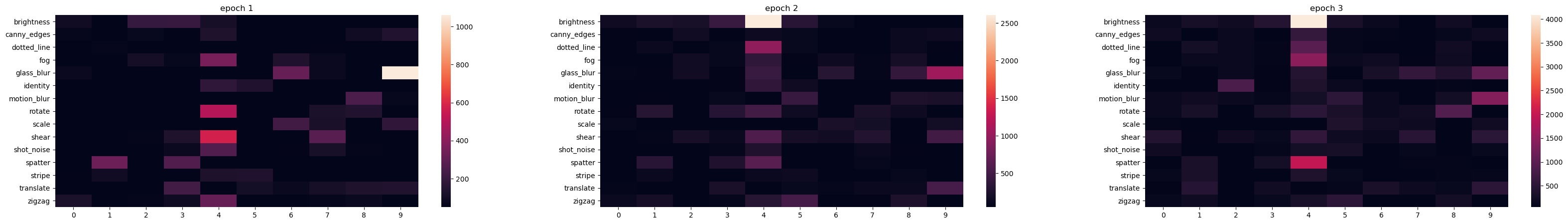}
    \includegraphics[scale=0.22]{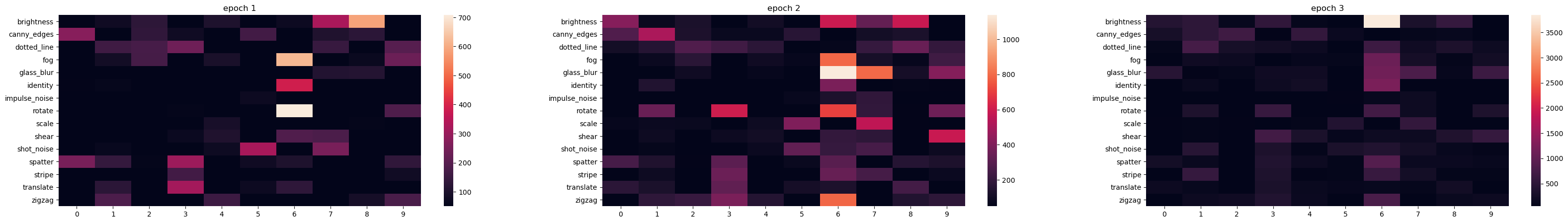}
    \includegraphics[scale=0.22]{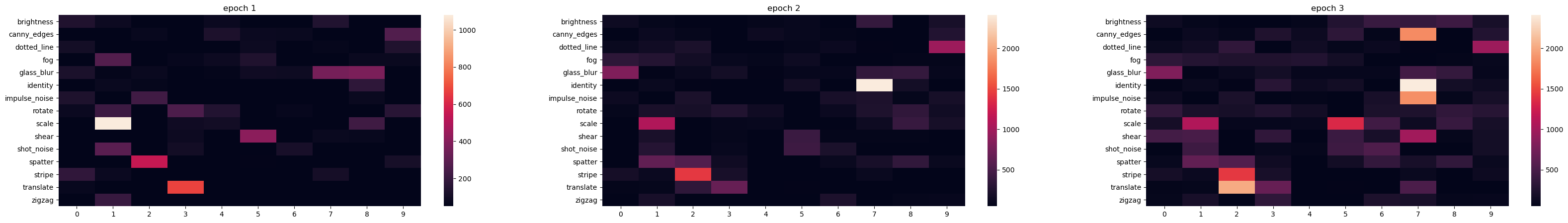}
    \includegraphics[scale=0.22]{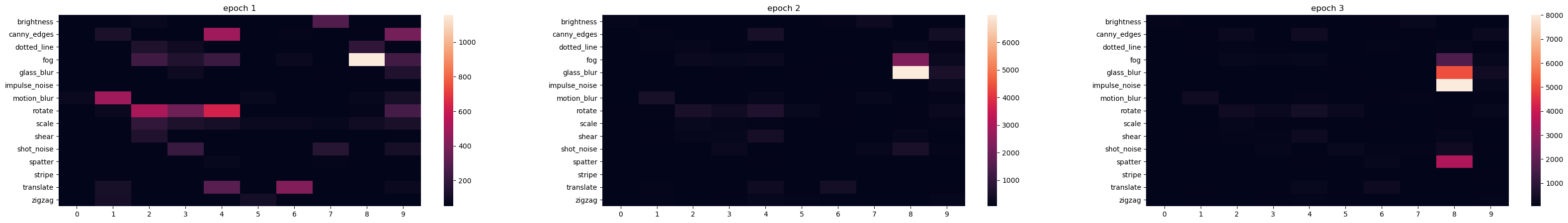}
    \includegraphics[scale=0.22]{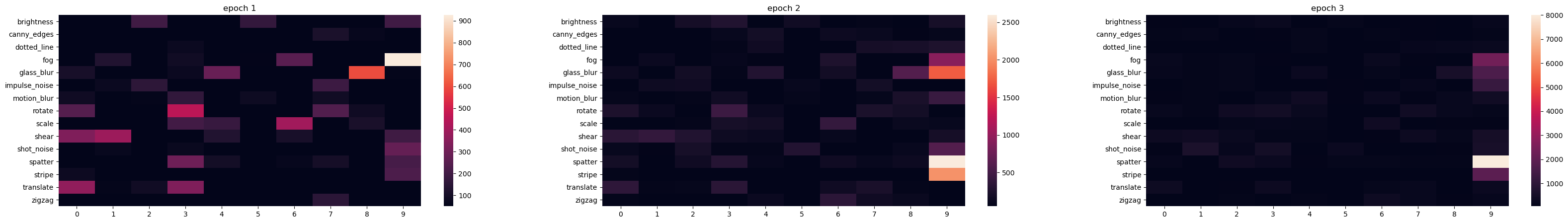}
    \caption{\textbf{Good sample distribution.}  We show the sample distribution at each epoch 1,2,3. \textbf{From top to bottom:} \textit{glass\_blur\_0},\textit{glass\_blur\_1},\textit{glass\_blur\_3},\textit{impulse\_noise\_4},\textit{motion\_blur\_6},\textit{motion\_blur\_7},\textit{identity\_8},\textit{identity\_9}}
\end{figure}

\newpage
\subsection{More results and analysis for convnet model}
\label{subsec: good example linear}
The convnet gives overall better accuracy than the linear model, except the translate class, as shown in Figure~\ref{fig: cnn summary (app)}. So we want to argue that it might be harder for us to get as large improvement as on linear model given the better expressive power on convnet.
\begin{figure}[ht]
    \centering
    \includegraphics[scale=0.55]{plots/cnn_result.pdf}
    \includegraphics[scale=0.55]{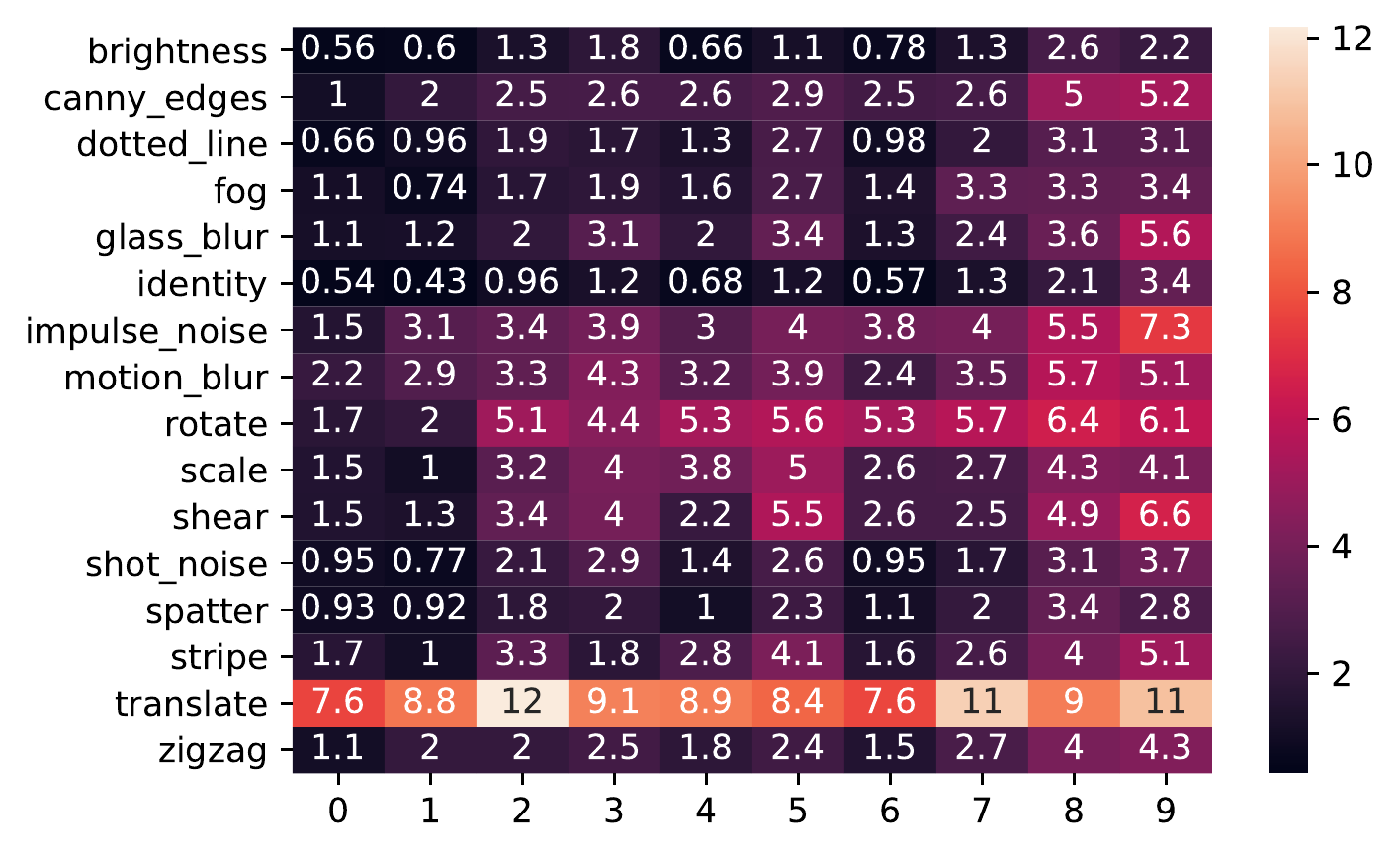}
    \caption{ \textbf{left: summary of performance difference for conv model (restated of figure~\ref{fig: cnn_summary}); right: the incorrect percentage of non-adaptive algorithm}
    \small On the right side we show the incorrect percentage of non-adaptive algorithm. Note that $10\%$ is the baseline due to the in-balance dataset and the large the worse. Please refer to the main paper for further explanation,
    }
    \label{fig: cnn summary (app)}
\end{figure}

\subsubsection{Why our algorithm fails on some target tasks ?}
\label{subsec: bad examples on cnn (app)}
Here we show \textit{scale\_5} and \textit{shear\_9} as the representative bad cases. With the similar idea of linear model, we again observe the non-stationary sample distribution changing across each epoch in Figure~\ref{fig: bad weight}. For \textit{scale\_5}, we observe that at the beginning of epoch 1, the sample fortunately converges to \textit{X\_5} source tasks, therefore our adaptive algorithm initially performs better than the non-adaptive one as shown in Figure~\ref{fig: bad test error}. Unfortunately, the sample soon diverges to other source tasks, which more test error. For \textit{shear\_9}, although there are some samples concentrate on \textit{X\_9} source tasks, overall, the number of samples on \textit{X\_9} source tasks has a decrease proportion of total number of source sample. So the algorithm has a worse performance on this.

\begin{figure}[ht]
    \centering
    \includegraphics[scale=0.22]{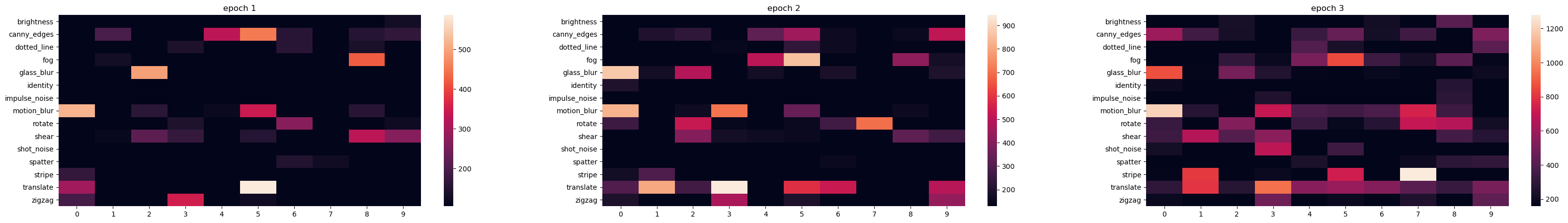}
    \includegraphics[scale=0.22]{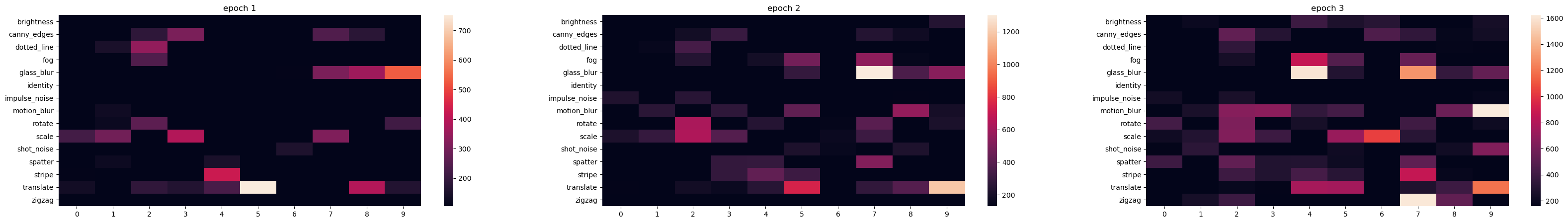}
    \caption{ \textbf{top: sample distribution for target task as \textit{scale\_5}, bottom: sample distribution for target task as shear\_9}
    \small We show the sample distribution at each epoch 1,2,3.
    }
    \label{fig: bad weight}
\end{figure}

\begin{figure}[H]
    \centering
    \includegraphics[scale=0.4]{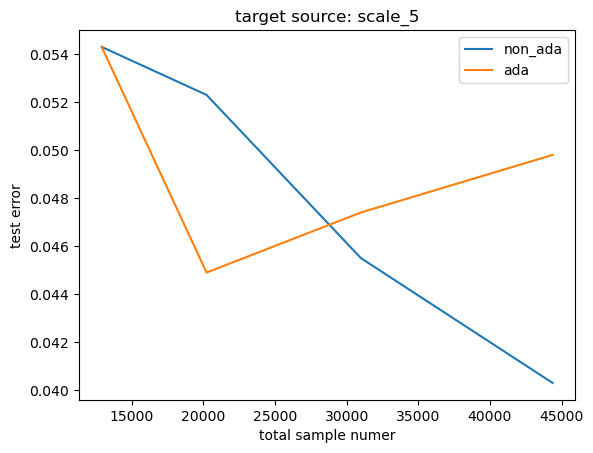}
    \includegraphics[scale=0.4]{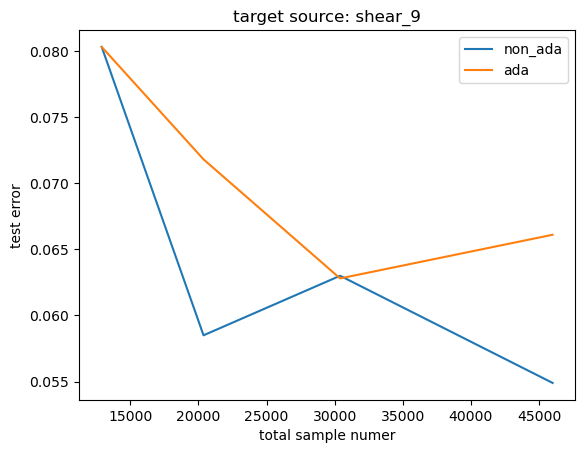}
    \caption{ \textbf{Test error change after each epoch for target task as \textit{scale\_5} and \textit{shear\_9}}
    }
    \label{fig: bad test error}
\end{figure}

\subsubsection{More good sample distribution examples}
\label{subsec: good example cnn}
\begin{figure}[]
    \centering
    \includegraphics[scale=0.22]{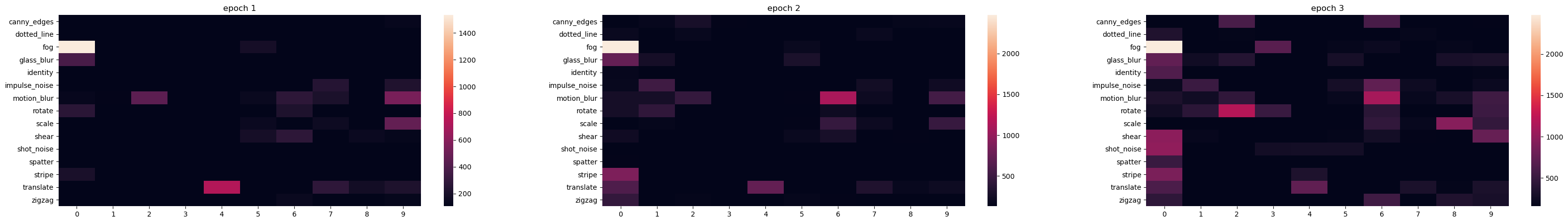}
    \includegraphics[scale=0.22]{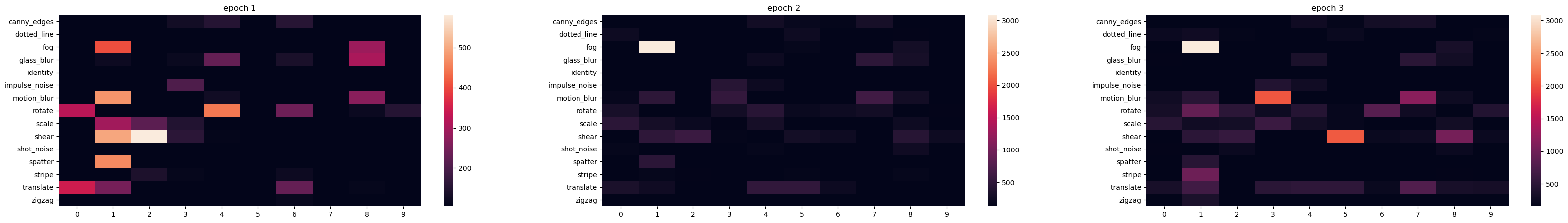}
    \includegraphics[scale=0.22]{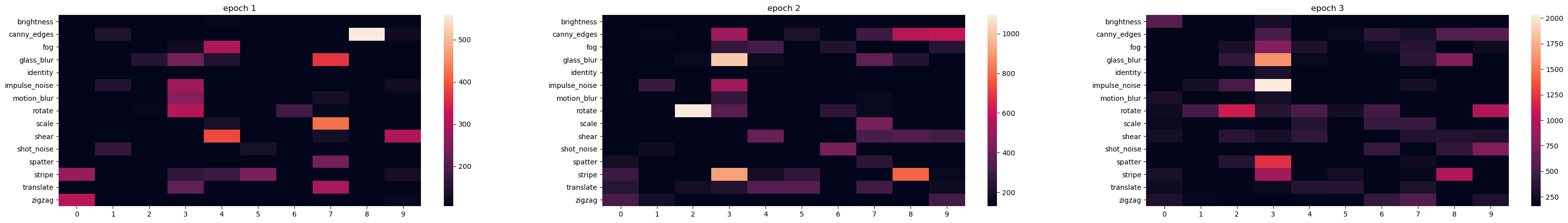}
    \includegraphics[scale=0.22]{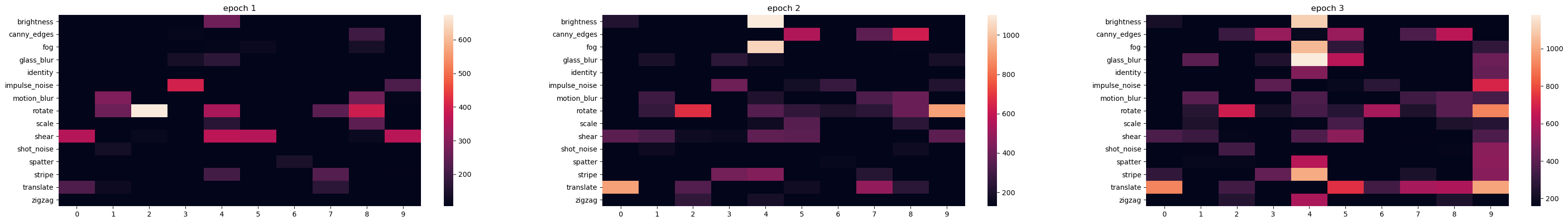}
    \includegraphics[scale=0.22]{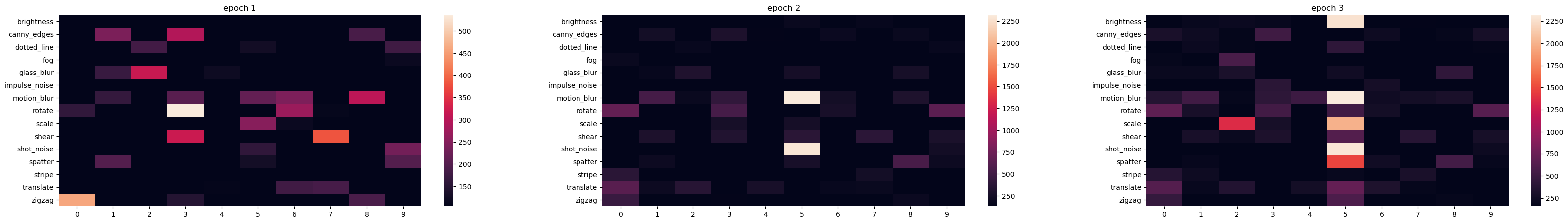}
    \includegraphics[scale=0.22]{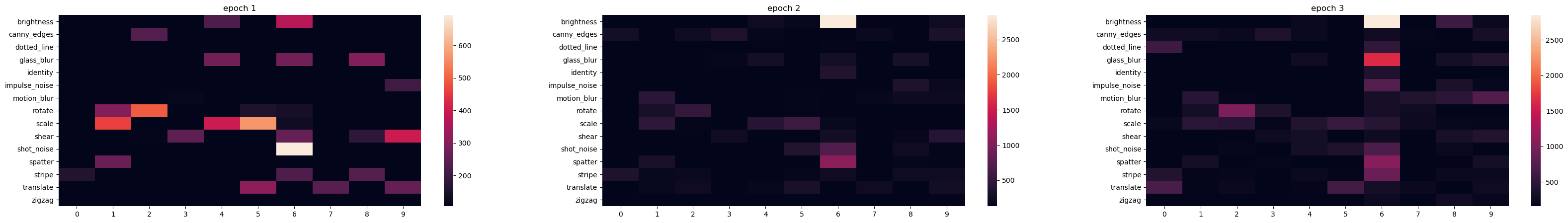}
    \includegraphics[scale=0.22]{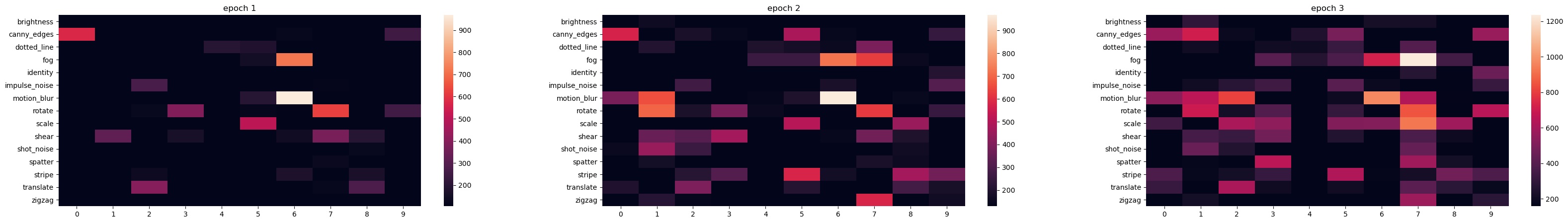}
    \includegraphics[scale=0.22]{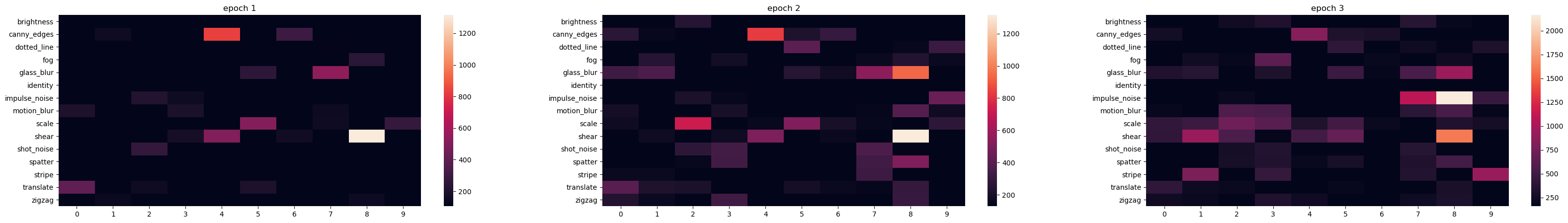}
    \includegraphics[scale=0.22]{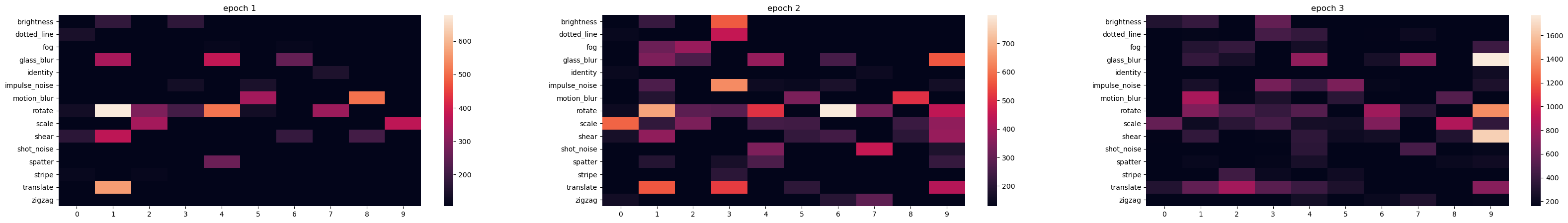}
    \caption{\textbf{Good sample distribution.}  We show the sample distribution at each epoch 1,2,3. \textbf{From top to bottom:} \textit{brightness\_0},\textit{brightness\_1},\textit{dotted\_line\_3},\textit{dotted\_line\_4},\textit{identity\_5}, \textit{fog\_6},\textit{glass\_blur\_7},\textit{rotate\_8},\textit{canny\_edges\_9}}
\end{figure}

%%%%%%%%%%%%%%%%%%%%%%%%%%%%%%%%%%%%%%%%%%%%%%%%%%%%%%%%%%%%%%%%%%%%%%%%%%%%%%%
%%%%%%%%%%%%%%%%%%%%%%%%%%%%%%%%%%%%%%%%%%%%%%%%%%%%%%%%%%%%%%%%%%%%%%%%%%%%%%%

\end{document}